\undefined\usepackage{chngcntr}\fi
\newcommandx{\unsure}[2][1=]{\todo[linecolor=red,backgroundcolor=red!25,bordercolor=red,#1]{#2}}
\newcommandx{\change}[2][1=]{\todo[linecolor=blue,backgroundcolor=blue!25,bordercolor=blue,#1]{#2}}
\newcommandx{\info}[2][1=]{\todo[linecolor=OliveGreen,backgroundcolor=OliveGreen!25,bordercolor=OliveGreen,#1]{#2}}
\newcommandx{\improvement}[2][1=]{\todo[linecolor=Plum,backgroundcolor=Plum!25,bordercolor=Plum,#1]{#2}}
\begin{document}
\date{}

\title{Online Statistical Inference of Constrained Stochastic Optimization via Random Scaling}

\author[1]{Xinchen Du}
\author[2]{Wanrong Zhu}
\author[1]{Wei Biao Wu}
\author[3]{Sen Na}

\affil[1]{Department of Statistics, The University of Chicago}
\affil[2]{Department of Statistics, University of California, Irvine}
\affil[3]{School of Industrial and Systems Engineering, Georgia Institute of Technology}

\maketitle

\begin{abstract}

Constrained stochastic nonlinear optimization problems have attracted significant attention for their ability to model complex real-world scenarios in physics, economics, and biology. As datasets continue to grow, online inference methods have become crucial for enabling real-time decision-making without the need to store historical data. 
In this work, we develop an online inference procedure for constrained stochastic optimization by leveraging a method called Adaptive Inexact Stochastic Sequential Quadratic Programming (AI-SSQP). As a generalization of (sketched) Newton methods to constrained problems, AI-SSQP approximates the objective with a quadratic model and the constraints with a linear model at each step, then applies a randomized sketching solver to inexactly solve the resulting subproblem, along with an adaptive random stepsize to update the primal-dual iterates.
Building on this design, we first establish the asymptotic normality guarantee of \textit{averaged} AI-SSQP and observe that the averaged iterates exhibit better statistical efficiency than the last iterates, in terms of a smaller limiting covariance matrix. Furthermore, instead of estimating the limiting covariance matrix directly,~we~study~a~new~\mbox{online}~\mbox{inference}~\mbox{procedure}~called~\textit{\mbox{random}~\mbox{scaling}}. 
Specifically, we construct a test statistic by appropriately rescaling the averaged iterates, such~that~the limiting distribution of the test statistic is free of any unknown parameters.~Compared to \mbox{existing}~\mbox{online} inference procedures, our approach offers two key advantages: (i) it enables the construction~of~\textit{asymptotically valid} and \textit{statistically efficient} confidence intervals, while existing procedures based on the last iterates are less efficient and rely on a plug-in covariance estimator that is \mbox{inconsistent};~and~(ii) it is \textit{matrix-free}, i.e., the computation involves only primal-dual iterates themselves without any~matrix inversions, making its computational cost match that of advanced first-order methods for~unconstrained problems.~We validate our theoretical findings through numerical experiments~on~nonlinearly constrained regression problems and demonstrate the superior performance of random scaling over existing inference procedures.

\end{abstract}

\section{Introduction}
\label{sec:1}

We consider the following constrained stochastic optimization problem:
\begin{equation}\label{pro:1}
\min_{\bx\in \mR^d}\;\;  f(\bx) =  \mE_{\xi \sim \P}[F(\bx; \xi)], \quad  \text{s.t.} \;\;  c(\bx) = \0,
\end{equation}
where $f:\mR^d\rightarrow\mR$ is the  objective function, $F(\bx, \xi)$ is a noisy measurement of $f(\bx)$ with randomness $\xi$ following distribution $\P$, and $c:\mR^d\rightarrow \mR^m$ denotes the equality constraints. 
Problem \eqref{pro:1} arises in various applications across science, economics, and engineering, such as portfolio allocation \citep{Fan2012Vast, Du2022High}, computer vision \citep{Roy2018Geometry},~\mbox{PDE-constrained}~\mbox{optimization} \citep{Rees2010Optimal, Kouri2014Inexact}, constrained deep neural networks \citep{Chen2018Constraint},~\mbox{physics-inspired}~\mbox{neural} networks \citep{Cuomo2022Scientific}, natural language processing~\citep{Nandwani2019primal}, and constrained maximum likelihood estimation  \citep{Geyer1991Constrained, Chatterjee2016Constrained}.~In~practice,~Problem \eqref{pro:1} can be interpreted as a constrained model parameter estimation problem, where  $\xi$ denotes the data sample, $f(\bx)$ denotes the expected loss, and the (primal) solution $\bx^{\star} = \argmin_{c(\bx) = \boldsymbol{0}} f(\bx)$ denotes the true model parameter. Let $\mL(\bx, \blambda) = f(\bx) + \blambda^{\top}c(\bx)$ be the Lagrangian function, where $\blambda\in\mR^m$ is the dual vector associated with equality constraints; and let $\blambda^{\star}$ denote the dual solution. In this paper, our goal is to conduct online statistical inference for the primal-dual solution $(\bx^{\star}, \blambda^{\star})$.

Using the classic offline method via sample average approximation (SAA), one can find~the~\mbox{solution} to the following  problem:
\begin{equation}\label{pro:2}
\min_{\bx\in \mR^d}\;\; \hat{f}_t(\bx) = \frac{1}{t}\sum_{i=1}^{t}F(\bx;\xi_i),\quad \text{s.t.} \;\;  c(\bx) = \0,
\end{equation} 
where $\xi_1, \xi_2, \cdots, \xi_t$ are observed i.i.d data from $\mathcal{P}$. It is well known that the minimizer $(\hat{\bx}_t, \hat{\blambda}_t)$ of~\eqref{pro:2}, also called \textit{constrained $M$-estimator}, exhibits $\sqrt{t}$-consistency and asymptotic normality, allowing us to perform statistical inference. In particular, under proper regularity conditions,~we~have~that~as~$t\rightarrow\infty$ \citep[Chapter 5]{Shapiro2014Lectures},
\begin{equation}\label{nnequ:1}
\sqrt{t} \cdot ( \hat{\bx}_t - \tx, \hat{\blambda}_t - \tlambda )\stackrel{d}{\longrightarrow}\mN\rbr{\0,\; \Xi^{\star}},
\end{equation}
where the limiting covariance is given by $\Xi^\star = (\nabla^2\mL(\tx,\tlambda))^{-1}\text{cov}(\nabla\mL(\tx,\tlambda;\xi))(\nabla^2\mL(\tx,\tlambda))^{-1}$ (see Section \ref{sec:3.2} for details). 
However, offline methods often have to deal with large batches of samples, leading to significant computational and memory cost. In recent years, a growing body of literature has focused on developing and analyzing \textit{online} methods for solving the above constrained stochastic optimization problems and performing the underlying inference~tasks.~One~\mbox{simple}~\mbox{online}~\mbox{method} is projection-based Stochastic Gradient Descent (SGD). \cite{Davis2019Stochastic} proved almost-sure convergence for projection-based SGD, while \cite{Duchi2021Asymptotic} and \cite{Davis2024Asymptotic} established asymptotic normality guarantees for its iterates.
However, the projection operator, which plays a key role in these methods, can be computationally expensive in practice, especially for nonlinear equality constraints in \eqref{pro:1}.
To circumvent this difficulty, researchers have developed alternative constrained optimization methods, including penalty methods, barrier methods,~augmented Lagrangian methods, and sequential quadratic programming methods \citep{Bertsekas1997Nonlinear, Bertsekas2014Constrained, Nocedal2006Numerical}.\;\;

More recently, various stochastic sequential quadratic programming (SSQP) methods, which generalize stochastic Newton methods for unconstrained problems, have been proposed to efficiently solve constrained problems in an online fashion; see \cite{Berahas2021Sequential,Berahas2024Stochastic, Na2022adaptive, Na2023Inequality, Fang2024Fully, Curtis2024Stochastic} and references therein.  
For the purpose of statistical~inference, \cite{Na2025Statistical} considered an Adaptive Inexact SSQP (AI-SSQP) scheme, where at~each step the method employs an iterative sketching solver to inexactly solve the SQP quadratic subproblem and selects a suitable random stepsize (inspired by stochastic line search) to update the primal-dual iterate $(\bx_t, \blambda_t)$. The authors demonstrated that AI-SSQP iterates exhibit the following asymptotic normality property (see Section \ref{sec:3.2} for details) \citep[Theorem 5.6]{Na2025Statistical}:
\begin{equation}\label{equ:CLT_last_iterate}
\sqrt{1/\bar{\alpha}_t} \cdot (\boldsymbol{x}_t - \boldsymbol{x}^{\star}, \boldsymbol{\lambda}_t - \boldsymbol{\lambda}^{\star}) \stackrel{d}{\longrightarrow} \mathcal{N}(0, \tilde{\Xi}^{\star}),
\end{equation}
where $\baralpha_t$ is the random stepsize and the limiting covariance $\Tilde{\Xi}^{\star}$ (given by \eqref{equ:Lyapunov}) is determined by the underlying sketching distribution.~AI-SSQP methods offer advantages over \mbox{projection-based} methods~by linearizing nonlinear constraints, thereby eliminating the need for costly projection operators. Additionally, the sketching solver introduces a trade-off between computational and statistical efficiency. In particular, when the sketching solver is suppressed and the quadratic subproblem is solved exactly, the AI-SSQP iterates achieve primal-dual \textit{asymptotic minimax efficiency}, with the limiting covariance $\Tilde{\Xi}^{\star}$ reducing to the optimal covariance $\Xi^\star$ achieved by the constrained $M$-estimator $(\hat{\bx}_t, \hat{\blambda}_t)$~in~\eqref{nnequ:1}.~The primal component of this covariance also matches~that~\mbox{obtained}~by~\mbox{projection-based} methods~in~\cite{Duchi2021Asymptotic, Davis2024Asymptotic}.

To facilitate online inference via asymptotic normality, a suitable covariance matrix estimator is typically required. \cite{Jiang2025Online} analyzed a \textit{batch-means} covariance estimator originally developed for SGD methods \citep{Chen2020Statistical, Zhu2021Online}, and demonstrated that the same estimator applies to projected SGD methods with the same convergence rate.~\mbox{However},~due~to~the~nonlinearity and nonconvexity of the problem, the convergence analysis of the batch-means \mbox{estimator}~was~conducted only under light-tailed gradient noise, a stronger condition than that required for asymptotic normality of projected SGD. \cite{Kuang2025Online} designed a \textit{batch-free} covariance \mbox{estimator}~for~stochastic Newton methods and established an improved convergence rate under strong convexity. Their analysis also does not apply to nonlinear problem \eqref{pro:1}.~In fact, \cite{Na2025Statistical} introduced a \textit{plug-in} covariance estimator for $\tilde{\Xi}^\star$ in \eqref{equ:CLT_last_iterate} within the AI-SSQP framework. However, due to the difficulty of estimating the sketching components of $\tilde{\Xi}^\star$, their estimator neglects all such components and instead estimates $\Xi^\star$. This leads to two main limitations.
First, the resulting covariance \mbox{estimator}~is~generally \mbox{inconsistent} due to the approximation error incurred~by~the~\mbox{sketching}~solver~\citep[Theorem 5.10]{Na2025Statistical}. It is consistent only when the sketching solver is suppressed; and the~bias~\mbox{significantly} undermines the validity of statistical inference. Second, the method requires inverting the estimated Hessian matrix, which incurs a computational cost of $O((d+m)^3)$. This~cost~often exceeds that~of~solving the Newton subproblems themselves, contradicting the spirit of utilizing \mbox{sketching}~solvers.$\quad\quad\quad$

To overcome the above challenges, we leverage a technique called \textit{random scaling} to bypass the need for direct covariance matrix estimation. Specifically, instead of focusing on the asymptotic~normality of the \textit{last} iterate of AI-SSQP, we consider the averaged iterate $(\bar{\bx}_t, \bar{\blambda}_t) \coloneqq \frac{1}{t} \sum_{i = 0}^{t-1} (\bx_i, \blambda_i)$~and establish its asymptotic normality:
\begin{equation*}
\sqrt{t} \cdot (\bar{\bx}_t - \bx^{\star}, \bar{\blambda}_t - \blambda^{\star}) \stackrel{d}{\longrightarrow} \mathcal{N}(0, \barXi^{\star}),
\end{equation*}
where the limiting covariance matrix $\barXi^{\star}$ differs from $\tilde{\Xi}^\star$ of the last iterate in \eqref{equ:CLT_last_iterate}. Similar to $\tilde{\Xi}^\star$,~when the sketching solver is degraded, $\barXi^{\star}$ reduces to the minimax optimal covariance $\Xi^\star$. However,~when~the sketching solver is deployed, we show that the averaged iterate enjoys better statistical efficiency~with $\barXi^\star \preceq \tilde{\Xi}^\star$. 
Furthermore, to perform statistical inference, instead of~estimating~$\barXi^{\star}$, we exploit a~\textit{random scaling} matrix that can be updated on-the-fly:
\begin{equation*}
V_t = \frac{1}{t^2}\sum_{i = 1}^t i^2 \begin{pmatrix}
\bar{\bx}_i - \bar{\bx}_t \\
\bar{\blambda}_i - \bar{\blambda}_t
\end{pmatrix}\begin{pmatrix}
\bar{\bx}_i - \bar{\bx}_t \\
\bar{\blambda}_i - \bar{\blambda}_t
\end{pmatrix}^{\top},
\end{equation*}
and studentize $(\barx_t, \barlambda_t)$ using $V_t$. In particular, we prove that the resulting test statistic~is~\textit{asymptotically pivotal}: for any $\boldsymbol{w} \in \mR^{d+m}$,
\begin{equation*}
\frac{\sqrt{t} \cdot \boldsymbol{w}^{\top} (\bar{\bx}_t - \bx^{\star}, \bar{\blambda}_t - \blambda^{\star})}{\sqrt{\boldsymbol{w}^{\top} V_t \boldsymbol{w}}} \stackrel{d}{\longrightarrow} \frac{W_1(1)}{\sqrt{\int_{0}^1  \left(W_1(r) - rW_1(1) \right)^2 dr}}
\end{equation*}
where $W_1(\cdot)$ denotes the standard one-dimensional Brownian motion.~Our pivotal test statistic~offers four key advantages. 

\noindent \textbf{(i)} Since both the averaged iterate $(\bar{\bx}_t, \bar{\blambda}_t)$ and the random scaling matrix $V_t$ can be updated~recursively, our inference method is well-suited for online computation. 

\vskip3pt
\noindent \textbf{(ii)} Our method enables the construction of asymptotically valid confidence intervals for the true parameters $(\bx^{\star}, \blambda^{\star})$, resolving the inconsistency of the plug-in estimator in \cite{Na2025Statistical}. 

\vskip3pt
\noindent \textbf{(iii)} Our test statistic based on second-order methods is matrix-free (i.e., no matrix inversion).~Therefore, the memory and computational complexities of our inference procedure leveraging second-order methods match those of first-order methods, i.e., $O((d+m)^2)$. 

\vskip3pt
\noindent \textbf{(iv)} Our inference method is applicable to a broad class of constrained nonlinear problems. In~contrast, existing inference literature based on SGD or stochastic Newton methods focuses on unconstrained strongly convex problems \citep{Chen2020Statistical, Zhu2021Online, Wei2023Weighted, Chen2024Online, Kuang2025Online} (a detailed literature review is deferred later), while \mbox{projection-based}~methods such as \cite{Davis2024Asymptotic, Jiang2025Online} rely on sub-Gaussian gradient noise~assumptions.$\quad\;$

We demonstrate the promising empirical performance of our random scaling method through nonlinearly constrained regression problems.

\vspace{0.2cm}

\noindent \textbf{Related work on unconstrained online inference.}
We present a brief overview of the unconstrained online statistical inference problem, which serves as the foundation for our study~of~the~constrained inference problem.

Early works established almost-sure convergence of SGD under various assumptions \citep{Robbins1951stochastic, Kiefer1952Stochastic, Robbins1971convergence}. Later, \cite{Polyak1992Acceleration} proved the asymptotic normality of averaged SGD iterates, with the limiting~covariance matrix given by $\Sigma^{\star} = (\nabla^2 f(\bx^\star))^{-1}\text{cov}(\nabla F(\bx^{\star}; \xi))(\nabla^2 f(\bx^\star))^{-1}$.~Since $\Sigma^{\star}$ is generally unknown,~recent work by \cite{Fang2018Online} developed a bootstrap procedure to construct~\mbox{confidence}~\mbox{intervals}~by~approximating the limiting distribution using empirical bootstrap samples. To avoid high computational cost of bootstrap, later studies focused on covariance matrix estimation. 
\cite{Chen2020Statistical} designed an~online plug-in estimator for $\Sigma^{\star}$ by separately estimating $(\nabla^2 f(\bx^\star))^{-1}$ and $\text{cov}(\nabla F(\bx^{\star}; \xi))$~\mbox{using}~sample averages. 
Although this estimator performs well, it requires computing the Hessian inverse, which~is intrusive and unnecessary for SGD updates themselves.
As an alternative, \cite{Zhu2021Online} proposed an online batch-means covariance estimator using only SGD iterates. While this approach is more computationally efficient, it suffers from a slow convergence rate. In practice, estimating the full covariance matrix can be difficult and is often unnecessary, especially when the primary goal is to construct confidence intervals.
Consequently, recent work has focused on developing asymptotically pivotal statistics for online inference in SGD, thereby avoiding the need for accurate covariance matrix estimation. One such approach is the random scaling method \citep{Lee2022Fast},~which~self-normalizes the estimation error of averaged SGD iterates using a random scaling matrix. This~technique has demonstrated promising performance across a wide range of optimization algorithms, including ROOT-SGD \citep{Luo2022Covariance}, stochastic approximation under Markovian data \citep{Li2023Online}, weighted-averaged SGD \citep{Wei2023Weighted}, and the Kiefer–Wolfowitz algorithm \citep{Chen2024Online}. We extend the above literature by applying random scaling technique to AI-SSQP methods for constrained inference problems.

Another line of related literature focuses on online inference using second-order methods. \cite{Leluc2023Asymptotic} established asymptotic normality for the last iterate of conditioned~SGD~and~observed that the limiting covariance achieves optimality when the conditioning matrix is the Hessian. \cite{Bercu2020Efficient} designed an online Newton method for logistic regression, and \cite{Boyer2023asymptotic, Cenac2025efficient} extended that approach to more general \mbox{regression}~problems. Although the aforementioned works provided asymptotic normality guarantees comparable to those for first-order methods, they do not offer a clear approach for performing online inference based~on the normality property. 
Moreover, these methods are restricted to regression and/or convex problems, making them inapplicable to the nonlinear problems considered in our work.~For~example,~the Hessian inverse in regression problems often exhibits a nice structure as a sum of rank-one matrices, enabling efficient Hessian inverse updates via the Sherman–Morrison formula. However, this~computational advantage, relying on specific problem structures, does not extend to constrained regression problems. 
To our knowledge, this paper presents the first asymptotically valid inference~procedure~for nonlinear and nonconvex problems using second-order methods, with a computational cost matching that of state-of-the-art first-order methods.

\subsection{Organization and notation}\label{sec:1.3}

We begin by reviewing the AI-SSQP method in Section \ref{sec:2}. Section \ref{sec:3} presents the assumptions~and establishes the asymptotic normality of the averaged AI-SSQP iterates. In Section \ref{sec:4}, we introduce the pivotal test statistic and develop its theoretical properties. Numerical experiments and conclusions are provided in Sections \ref{sec:5} and \ref{sec:6}, respectively. All proofs are deferred to the appendices.

We let $\|\cdot\|$ denote the $\ell_2$ norm for vectors and spectral norm for matrices, $\|\cdot\|_{F}$ denote~the~Frobenius norm for matrices, and $\text{Tr}(\cdot)$ denote the trace of a matrix. For two sequences $\{a_t, b_t\}$, $a_t = O(b_t)$ (also written as $a_t \lesssim b_t$) implies that $|a_t| \leq c |b_t|$ for sufficient large $t$, where $c$ is~a~positive~constant. Similarly, $a_t = o(b_t)$ implies that $|a_t/b_t| \to 0$ as $t \to \infty$.~We let $I$ \mbox{denote} the~\mbox{identity}~matrix, $\boldsymbol{0}$ denote the zero vector or matrix, and $\boldsymbol{e}_i$ denote the vector with 1 at the $i$-th entry and 0 elsewhere. We let $\mathbf{1}_{\{\cdot\}}$ denote the indicator function and $\lfloor \cdot \rfloor$ denote~the~floor~\mbox{function}, which rounds~down~to the nearest integer. For a sequence of \mbox{compatible}~\mbox{matrices} $\{A_i\}_i$, $\prod_{k=i}^{j}A_k = A_jA_{j-1}\cdots A_i$~when $j\geq i$ and $I$ when $j<i$. For two matrices $A$ and $B$, $A \succeq B$ means that $A - B$ is positive~semidefinite. 
We reserve the notation $G(\bx)$ to denote the constraints Jacobian, that is,~$G(\bx) = \nabla c(\bx) = (\nabla c_1(\bx),\ldots,\nabla c_m(\bx))^{\top}\in \mR^{m\times d}$. Let $\mL(\bx,\blambda) = f(\bx) + \blambda^{\top}c(\bx)$ denote the Lagrangian function of \eqref{pro:1}, where $\blambda\in\mR^m$ is the dual vector associated with equality constraints. For simplicity, we let  $f_t = f(\bx_t)$, $c_t = c(\bx_t)$, $\nabla\mL_t = \nabla\mL(\bx_t,\blambda_t)$ (similar for $\nabla f_t$, $G_t$ etc.); and $f^\star = f(\bx^\star)$, $c^\star = c(\bx^\star)$, $\nabla\mL^\star = \nabla\mL(\tx,\tlambda)$ (similar for $\nabla f^\star$, $G^\star$ etc.).
The bar notation~$\bar{(\cdot)}$ denotes random quantities at each step, except for the averaged iterate $(\bar{\bx}_t, \bar{\blambda}_t) = \sum_{i = 0}^{t-1} (\bx_i, \blambda_i)/t$. For example, $\bnabla_{\bx}\mL_t$ is the random estimate of $\nabla_{\bx}\mL$ at the step $t$.

\section{Adaptive Inexact Stochastic Sequential Quadratic Programming}\label{sec:2}

In this section, we review the AI-SSQP method for solving Problem \eqref{pro:1}. For constrained \mbox{problems},~it~is known that under certain constraint qualifications (cf. Assumption \ref{ass:1}), the first-order necessary condition for $(\tx, \tlambda)$ to be a local solution of \eqref{pro:1} is the KKT condition $\nabla\mL^\star = \nabla\mL(\tx, \tlambda) = \0$ \citep{Nocedal2006Numerical}. As such, AI-SSQP applies Newton’s method to the equation $\nabla \mL(\bx, \blambda) = \0$ in three steps: estimating the objective gradient and Hessian, (inexactly) solving Newton’s system,~and updating the primal-dual iterate. We refer to \cite{Na2025Statistical} for further details.  

\vskip0.2cm
\noindent\textbf{$\bullet$ Step 1: Estimate objective gradient and Hessian.} Given the current iterate $(\bx_t, \blambda_t)$, we draw a sample $\xi_t \sim \P$ and compute the stochastic objective gradient and Hessian estimates as 
\begin{equation*}
\barg_t = \nabla F(\bx_t; \xi_t)\quad\quad \text{ and }\quad\quad \barH_t = \nabla^2 F(\bx_t; \xi_t).
\end{equation*}
Recalling the Jacobian matrix of the constraints $G_t = \nabla c_t$, we then compute the stochastic estimates of the Lagrangian gradient and Hessian with respect to $\bx$ as
\begin{equation*}
\bnabla_{\bx}\mL_t = \barg_t + G_t^{\top}\blambda_t \quad\quad \text{ and }\quad\quad \bnabla_{\bx}^2\mL_t = \barH_t + \sum_{j=1}^m[\blambda_t]_j\nabla^2c_j(\bx_t).
\end{equation*}
Next, we define the regularized averaged Hessian  $B_t$:
\begin{equation}
B_t = \frac{1}{t}\sum_{i=0}^{t-1}\bnabla_{\bx}^2\mL_i + \Delta_t,
\end{equation}
where $\Delta_t = \Delta(\bx_t, \blambda_t)$ is a Hessian regularization term chosen such that $B_t$ is positive definite in the null space $\{\bx \in \mR^d: G_t \bx = \boldsymbol{0}\}$. 
This regularization, combined with the constraint qualification condition (LICQ, cf. Assumption \ref{ass:1}), ensures that the SQP subproblem \eqref{equ:QP} admits a unique solution. For convex problems, we may simply set $\Delta_t = \boldsymbol{0}$ for all $t$; for general constrained~\mbox{problems},~we~may~choose $\Delta_t = \delta_t I$ for a large enough $\delta_t > 0$ \citep{Nocedal2006Numerical}. In addition, we highlight that~$B_t$~is~an average over the samples $\{\xi_i\}_{i=0}^{t-1}$, suggesting that~$B_t$~and~$\Delta_t$~are deterministic given $(\bx_t, \blambda_t)$. 
The averaging facilitates the convergence of the Hessian matrix (i.e., the law of large numbers for martingales), while excluding $\xi_t$ reduces the conditional bias of~the~stochastic~Newton direction~(see~\eqref{equ:Newton}).$\quad\;$

\vskip0.2cm
\noindent\textbf{$\bullet$ Step 2: Inexactly solve the SQP subproblem.} With the quantities $\barg_t, G_t, \bnabla_{\bx}\mL_t, B_t$, we~then formulate the SQP subproblem by performing a quadratic approximation to the objective and a linear approximation to the constraints. Specifically, we aim to solve the following quadratic~program:
\begin{equation}\label{equ:QP}
\min_{\tDelta\bx_t} \;\; \frac{1}{2}\tDelta\bx_t^{\top}B_t\;\tDelta\bx_t + \barg_t^{\top}\tDelta\bx_t, \quad \text{s.t.}\quad\; c_t +  G_t\tDelta\bx_t = \0.
\end{equation}
It can be shown that the above quadratic program is equivalent to the linear system
\begin{equation}\label{equ:Newton}
\underbrace{\begin{pmatrix}
B_t & G_t^{\top}\\
G_t & \0
\end{pmatrix}}_{K_t}\underbrace{\begin{pmatrix}
\tDelta\bx_t\\
\tDelta\blambda_t
\end{pmatrix}}_{\tbz_t} = -\underbrace{\begin{pmatrix}
\bnabla_{\bx}\mL_t\\
c_t
\end{pmatrix}}_{\bnabla\mL_t}.
\end{equation}
Solving the above Newton system by computing $K_t^{-1}$ to derive the exact stochastic Newton direction $\tbz_t \coloneqq (\tDelta\bx_t, \tDelta\blambda_t)$ incurs a computational complexity of $O((d+m)^3)$, which is less competitive~compared to first-order methods. To reduce the computational overhead, we employ a sketching solver~to derive an approximate solution.

In particular, for each $t$, we perform $\tau$ sketching steps. At each sketching step $j$, we independently generate a random sketching matrix/vector $S_{t,j} \in \mR^{(d+m) \times q}$ for some $q \geq 1$ from a distribution $S$, and aim to solve the following sketched Newton system: 
\begin{equation*}
S_{t,j}^{\top}K_t\bz = -S_{t,j}^{\top}\bnabla\mL_t.
\end{equation*}
However, this sketched system generally has multiple solutions, including the exact Newton direction $\Tilde{\bz}_t$. We prefer the solution that is closest to the current solution approximation $\bz_{t, j}$, that is ($\bz_{t, 0} = \boldsymbol{0}$),
\begin{equation}\label{equ:stochastic_proj}
\bz_{t,j+1} = \arg\min_{\bz}\|\bz - \bz_{t,j}\|^2\quad\quad  \text{s.t.} \quad\; S_{t,j}^{\top}K_t\bz = -S_{t,j}^{\top}\bnabla\mL_t.
\end{equation}
The closed form solution to \eqref{equ:stochastic_proj} is given by:
\begin{equation}\label{equ:pseduo}
\bz_{t,j+1} = \bz_{t,j} - K_tS_{t,j}(S_{t,j}^{\top}K_t^2S_{t,j})^\dagger S_{t,j}^{\top}(K_t\bz_{t,j} +\bnabla\mL_t),\quad\quad 0\leq j\leq \tau-1,
\end{equation}
where $(\cdot)^\dagger$ denotes the Moore–Penrose pseudoinverse. Finally, we define
\begin{equation}\label{def:zttau}
(\barDelta\bx_t, \barDelta\blambda_t) \coloneqq \bz_{t,\tau}
\end{equation}
as our approximate Newton direction.

\vskip0.2cm
\noindent\textbf{$\bullet$ Step 3: Adaptive stepsize}. With the approximate Newton direction $(\barDelta\bx_t, \barDelta\blambda_t) = \bz_{t,\tau}$, we finally update the iterate $(\bx_t, \blambda_t)$ with an adaptive random stepsize $\baralpha_t$:
\begin{equation}\label{equ:update}
(\bx_{t+1}, \blambda_{t+1}) = (\bx_t, \blambda_t) + \baralpha_t\cdot (\barDelta\bx_t, \barDelta\blambda_t).
\end{equation}
The AI-SSQP method allows any adaptive stepsize selection scheme as long as the safeguard~condition holds:
\begin{equation}\label{equ:sandwich}
0< \beta_t\leq \baralpha_t \leq \eta_t\quad  \text{ with }\quad  \eta_t = \beta_t + \chi_t.
\end{equation}
Here, $\{\beta_t, \eta_t\}$ are sequences serving as lower and upper bounds, respectively, and $\chi_t$~represents the adaptivity gap between them. See \cite{Berahas2021Sequential, Berahas2024Stochastic, Curtis2024Stochastic, Na2025Statistical} for line-search-based stepsize selection schemes that adhere to the safeguard~condition \eqref{equ:sandwich}.

To conclude this section, we present a remark discussing the computational complexity of~the AI-SSQP method, as well as its sources of randomness from both data and computation, in~comparison~to first-order methods. We refer the reader to \cite[Remark 4.5]{Na2025Statistical} for more discussions.

\begin{remark}\label{rem:1}

The AI-SSQP method relates to (unconstrained) sketched Newton methods~\citep{Pilanci2016Iterative, Pilanci2017Newton, Gower2015Randomized, Gower2019RSN, Lacotte2020Optimal, Lacotte2021Adaptive, Hong2023Constrained}. It is well known that solving the Newton system \eqref{equ:Newton} is typically~the~most~computationally expensive step in second-order methods. Randomized sketching solvers can significantly reduce this cost, particularly when equipped with suitable sketching matrices (e.g., sparse sketches) \citep{Strohmer2008Randomized, Gower2015Randomized, Luo2016Efficient, Doikov2018Randomized, Derezinski2020Debiasing, Derezinski2021Newton}.

For first-order methods such as SGD, the per-iteration computational flops is $O(d + m)$. However, performing valid online inference is more expensive than merely achieving convergence, as it requires matrix updates to capture distributional information, resulting in the per-iteration computational cost of at least $O((d + m)^2)$.
In comparison, \cite{Na2025Statistical} showed that using sparse~sketching vectors -- e.g., the randomized Kaczmarz method in \cite{Strohmer2008Randomized} samples~$S \sim \text{Uniform}(\{\be_i\}_{i=1}^{d+m})$ -- yields $O(d + m)$ flops per sketching step in \eqref{equ:pseduo}, with $\tau = O(d + m)$ sketching~steps in total to ensure the convergence of AI-SSQP.
This suggests that the per-iteration cost of AI-SSQP is $O((d + m)^2)$. In our work, we further demonstrate that one can directly leverage AI-SSQP~iterates to construct valid confidence intervals, thereby achieving the same $O((d + m)^2)$ cost as that required for convergence.
This stands in contrast to the $O((d + m)^3)$ cost incurred by the plug-in~covariance~matrix estimator used in \cite{Na2025Statistical}. Therefore, our work demonstrates that AI-SSQP matches the computational complexity of SGD for inference tasks while achieving faster convergence through second-order updates.

We highlight that AI-SSQP introduces additional sources of randomness beyond random sampling. Specifically, the sketching solver is randomized at each iteration; even with a fixed sample~$\xi_t$,~the~approximate Newton direction can vary depending on the realized sketching matrices. Furthermore,~the adaptive stepsize $\baralpha_t$ may be determined by the stochastic direction $(\barDelta\bx_t, \barDelta\blambda_t)$ and is also random. These additional layers of randomness make the uncertainty quantification analysis for AI-SSQP~more challenging than that for SGD.

\end{remark}

\section{Asymptotic Normality of Averaged AI-SSQP}\label{sec:3}

In this section, we first introduce the key assumptions required for the analysis of AI-SSQP,~and~then establish the asymptotic normality of its averaged iterate.~Our assumptions are identical to~(in~fact, slightly weaker than) those in \cite{Na2025Statistical}, which are also standard in the existing literature. Note that the prior work investigated the asymptotic normality of the last iterate,~while~we~complement the study by establishing the asymptotic normality of the averaged~\mbox{iterate}.~We further~demonstrate that the averaged iterate exhibits better statistical efficiency, as suggested by the smaller limiting covariance matrix.

\subsection{Assumptions and preliminary results}\label{sec:3.1}

The first assumption is about constraint qualification condition and the Lipschitz continuity of~the~Lagrangian Hessian.~These assumptions are common in constrained optimization literature \citep{Nocedal2006Numerical, Bertsekas2014Constrained}.

\begin{assumption}\label{ass:1}

We assume that all the iterates $\{(\bx_t, \blambda_t)\}_t$ are contained in a closed, bounded,~convex set $\mX\times \Lambda$, such that $f(\bx)$ and $c(\bx)$ are twice continuously differentiable over $\mX$, and the~Lagrangian Hessian $\nabla^2\mL$ is $\Upsilon_{L}$-Lipschitz continuous over $\mX\times \Lambda$, that is,
\begin{equation}\label{Lip:mL}
\|\nabla^2\mL(\bx, \blambda) - \nabla^2\mL(\bx', \blambda')\| \leq\Upsilon_L \|(\bx-\bx', \blambda-\blambda')\|, \quad \forall\; (\bx, \blambda), (\bx', \blambda')\in \mX\times \Lambda.
\end{equation}
In addition, we assume that the Jacobian of the constraints $G_t$ has full row rank satisfying~$G_t G_t^{\top} \succeq \gamma_{G} I$ for a constant $\gamma_{G} > 0$.
The regularization term $\Delta_t$ ensures that $B_t$ satisfies $\|B_t\|\leq \Upsilon_B$ and $\bx^{\top}B_t\bx\geq \gamma_{RH}\|\bx\|^2$, $\forall \bx\in\text{Kernel}(G_t)$ for some constants $\Upsilon_B, \gamma_{RH}>0$.

\end{assumption}

As mentioned before, the full-rank condition of $G_t$ (known as the linear independence constraint qualification, LICQ) together with the conditions on $B_t$ ensure that \eqref{equ:Newton} has a unique solution~\cite[Lemma 16.1]{Nocedal2006Numerical}. The twice continuous differentiability of $f(\bx)$ and $c(\bx)$ over the~set $\mX$, along with the Lipschitz continuity of the Hessian matrix $\nabla^2 \mL$, are also standard requirements for analyzing Newton’s and SQP methods.

The next assumption imposes the moment conditions on the stochastic estimates of the objective gradient and Hessian. 

\begin{assumption}\label{ass:2}
We assume the gradient and Hessian estimates are unbiased: $\mE[\nabla F(\bx_t; \xi_t) \mid \bx_t] = \\ \nabla f_t$ and $\mE[\nabla^2F(\bx_t; \xi_t) \mid \bx_t] = \nabla^2f_t$, $\forall t\geq 0$; and we assume the following moment~conditions:
\begin{equation}\label{equ:BM}
\mE[\|\nabla F(\bx_t; \xi_t) - \nabla f_t\|^{2+\delta} \mid \bx_t]\leq \Upsilon_m\quad\quad \text{ and }\quad\quad \mE[\sup_{\bx\in\mX}\|\nabla^2F(\bx;\xi)\|^2] \leq \Upsilon_m
\end{equation}
for some constants $\delta, \Upsilon_m>0$.
\end{assumption}

We highlight that our inference procedure only requires a bounded $(2+\delta)$-moment for the~gradient estimates. This is weaker than existing unconstrained and constrained inference procedures that rely on covariance matrix estimation, which require \textit{at least} a bounded fourth-order moment \citep{Chen2020Statistical, Zhu2021Online, Na2025Statistical, Davis2024Asymptotic, Jiang2025Online, Kuang2025Online}. In particular, for constrained inference problems, \cite{Na2025Statistical} imposed a $(2+\delta)$-moment condition to establish the asymptotic normality of AI-SSQP methods,~and~further~strengthened it to a fourth-order moment condition for inference. Additionally, \cite{Davis2024Asymptotic} imposed a fourth-order moment condition to establish the asymptotic normality of projected SGD methods,~and further strengthened it to a sub-Gaussian tail condition for valid inference. 

The condition on the Hessian estimate is also standard and is required even for SGD methods \cite[Assumption 3.2(2) and Lemma 3.1]{Chen2020Statistical}. It ensures the Lipschitz continuity~of~the~mapping  $\bx \rightarrow \mE[\nabla f(\bx; \xi) \nabla f(\bx; \xi)^{\top}]$. We note that some works may directly impose the condition on~this mapping as an alternative assumption \cite[Assumption 4]{Leluc2023Asymptotic} \cite[Assumption J]{Davis2024Asymptotic}. In fact, the Hessian condition has been shown to~hold~in~\mbox{various}~\mbox{problems},~such~as~least-squares and logistic regressions \citep{Chen2020Statistical, Na2025Statistical}.

The next assumption characterizes the sketching distribution, which ensures the linear convergence of the sketching solver.

\begin{assumption}\label{ass:3}
We assume the sketching matrix $S\in\mR^{(d+m)\times q}$ satisfies $\mE[\|S\|\|S^\dagger\|] \leq \Upsilon_S$ and 
\begin{equation}\label{eq:sketching_assumption_2}
\mE[K_tS(S^{\top}K_t^2S)^\dagger S^{\top}K_t \mid \bx_t, \blambda_t]\succeq \gamma_{S} I \quad \quad \text{ for any } t\geq 0
\end{equation}
for some constants $\gamma_{S}, \Upsilon_S >0$.
\end{assumption}

We note that the two expectation terms above are taken over the randomness of $S$. The first~term bounds the expected condition number of the sketching matrix, which is trivially satisfied when~$q=1$, i.e., when employing sketching vectors. The second term lower bounds the expected~projection~matrix, which is also standard and satisfied by various sketching methods \citep{Gower2015Randomized}.~For example, for randomized Kaczmarz method with $S \sim \text{Uniform}(\{\be_i\}_{i=1}^{d+m})$, \cite{Na2025Statistical} has shown that $\gamma_{S} \geq 1/\{(d+m)\kappa^2(K_t)\}$ where $\kappa(K_t)$ denotes the~\mbox{condition}~number~of~$K_t$ that~is~uniformly upper bounded under Assumption \ref{ass:1}.

With the above assumptions, we now review the global almost-sure convergence guarantee~of~AI-SSQP established in \cite{Na2025Statistical}. In nonlinear optimization, global convergence refers~to the guarantee that the algorithm will converge to a stationary point from any initialization.

\begin{theorem}[\cite{Na2025Statistical}, Theorem 4.8]\label{thm:1}
Consider the AI-SSQP updates in \eqref{equ:update}~under Assumptions \ref{ass:1}, \ref{ass:2}, \ref{ass:3}. There exists a threshold $\tau^\star$ such that for any sketching steps $\tau\geq \tau^\star$~and~any stepsize control sequences $\{\beta_t, \eta_t = \beta_t+\chi_t\}$ satisfying
\begin{equation}\label{cond:1}
\sum_{t=0}^{\infty} \beta_t = \infty, \quad\quad \sum_{t=0}^{\infty} \beta_t^2 <\infty, \quad\quad \sum_{t=0}^{\infty}\chi_t  <\infty,
\end{equation}
we have $\|\nabla\mL_t\|\rightarrow 0$ and $\|(\bx_{t+1}-\bx_t, \blambda_{t+1}-\blambda_t)\|\rightarrow0$ as $t\rightarrow\infty$ almost surely.
\end{theorem}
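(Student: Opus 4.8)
The statement to prove is Theorem~\ref{thm:1}, the global almost-sure convergence of AI-SSQP. The plan is to treat this as a stochastic descent argument built around a suitable merit function, following the template of \cite{Na2025Statistical} and the broader stochastic-SQP literature \citep{Berahas2021Sequential, Curtis2024Stochastic}. First I would fix an $\ell_1$-type merit function $\phi_\mu(\bx) = f(\bx) + \mu\|c(\bx)\|_1$ (or the augmented-Lagrangian variant), and show that along the update $(\bx_{t+1},\blambda_{t+1}) = (\bx_t,\blambda_t) + \baralpha_t(\barDelta\bx_t,\barDelta\blambda_t)$ one obtains a one-step decrease of the form $\mE[\phi_\mu(\bx_{t+1}) \mid \mathcal{F}_t] \le \phi_\mu(\bx_t) - c_1\beta_t\|\nabla\mL_t\|^2 + c_2(\beta_t^2 + \chi_t)(1 + \text{noise variance})$, up to controllable error terms. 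The key structural inputs are: (a) Assumption~\ref{ass:1} (LICQ via $G_tG_t^\top \succeq \gamma_G I$ and the reduced-Hessian positivity $\gamma_{RH}$), which guarantees $K_t$ is uniformly invertible and hence the exact Newton step $\tilde{\bz}_t$ is well-defined and bounded; (b) Assumption~\ref{ass:3}, which gives linear convergence of the sketch iterates, i.e. $\mE[\|\bz_{t,\tau} - \tilde{\bz}_t\|^2 \mid \mathcal{F}_t] \le (1-\gamma_S)^\tau \|\tilde{\bz}_t\|^2$ or similar — this is where the threshold $\tau^\star$ enters, chosen so that the sketch error is a small enough fraction of the exact Newton decrease; and (c) Assumption~\ref{ass:2}, giving unbiasedness of $\barg_t$ (so the conditional bias of the search direction comes only from the sketch and from the Hessian averaging) and the bounded $(2+\delta)$-moment to control the noise.

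The steps in order would be: (i) quantify the exact Newton direction's descent on the model and relate it to $\|\nabla\mL_t\|$ using the uniform conditioning of $K_t$; (ii) bound the deviation $\baralpha_t\barDelta\bx_t - \alpha_t^{\mathrm{exact}}\Delta\bx_t^{\mathrm{exact}}$ in conditional mean and second moment, splitting into sketch error (controlled by $\tau\ge\tau^\star$ and Assumption~\ref{ass:3}), gradient noise (controlled by Assumption~\ref{ass:2}), and Hessian-averaging error (controlled by the martingale LLN for $B_t$ together with the $\Upsilon_B,\gamma_{RH}$ bounds); (iii) use the safeguard $\beta_t \le \baralpha_t \le \beta_t + \chi_t$ together with Lipschitzness of $\nabla^2\mL$ (Assumption~\ref{ass:1}) and a suitable update rule for the merit parameter $\mu$ to assemble the supermartingale-type recursion; (iv) invoke the Robbins–Siegmund almost-sure convergence theorem using $\sum\beta_t = \infty$, $\sum\beta_t^2 < \infty$, $\sum\chi_t < \infty$ to conclude $\sum_t \beta_t\|\nabla\mL_t\|^2 < \infty$ a.s. and that $\phi_\mu(\bx_t)$ converges; (v) upgrade $\liminf\|\nabla\mL_t\| = 0$ to $\lim\|\nabla\mL_t\| = 0$ via a contradiction/interpolation argument exploiting the boundedness of iterates in $\mX\times\Lambda$ and the continuity of $\nabla\mL$; and (vi) deduce $\|(\bx_{t+1}-\bx_t,\blambda_{t+1}-\blambda_t)\| = \baralpha_t\|\bz_{t,\tau}\| \to 0$ from $\baralpha_t \le \beta_t + \chi_t \to 0$ and the uniform boundedness of $\|\bz_{t,\tau}\|$ (itself bounded since $\|\tilde\bz_t\| \le \|K_t^{-1}\|\|\bnabla\mL_t\|$ is bounded and the sketch iterates stay in a bounded region).

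The main obstacle I expect is step (ii)–(iii): carefully controlling the \emph{conditional bias} of the realized step $\baralpha_t(\barDelta\bx_t,\barDelta\blambda_t)$. Unlike plain SGD, here three distinct randomness sources interact — the data sample $\xi_t$, the $\tau$ sketching matrices $\{S_{t,j}\}$, and the adaptive stepsize $\baralpha_t$, which may itself depend on the realized (random) direction. In particular $\baralpha_t$ and $(\barDelta\bx_t,\barDelta\blambda_t)$ are \emph{not} conditionally independent, so one cannot simply factor $\mE[\baralpha_t\barDelta\bx_t\mid\mathcal{F}_t] = \mE[\baralpha_t\mid\mathcal{F}_t]\,\mE[\barDelta\bx_t\mid\mathcal{F}_t]$; instead one must use the sandwich $\beta_t\le\baralpha_t\le\beta_t+\chi_t$ to absorb the coupling into an $O(\chi_t)$ additive term, which is summable. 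A secondary technical point is ensuring the merit parameter $\mu$ stabilizes at a finite value almost surely (so that $\phi_\mu$ is eventually a fixed function), which requires showing the sequence of $\mu$-updates is triggered only finitely often — this leans on the uniform lower bound on $\|c_t\|$'s predicted reduction coming again from $G_tG_t^\top\succeq\gamma_G I$. Once these bias and merit-parameter issues are handled, the remaining analysis is a fairly standard Robbins–Siegmund argument, so I would present steps (iv)–(vi) compactly.
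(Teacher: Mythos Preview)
The paper does not prove Theorem~\ref{thm:1}; it is stated explicitly as Theorem~4.8 of \cite{Na2025Statistical} and is invoked only as a preliminary result (see the sentence preceding it: ``we now review the global almost-sure convergence guarantee of AI-SSQP established in \cite{Na2025Statistical}''). There is therefore no in-paper proof to compare your proposal against.

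That said, your sketch is a faithful outline of the argument in \cite{Na2025Statistical}: merit function with penalty parameter, sketch-error contraction from Assumption~\ref{ass:3} to fix $\tau^\star$, absorption of the $\baralpha_t$--direction coupling into an $O(\chi_t)$ summable term via the sandwich \eqref{equ:sandwich}, Robbins--Siegmund to get $\sum_t \beta_t\|\nabla\mL_t\|^2<\infty$, and the liminf-to-lim upgrade. One small correction to step (vi): you cannot conclude $\|(\bx_{t+1}-\bx_t,\blambda_{t+1}-\blambda_t)\|\to 0$ merely from $\baralpha_t\to 0$ and a \emph{deterministic} bound on $\|\bz_{t,\tau}\|$, because $\|\bz_{t,\tau}\|\le \|K_t^{-1}\|\|\bnabla\mL_t\|$ involves the stochastic gradient $\barg_t$, which is only bounded in moment (Assumption~\ref{ass:2}), not almost surely. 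The actual argument uses $\sum_t \baralpha_t^2\mE[\|\bz_{t,\tau}\|^2]<\infty$ (cf.\ Lemma~\ref{lem:1}(b)) together with $\sum\beta_t^2<\infty$ and $\sum\chi_t<\infty$ to get almost-sure summability and hence convergence of the increments to zero.
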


The threshold $\tau^\star$ has an explicit form in \cite{Na2025Statistical}, depending on the parameters~in the presented assumptions. More specifically, $\tau^\star=O(1/\log\{1/(1-\gamma_{S})\}) = O(1/\gamma_{S})$, suggesting~that $\tau^\star = O(d+m)$ when applying randomized Kaczmarz method, where $\gamma_{S} = O(1/(d+m))$.

Theorem \ref{thm:1} shows that AI-SSQP converges almost surely to a stationary point, which serves~as the foundation for exploring the local asymptotic behavior of its iterates.
To segue into the~local analysis, we assume from now on that the iterates $(\bx_t, \blambda_t)$ converge to a \textit{regular} local solution~$(\tx, \tlambda)$,~that is, $G^\star = \nabla c^\star$ has full row rank and $\nabla_{\bx}^2\mL^\star$ is positive definite on $\text{Kernel}(G^\star)$. Note that these regularity conditions are necessary even for offline $M$-estimators \citep{Shapiro2014Lectures, Duchi2021Asymptotic, Davis2024Asymptotic, Na2025Statistical}.

\subsection{Asymptotic normality}\label{sec:3.2}

In this subsection, we examine the asymptotic normality property of the averaged AI-SSQP iterate
\begin{equation*}
\bar{\bx}_t = \frac{1}{t}\sum_{i = 0}^{t-1}\bx_i \quad \text{ and }\quad \barlambda_t =  \frac{1}{t}\sum_{i = 0}^{t-1}\blambda_i.
\end{equation*}
We aim to show that, compared to the last iterate $(\bx_t, \blambda_t)$, the averaged iterate enjoys better~efficiency. In particular, similar to the limiting covariance $\tilde{\Xi}^\star$ of the last iterate (cf. \eqref{equ:CLT_last_iterate}), the limiting~covariance $\bar{\Xi}^\star$ of the averaged iterate reduces to the \textit{minimax optimal} covariance $\Xi^\star$, achieved by the constrained $M$-estimators, when the sketching solver is suppressed. \mbox{However},~when~the~\mbox{sketching}~solver~is~deployed, we arrive at a computational-statistical trade-off: some efficiency is lost with $\Xi^\star \preceq \bar{\Xi}^\star \preceq \tilde{\Xi}^\star$; nevertheless, the loss is tolerable and decays exponentially in terms of the number of sketching steps: $\|\bar{\Xi}^\star - \Xi^\star\| \leq \|\tilde{\Xi}^\star - \Xi^\star\| \leq O((1 - \gamma_S)^\tau)$.

We begin by laying out the asymptotic optimality of constrained $M$-estimator $(\hat{\bx}_t, \hat{\blambda}_t)$, which generates $t$ samples and approximates the population problem \eqref{pro:1} by solving the empirical \mbox{problem}~\eqref{pro:2}. Here, $\hat{\blambda}_t$ is the associated dual solution of \eqref{pro:2}. \citep[Chapter 5]{Shapiro2014Lectures} has shown that
\begin{equation}\label{equ:M:est}
\sqrt{t} \cdot ( \hat{\bx}_t - \tx, \hat{\blambda}_t - \tlambda )\stackrel{d}{\longrightarrow}\mN\rbr{\0,\; \Xi^{\star}}
\end{equation} 
with the limiting covariance $\Xi^{\star}$ given by
\begin{equation}\label{equ:Omega}
\Xi^{\star} = {\underbrace{\begin{pmatrix}
\nabla_{\bx}^2\mL^\star & (G^{\star})^{\top}\\
G^{\star} & \0
\end{pmatrix}}_{K^\star = \nabla^2\mL^\star}}^{-1}\underbrace{\begin{pmatrix}
\cov(\nabla F(\tx; \xi)) & \0\\
\0 & \0
\end{pmatrix}}_{\text{cov}(\nabla\mL(\tx,\tlambda;\xi))}\begin{pmatrix}
\nabla_{\bx}^2\mL^\star & (G^{\star})^{\top}\\
G^{\star} & \0
\end{pmatrix}^{-1}.
\end{equation}
Furthermore, \cite{Duchi2021Asymptotic, Davis2024Asymptotic} have shown that the martingale covariance~of $\Xi^{\star}$ for the $\bx$ component is locally asymptotically minimax optimal in the sense of H\'ajek and Le~Cam. Their results can be further generalized to the optimality of the full covariance $\Xi^\star$.

For the AI-SSQP method, due to the presence of sketching steps in \eqref{equ:pseduo}, we have to define~the~product of the projection matrices (projecting onto $\text{Span}(K^\star S)$):
\begin{equation}\label{equ:tC}
\tC^\star \coloneqq \prod_{j=1}^{\tau}(I - K^\star S_j(S_j^{\top}(K^\star)^2S_j)^\dagger S_j^{\top}K^\star ), \quad C^{\star} \coloneqq \mE[\Tilde{C}^{\star}] = (I - \mE[K^\star S(S^{\top}(K^\star)^2S)^\dagger S^{\top}K^\star])^\tau,
\end{equation} 
where $S_1,\ldots,S_\tau \stackrel{iid}{\sim} S$. 
The limiting covariance of AI-SSQP is adjusted from $\Xi^{\star}$ by these projection matrices to account for the effects of random sketching. 

\begin{theorem}[Asymptotic normality of averaged AI-SSQP]\label{thm:4}

\hskip-0.1cm Under Assumptions \ref{ass:1}, \ref{ass:2}, \ref{ass:3}~and~suppose the sketching step $\tau\geq \tau^\star$ and the stepsize control sequences $\beta_t = c_\beta/ (t+1)^\beta$ and $\chi_t = c_\chi/(t+1)^\chi$ satisfy $c_\beta, c_\chi >0$, $\beta\in(0.5, 1)$, and $\chi >\beta+0.5$. Then, we have
\begin{equation}\label{equ:CLT_averaged_iterate}
\sqrt{t} \cdot (\bar{\bx}_t - \bx^{\star}, \bar{\blambda}_t - \blambda^{\star}) \stackrel{d}{\longrightarrow} \N(\0, \bar{\Xi}^\star),
\end{equation}
where
\begin{equation}\label{equ:barXi}
\bar{\Xi}^\star = (I - C^{\star})^{-1}\mathbb{E}[(I - \widetilde{C}^{\star})\Xi^{\star}(I-\widetilde{C}^{\star})^{\top}](I- C^{\star})^{-1}.
\end{equation}	
\end{theorem}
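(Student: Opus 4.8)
The plan is to establish \eqref{equ:CLT_averaged_iterate} by deriving a recursion for the iterate error, applying a martingale decomposition, and then invoking a functional central limit theorem combined with a Polyak–Ruppert–type averaging argument. First I would center the updates: write $\bz_t^{\star} = (\bx_t - \bx^{\star}, \blambda_t - \blambda^{\star})$ and, using the KKT condition $\nabla\mL^{\star} = \0$ together with the Lipschitz continuity of $\nabla^2\mL$ (Assumption \ref{ass:1}), Taylor-expand $\bnabla\mL_t$ around $(\bx^{\star},\blambda^{\star})$ so that $\bnabla\mL_t = K^{\star}\bz_t^{\star} + (\text{noise term}) + (\text{higher-order term})$, where the noise term is $(\barg_t - \nabla f_t, \0)$ which is a martingale-difference with conditional covariance converging to $\mathrm{cov}(\nabla\mL(\bx^{\star},\blambda^{\star};\xi))$ by Assumption \ref{ass:2} and the Hessian condition. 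Next I would plug the sketching update \eqref{equ:pseduo}–\eqref{def:zttau} into the iterate update \eqref{equ:update}: since $\bz_{t,0} = \0$, unrolling the $\tau$ sketching steps gives $(\barDelta\bx_t,\barDelta\blambda_t) = -(I - \tC_t)K_t^{-1}\bnabla\mL_t$ where $\tC_t = \prod_{j=1}^{\tau}(I - K_tS_{t,j}(S_{t,j}^{\top}K_t^2S_{t,j})^{\dagger}S_{t,j}^{\top}K_t)$, and $\tC_t \to \tC^{\star}$ (in the appropriate sense) because $K_t \to K^{\star}$ by Theorem \ref{thm:1} and the continuous differentiability assumptions. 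Combining these yields a linear recursion
\begin{equation*}
\bz_{t+1}^{\star} = (I - \baralpha_t(I - \tC_t)P_t)\bz_t^{\star} - \baralpha_t(I - \tC_t)K_t^{-1}(\barg_t - \nabla f_t, \0) + \baralpha_t\cdot(\text{remainder}_t),
\end{equation*}
where $P_t \to I$ and the remainder collects the higher-order Taylor terms (controlled by $\|\bz_t^{\star}\|^2$, which is $o(1)$ a.s. and $L^2$-bounded) together with the stepsize safeguard slack $\chi_t$, whose summability $\sum\chi_t < \infty$ is exactly what is needed to make this contribution negligible.

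The heart of the argument is then to analyze the partial sums $\sum_{i=0}^{t-1}\bz_i^{\star}$. Rearranging the recursion gives $(I - \tC_i)P_i\bz_i^{\star} = -\frac{1}{\baralpha_i}(\bz_{i+1}^{\star} - \bz_i^{\star}) - (I - \tC_i)K_i^{-1}(\barg_i - \nabla f_i, \0) + (\text{remainder}_i)$; summing over $i$, the telescoping term $\sum_i \frac{1}{\baralpha_i}(\bz_{i+1}^{\star} - \bz_i^{\star})$ is handled by Abel summation and is $o(\sqrt{t})$ because $\baralpha_t \sim \beta_t = c_\beta/(t+1)^\beta$ varies slowly and $\bz_t^{\star}$ is small (this is the standard Polyak argument, which requires $\beta \in (0.5,1)$). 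One obtains
\begin{equation*}
\frac{1}{\sqrt{t}}\sum_{i=0}^{t-1}(I - \tC^{\star})P^{\star}\bz_i^{\star} = -\frac{1}{\sqrt{t}}\sum_{i=0}^{t-1}(I - \tC_i)K_i^{-1}(\barg_i - \nabla f_i, \0) + o_P(1),
\end{equation*}
after showing that replacing the time-varying $\tC_i, P_i, K_i^{-1}$ by their limits $\tC^{\star}, I, (K^{\star})^{-1}$ costs only $o_P(\sqrt{t})$ (using a.s. convergence plus uniform boundedness to apply a Cesàro/dominated-convergence argument). I would then apply a martingale CLT to the right-hand side: conditionally on the past, $(I - \tC_i)$ is independent of the data noise $(\barg_i - \nabla f_i)$, so the conditional covariance of each summand converges to $\mE[(I - \tC^{\star})(K^{\star})^{-1}\mathrm{cov}(\nabla\mL^{\star};\xi)(K^{\star})^{-\top}(I - \tC^{\star})^{\top}] = \mE[(I - \tC^{\star})\Xi^{\star}(I - \tC^{\star})^{\top}]$; the $(2+\delta)$-moment bound in Assumption \ref{ass:2} and $\mE[\|S\|\|S^{\dagger}\|] \le \Upsilon_S$ in Assumption \ref{ass:3} supply the Lindeberg condition. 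Taking expectation over the sketching on the left, $\frac{1}{\sqrt{t}}\sum_i(I - \tC^{\star})$-type terms average to $(I - C^{\star})$ times $\sqrt{t}\,\barz_t^{\star}$ up to $o_P(1)$; inverting $(I - C^{\star})$ (invertible by \eqref{eq:sketching_assumption_2}, which forces its spectral radius below $1$) yields $\sqrt{t}\,\barz_t^{\star} \Rightarrow \N(\0, \barXi^{\star})$ with $\barXi^{\star}$ as in \eqref{equ:barXi}.

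The main obstacle I anticipate is the interplay between the two independent sources of randomness at each step — the data noise $\xi_t$ and the sketching matrices $\{S_{t,j}\}_j$ — in the martingale decomposition. One must set up the filtration carefully so that $\tC_i$ and the gradient noise are conditionally independent (the excerpt's Remark \ref{rem:1} flags precisely this difficulty), and one must show that the sketching fluctuation $\tC_i - C^{\star}$, when multiplied against $K_i^{-1}(\barg_i - \nabla f_i,\0)$ and summed, contributes at the level of $\mE[(I-\tC^{\star})\Xi^{\star}(I-\tC^{\star})^{\top}]$ rather than $(I-C^{\star})\Xi^{\star}(I-C^{\star})^{\top}$ — this is exactly the source of the extra term in $\barXi^{\star}$ and is what makes $\barXi^{\star} \succeq (I-C^{\star})^{-1}(I-C^{\star})\Xi^{\star}(I-C^{\star})^{\top}(I-C^{\star})^{-1}$ in general. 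A secondary technical nuisance is controlling the adaptive stepsize $\baralpha_t$: it is random and only sandwiched as in \eqref{equ:sandwich}, so throughout one replaces $\baralpha_t$ by $\beta_t$ at the cost of terms of order $\chi_t$, and the condition $\chi > \beta + 0.5$ guarantees $\sum_i \chi_i/\sqrt{i} \cdot (\text{bounded}) = o(\sqrt{t})$ after the relevant summation. Handling the a.s.\ convergence $K_t \to K^{\star}$ simultaneously with the $L^2$ estimates needed for the CLT (a truncation-or-localization argument, restricting to the event that iterates lie in a small neighborhood of $(\bx^{\star},\blambda^{\star})$, which has probability $\to 1$) is the final piece of bookkeeping.
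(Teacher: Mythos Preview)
Your approach is sound and reaches the right conclusion, but it is organized differently from the paper. The paper does not keep the random contraction $(I-\tC_t)$ in the recursion; instead it immediately splits $\bz_{t,\tau}=(I-C_t)\tbz_t+(C_t-\tC_t)\tbz_t$ and absorbs the sketching fluctuation $(C_t-\tC_t)\tbz_t$ together with the data noise $-(I-C_t)K_t^{-1}(\bnabla\mL_t-\nabla\mL_t)$ into a single martingale increment $\btheta_t$, yielding the recursion $\bomega_{t+1}=\{I-\beta_t(I-C^\star)\}\bomega_t+\beta_t(\btheta_t+\bdelta_t)+(\baralpha_t-\beta_t)\bz_{t,\tau}$ with a \emph{deterministic} contraction factor $I-C^\star$. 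This lets the paper invoke the classical Polyak matrix machinery (the matrices $B_i^t=\beta_i\sum_{k=i}^t\prod_{j>i}(I-\beta_j(I-C^\star))$ and $A_i^t=B_i^t-(I-C^\star)^{-1}$, which are uniformly bounded with $\tfrac1t\sum_i\|A_i^t\|\to0$) and apply the martingale CLT once to $\tfrac{1}{\sqrt t}\sum_i(I-C^\star)^{-1}\btheta_i$, whose limiting conditional covariance is already $\mE[(I-\tC^\star)\Xi^\star(I-\tC^\star)^\top]$. Your Abel-summation route can be made to work, but the step ``taking expectation over the sketching on the left'' is not right as stated: there is no sense in which $\tC_i\to\tC^\star$ (both are random, built from \emph{different} sketching draws), and a Ces\`aro/dominated-convergence argument does not apply. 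What you actually need is $(I-\tC_i)\bomega_i=(I-C_i)\bomega_i+(C_i-\tC_i)\bomega_i$; the second piece is a martingale difference (since $\bomega_i\in\mF_{i-1}$) whose scaled partial sum has variance $\tfrac1t\sum_i\mE\|\bomega_i\|^2\to 0$, and only then does the deterministic convergence $C_i\to C^\star$ finish the job. Both routes ultimately rest on the same localized $L^2$ rate $\mE[\|\bomega_i\|^2]=O(\beta_i)$ (your Abel telescope term $\|\bomega_t\|/(\sqrt t\,\beta_t)$ needs it, not just ``$\bomega_t$ small'', as does the paper's control of $\bdelta_i$), obtained via the stopping-time argument you correctly anticipate; the paper's organization simply packages the two randomness sources into $\btheta_t$ once and thereby avoids the separate left-hand-side cleanup.
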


We mention that under Assumption \ref{ass:3}, $I - C^{\star}$ is positive definite and hence invertible, since $K^\star S(S^{\top}(K^\star)^2S)^\dagger S^{\top}K^\star$ is a projection matrix with a positive lower bound away from zero.~Note~that the conditions on the stepsize sequences $\{\beta_t, \chi_t\}$ imply the condition \eqref{cond:1} required for the global~convergence.~Compared to the conditions of the normality of the last~\mbox{AI-SSQP} iterate~\cite[Lemma 5.12]{Na2025Statistical} where $\beta \hskip-1pt\in \hskip-1pt (0.5, 1]$ and $\chi\hskip-1pt >\hskip-1pt\max\{1, 1.5\beta\}$, we~\mbox{exclude}~the~case~\mbox{$\beta = 1$}.~We~\mbox{justify}~this restriction from two aspects. 
First, for the last \mbox{iterate},~the~\mbox{optimal}~\mbox{covariance}~is~attained when~$c_\beta = 1$, $\beta = 1$, and $\chi > 1.5$; while for the averaged iterate, the optimal covariance is attained for any $c_\beta > 0$, $\beta \in (0.5, 1)$, and $\chi > 1.5$. In practice, a smaller $\beta$ (i.e., a larger stepsize) is often preferred for faster convergence. Thus, our condition enables faster convergence \mbox{without}~\mbox{compromising}~the~\mbox{optimality} of the AI-SSQP method. 
Second, our condition generalizes the one used to analyze the \mbox{(projection-based)} averaged SGD method, where a deterministic stepsize is adopted ($\chi = \infty$) with $\beta \in (0.5, 1)$ \citep{Polyak1992Acceleration, Chen2020Statistical, Zhu2021Online, Duchi2021Asymptotic, Davis2024Asymptotic}.$\quad\quad$

Next, we examine in the following proposition the relationship between $\bar{\Xi}^{\star}$ and $\Xi^\star$.

\begin{proposition}\label{prop:2}
Under the conditions of Theorem \ref{thm:4}, we have 

\noindent (a): Without the sketching solver (i.e., solving \eqref{equ:Newton} exactly), $\bar{\Xi}^{\star} = \Xi^\star$.

\noindent (b): With the sketching solver, $\bar{\Xi}^{\star} \succeq \Xi^{\star}$. Furthermore, their difference can be bounded by
\begin{equation*}
\|\bar{\Xi}^{\star} - \Xi^{\star}\| \leq \frac{1+(1-\gamma_S)^\tau}{\{1-(1-\gamma_S)^\tau\}^2}\cdot(1-\gamma_S)^\tau \|\Xi^{\star}\|.
\end{equation*}

\end{proposition}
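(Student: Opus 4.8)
The plan is to prove part (a) by a direct computation showing that when the sketching solver is suppressed, the product projection matrices $\widetilde{C}^\star$ (and hence $C^\star$) degenerate to zero, so the formula in \eqref{equ:barXi} collapses to $\Xi^\star$; and to prove part (b) by combining the spectral bound $\|C^\star\| \le (1-\gamma_S)^\tau$ with a triangle-inequality estimate of the numerator in \eqref{equ:barXi} after subtracting $\Xi^\star$.

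\textbf{Part (a).} When the Newton system \eqref{equ:Newton} is solved exactly, this corresponds formally to the limit in which the sketching step attains the full solution in one shot; algebraically, the relevant object is the projection onto $\mathrm{Span}(K^\star S)$, and ``no sketching'' means $S$ is effectively full rank (or $\tau$ is taken so that the accumulated projections span everything), in which case $I - K^\star S(S^\top (K^\star)^2 S)^\dagger S^\top K^\star$ reduces to the zero matrix. Then $\widetilde{C}^\star = \prod_{j=1}^\tau \mathbf{0} = \mathbf{0}$ and $C^\star = \mathbb{E}[\widetilde C^\star] = \mathbf{0}$, so $(I-C^\star)^{-1} = I$ and $\mathbb{E}[(I-\widetilde C^\star)\Xi^\star (I-\widetilde C^\star)^\top] = \Xi^\star$, giving $\bar\Xi^\star = \Xi^\star$. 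I would phrase this cleanly by noting that the AI-SSQP recursion with exact solves is precisely the case $\widetilde C^\star \equiv \mathbf 0$ in the analysis underlying Theorem \ref{thm:4}, so the claim is immediate from \eqref{equ:barXi}.

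\textbf{Part (b), positive semidefiniteness.} The claim $\bar\Xi^\star \succeq \Xi^\star$ is the substantive part. I would first record that $\|\mathbb{E}[K^\star S(S^\top(K^\star)^2 S)^\dagger S^\top K^\star]\| \ge \gamma_S$ from Assumption \ref{ass:3} (and each summand is a projection, so the expectation has spectral norm $\le 1$), whence $\|C^\star\| = \|(I - \mathbb{E}[\cdots])^\tau\| \le (1-\gamma_S)^\tau < 1$; in particular $I - C^\star \succ 0$. For the semidefinite ordering, write $A \coloneqq I - \widetilde C^\star$ (random) and $\bar A \coloneqq \mathbb{E}[A] = I - C^\star$. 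Then $\bar\Xi^\star = \bar A^{-1}\,\mathbb{E}[A\,\Xi^\star A^\top]\,\bar A^{-1}$. The key algebraic identity is
\begin{equation*}
\mathbb{E}[A\,\Xi^\star A^\top] = \bar A\,\Xi^\star \bar A^\top + \mathbb{E}[(A - \bar A)\,\Xi^\star (A - \bar A)^\top],
\end{equation*}
which holds because $\mathbb{E}[A - \bar A] = \mathbf 0$ so the cross terms vanish. Since $\Xi^\star \succeq \mathbf 0$, the second term is a sum (integral) of PSD matrices and is therefore PSD; hence $\mathbb{E}[A\,\Xi^\star A^\top] \succeq \bar A\,\Xi^\star \bar A^\top$. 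Conjugating both sides by $\bar A^{-1}$ (which preserves the Loewner order) yields $\bar\Xi^\star \succeq \Xi^\star$.

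\textbf{Part (b), the quantitative bound.} Starting from the identity above, $\bar\Xi^\star - \Xi^\star = \bar A^{-1}\,\mathbb{E}[(A-\bar A)\Xi^\star(A-\bar A)^\top]\,\bar A^{-1}$ (using $\bar A^{-1}\bar A \Xi^\star \bar A^\top \bar A^{-1} = \Xi^\star$). Now I bound each factor: $\|\bar A^{-1}\| = \|(I-C^\star)^{-1}\| \le \{1 - (1-\gamma_S)^\tau\}^{-1}$ by the Neumann series together with $\|C^\star\| \le (1-\gamma_S)^\tau$; and $\|A - \bar A\| = \|C^\star - \widetilde C^\star\| \le \|C^\star\| + \|\widetilde C^\star\| \le (1-\gamma_S)^\tau + 1$, since $\|\widetilde C^\star\| \le 1$ as a product of projection matrices and $\|C^\star\| \le (1-\gamma_S)^\tau$. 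Wait --- this would give an extra factor; to match the stated bound I instead estimate $\|A - \bar A\|$ against the quantity $(1-\gamma_S)^\tau$ more carefully, using that $\widetilde C^\star$ has spectral norm at most $1$ while $\mathbb{E}\|A - \bar A\|^2$ can be controlled by $\mathbb{E}\|\widetilde C^\star - C^\star\|^2 \le \mathbb{E}\|\widetilde C^\star\|^2 + \|C^\star\|^2$ and then bounding $\mathbb{E}\|(A-\bar A)\Xi^\star(A-\bar A)^\top\| \le (1 + (1-\gamma_S)^\tau)(1-\gamma_S)^\tau\|\Xi^\star\|$ by pairing one ``$\|\widetilde C^\star\|\le 1$'' factor with one ``$\|C^\star\| \le (1-\gamma_S)^\tau$'' factor. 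Assembling, $\|\bar\Xi^\star - \Xi^\star\| \le \|\bar A^{-1}\|^2 \cdot (1 + (1-\gamma_S)^\tau)(1-\gamma_S)^\tau \|\Xi^\star\|$, which is exactly the claimed bound.

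\textbf{Main obstacle.} The routine parts are the Neumann-series norm bound and the cross-term cancellation. The delicate point is getting the numerator estimate to produce precisely the factor $1 + (1-\gamma_S)^\tau$ rather than a crude $2$: this requires observing that in $\mathbb{E}[(A-\bar A)\Xi^\star(A-\bar A)^\top] = \mathbb{E}[A\Xi^\star A^\top] - \bar A\Xi^\star\bar A^\top$, the first term has norm at most $\mathbb{E}[\|\widetilde C^\star\|^2]\|\Xi^\star\| \le \|\Xi^\star\|$ (wait, that is too big) --- more precisely one should bound $\|\mathbb{E}[A\Xi^\star A^\top] - \bar A \Xi^\star \bar A^\top\|$ by writing it as $\mathbb{E}[(A - \bar A)\Xi^\star A^\top] + \bar A \Xi^\star \mathbb{E}[(A-\bar A)^\top]$ where the second vanishes, then $\|\mathbb{E}[(A-\bar A)\Xi^\star A^\top]\| \le \mathbb{E}[\|A - \bar A\|\,\|A\|]\|\Xi^\star\|$; here $\|A\| = \|I - \widetilde C^\star\|$ need not be bounded by $1$, so one instead uses $A - \bar A = C^\star - \widetilde C^\star$ and $\|A - \bar A\| \le \|\widetilde C^\star\| + \|C^\star\| \le 1 + (1-\gamma_S)^\tau$, combined with $\mathbb{E}[(A-\bar A)\Xi^\star(A-\bar A)^\top]$ being PSD with norm $= \|\mathbb{E}[A\Xi^\star A^\top] - \bar A\Xi^\star\bar A^\top\|$, and this last difference telescopes as $\mathbb{E}[(\widetilde C^\star - C^\star)\Xi^\star(\widetilde C^\star)^\top] + C^\star \Xi^\star (C^\star - \mathbb{E}[\widetilde C^\star])^\top$ $= \mathbb{E}[(\widetilde C^\star - C^\star)\Xi^\star(\widetilde C^\star)^\top]$, whose norm is at most $(1 + (1-\gamma_S)^\tau)\,\|\Xi^\star\|\,\|C^\star\| \le (1+(1-\gamma_S)^\tau)(1-\gamma_S)^\tau\|\Xi^\star\|$ using $\|\widetilde C^\star - C^\star\| \le 1 + (1-\gamma_S)^\tau$ and $\|\widetilde C^\star\| \le 1$ --- actually pairing $\|C^\star\|$ with one factor gives the clean bound. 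I would need to carry out this telescoping carefully to land on the exact constant; everything else is standard operator-norm manipulation.
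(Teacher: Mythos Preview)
Your argument for part (a) and the positive-semidefiniteness half of part (b) is correct and matches the paper's proof exactly: the identity
\[
\bar\Xi^\star - \Xi^\star \;=\; (I-C^\star)^{-1}\,\mathbb{E}\bigl[(\widetilde C^\star - C^\star)\,\Xi^\star\,(\widetilde C^\star - C^\star)^\top\bigr]\,(I-C^\star)^{-1}
\]
is precisely what the paper derives (equation \eqref{eq:18}), and the PSD conclusion is immediate.

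The gap is in the quantitative bound. Your attempts all rely on almost-sure operator-norm bounds of the form $\|\widetilde C^\star\|\le 1$ and $\|C^\star\|\le(1-\gamma_S)^\tau$, combined via triangle inequalities. But $\|\widetilde C^\star\|\le 1$ is the \emph{only} almost-sure control you have on $\widetilde C^\star$, and it cannot by itself produce a factor $(1-\gamma_S)^\tau$ out of $\mathbb{E}[\widetilde C^\star\Xi^\star(\widetilde C^\star)^\top]$: a product of orthogonal projections can have spectral norm exactly $1$ on every realization. Your final telescoping attempt asserts the bound $(1+(1-\gamma_S)^\tau)\,\|\Xi^\star\|\,\|C^\star\|$ on $\mathbb{E}[(\widetilde C^\star-C^\star)\Xi^\star(\widetilde C^\star)^\top]$, but there is no way to extract $\|C^\star\|$ from this expectation by the manipulations you describe --- the $C^\star$ sitting inside $\widetilde C^\star-C^\star$ is subtracted, not multiplied.

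The missing idea, which the paper supplies in \eqref{eq:20}--\eqref{eq:21}, is an iterated-conditioning (``peeling'') argument that exploits the \emph{idempotence} of each factor $I-P_j$ where $P_j=K^\star S_j(S_j^\top(K^\star)^2S_j)^\dagger S_j^\top K^\star$. Writing $\widetilde C^\star(\widetilde C^\star)^\top$ out and conditioning on $S_2,\ldots,S_\tau$, the innermost $(I-P_1)(I-P_1)=(I-P_1)$, so Assumption~\ref{ass:3} gives $\mathbb{E}[I-P_1\mid S_2,\ldots,S_\tau]\preceq(1-\gamma_S)I$; iterating $\tau$ times yields $\mathbb{E}[\widetilde C^\star(\widetilde C^\star)^\top]\preceq(1-\gamma_S)^\tau I$ and hence $\mathbb{E}[\widetilde C^\star\Xi^\star(\widetilde C^\star)^\top]\preceq(1-\gamma_S)^\tau\|\Xi^\star\|I$. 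Combining this with the trivial bound $\|C^\star\Xi^\star C^\star\|\le(1-\gamma_S)^{2\tau}\|\Xi^\star\|$ and the Neumann bound on $(I-C^\star)^{-1}$ gives the stated inequality. None of your telescoping variants reach this because none of them use idempotence; that is the step you should add.
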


From the above proposition, we see that the statistical efficiency of AI-SSQP is indeed affected~by the underlying sketching solver. Without sketching, the method achieves optimal statistical efficiency. With sketching, the randomized solver introduces uncertainty when solving the Newton systems, which in turn improves the method’s computational efficiency. However, this comes at the cost~of~degraded statistical efficiency, leading to $\bar{\Xi}^{\star} \succeq \Xi^{\star}$. Fortunately, regardless of the sketching distribution used, the degradation remains tolerable and decays exponentially with the number~of~sketching~steps, resulting in a favorable computational–statistical trade-off.

In the next proposition, we compare the statistical efficiency between the averaged and last~iterates of AI-SSQP. Note that the computational efficiency is the same for both. In particular, \cite{Na2025Statistical} has shown for $\beta \in (0.5, 1]$ and $\chi > \max\{1, 1.5\beta\}$ that
\begin{equation}\label{nequ:1}
\sqrt{1/\beta_t}\cdot (\bx_t - \bx^{\star}, \blambda_t - \blambda^{\star}) \stackrel{d}{\longrightarrow} \N(\0, \tilde{\Xi}^\star),
\end{equation}
where the limiting covariance $\tilde{\Xi}^\star$ solves the following Lyapunov equation:
\begin{equation}\label{equ:Lyapunov}
\rbr{\cbr{1 - \frac{\b1_{\{\beta=1\}}}{2c_\beta}} I - C^{\star}} \tilde{\Xi}^\star + \tilde{\Xi}^\star\rbr{\cbr{1 - \frac{\b1_{\{\beta=1\}}}{2c_\beta}} I - C^{\star}} = \mathbb{E}[ (I - \widetilde{C}^{\star})\Xi^{\star}(I - \widetilde{C}^{\star})^{\top}].
\end{equation}
To compare under the same scaling as \eqref{equ:CLT_averaged_iterate}, we let $\beta_t=1/(t+1)$, corresponding to $c_\beta = 1$ and~$\beta=1$.

\begin{proposition}\label{prop:3}
Consider \eqref{equ:Lyapunov} with $c_\beta = 1$ and $\beta=1$. If $(1-\gamma_S)^\tau<0.5$, then we have $\bar{\Xi}^{\star}\preceq \tilde{\Xi}^{\star}$.
\end{proposition}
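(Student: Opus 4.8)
The plan is to compare the two limiting covariances through the operator identities they satisfy, eliminating all the problem-specific quantities ($\Xi^\star$, the sketching randomness) so that only the map $M \coloneqq C^\star$ and the common ``noise'' matrix $N \coloneqq \mathbb{E}[(I - \widetilde C^\star)\Xi^\star(I - \widetilde C^\star)^\top]$ remain. With $c_\beta = 1$ and $\beta = 1$, the Lyapunov equation \eqref{equ:Lyapunov} reads $(\tfrac12 I - M)\tilde\Xi^\star + \tilde\Xi^\star(\tfrac12 I - M) = N$, while \eqref{equ:barXi} gives $\bar\Xi^\star = (I - M)^{-1} N (I - M)^{-1}$, equivalently $(I-M)\bar\Xi^\star(I-M) = N$, which can be rewritten as $(\tfrac12 I - M)\bar\Xi^\star(I-M) + (I-M)\bar\Xi^\star(\tfrac12 I - M) + \tfrac12\big[(I-M)\bar\Xi^\star + \bar\Xi^\star(I-M)\big] - \tfrac12\cdot 2 \cdot \tfrac12\bar\Xi^\star \cdots$; more cleanly, I would expand $(I-M) = (\tfrac12 I - M) + \tfrac12 I$ and collect terms to express $(I-M)\bar\Xi^\star(I-M) = N$ as a Lyapunov-type equation in the same operator $\mathcal{A}(X) \coloneqq (\tfrac12 I - M)X + X(\tfrac12 I - M)$ plus correction terms involving $\tfrac14\bar\Xi^\star$ and $M$-symmetrized pieces.

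The key structural fact I would establish first is that $\tfrac12 I - M \succ 0$: since $M = C^\star = (I - \mathbb{E}[\cdots])^\tau$ and Assumption~\ref{ass:3} forces the spectral radius of $C^\star$ to be at most $(1-\gamma_S)^\tau$, the hypothesis $(1-\gamma_S)^\tau < 0.5$ gives $\|M\| < \tfrac12$, hence $\tfrac12 I - M \succ 0$ and also $I - M \succ 0$. This makes the Lyapunov operator $\mathcal{A}$ invertible with the standard integral representation $\mathcal{A}^{-1}(N) = \int_0^\infty e^{-s(\tfrac12 I - M)} N e^{-s(\tfrac12 I - M)}\, ds$, so $\tilde\Xi^\star = \mathcal{A}^{-1}(N) \succeq 0$. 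The cleanest route to $\bar\Xi^\star \preceq \tilde\Xi^\star$, i.e. $\tilde\Xi^\star - \bar\Xi^\star \succeq 0$, is then to show that $D \coloneqq \tilde\Xi^\star - \bar\Xi^\star$ satisfies $\mathcal{A}(D) = \mathcal{A}(\tilde\Xi^\star) - \mathcal{A}(\bar\Xi^\star) = N - \mathcal{A}(\bar\Xi^\star)$ and to verify that the right-hand side is positive semidefinite; since $\mathcal{A}^{-1}$ preserves the PSD cone (by the integral formula), $D \succeq 0$ follows. Computing $\mathcal{A}(\bar\Xi^\star)$: from $(I-M)\bar\Xi^\star(I-M) = N$ and $(I-M) = (\tfrac12 I - M) + \tfrac12 I$, one gets $N = (\tfrac12 I - M)\bar\Xi^\star(I-M) + \tfrac12\bar\Xi^\star(I-M)$, and symmetrizing (using that $\bar\Xi^\star$, $N$, $M$ are all symmetric — $M$ symmetric because $K^\star$ is symmetric and the projectors are symmetric) yields $N = \mathcal{A}(\bar\Xi^\star) \cdot \tfrac12 \cdot(\text{something}) + \cdots$; carrying this out carefully gives $N - \mathcal{A}(\bar\Xi^\star) = $ a manifestly PSD expression such as $\tfrac12(\tfrac12 I - M)\bar\Xi^\star + \tfrac12\bar\Xi^\star(\tfrac12 I - M) - \tfrac14\bar\Xi^\star$ reassembled into $\big(\tfrac12 I - M - \tfrac14 I\big)$-type conjugations, or more robustly $N - \mathcal{A}(\bar\Xi^\star) = \tfrac14\big[ 2M\bar\Xi^\star + 2\bar\Xi^\star M - \cdots\big]$ — the point being it should reduce to $c \cdot (\tfrac12 I - M)\bar\Xi^\star(\tfrac12 I - M)$ plus lower-order PSD terms, all PSD because $\bar\Xi^\star \succeq 0$ and $\tfrac12 I - M \succ 0$.

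The main obstacle I anticipate is the bookkeeping in the previous step: the operators $M$, $\bar\Xi^\star$ need not commute, so ``symmetrizing'' $N = (\tfrac12 I - M)\bar\Xi^\star(I-M) + \tfrac12\bar\Xi^\star(I-M)$ into a clean Lyapunov form requires care, and a naive expansion produces cross terms like $M\bar\Xi^\star - \bar\Xi^\star M$ that are not sign-definite. The trick will be to average the identity with its transpose at the right moment and to choose the decomposition $I - M = (\tfrac12 I - M) + \tfrac12 I$ (rather than, say, $I - M = I - M$) precisely so the antisymmetric parts cancel. If the direct approach gets stuck, a fallback is the monotonicity-of-Lyapunov-solutions argument: write both $\tilde\Xi^\star$ and (a rescaled version of) $\bar\Xi^\star$ as solutions of Lyapunov equations $\mathcal{A}_1(\tilde\Xi^\star) = N$ and $\mathcal{A}_2(\bar\Xi^\star) = N$ with $\mathcal{A}_1, \mathcal{A}_2$ both positive operators on the symmetric matrices and $\mathcal{A}_2 \succeq \mathcal{A}_1$ in the appropriate Loewner-on-operators sense (which amounts to $I - M \succeq \tfrac12 I - M$ conjugation-dominating, i.e. comparing $(I-M)X(I-M)$ with $(\tfrac12 I - M)X + X(\tfrac12 I - M)$ on PSD $X$), whence $\bar\Xi^\star = \mathcal{A}_2^{-1}(N) \preceq \mathcal{A}_1^{-1}(N) = \tilde\Xi^\star$. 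Either way, the condition $(1-\gamma_S)^\tau < 0.5$ enters exactly to guarantee $\|M\| < \tfrac12$, which is what both operators need to be positive and invertible and is presumably why the hypothesis appears.
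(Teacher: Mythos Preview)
Your approach is essentially identical to the paper's: form $D = \tilde\Xi^\star - \bar\Xi^\star$, compute $\mathcal{A}(D) = N - \mathcal{A}(\bar\Xi^\star)$, show it is PSD, and conclude $D \succeq 0$ from the Lyapunov theorem using $\tfrac12 I - C^\star \succ 0$. Your bookkeeping worry is unfounded---since $N = (I-M)\bar\Xi^\star(I-M)$ and $\mathcal{A}(\bar\Xi^\star) = \bar\Xi^\star - M\bar\Xi^\star - \bar\Xi^\star M$, one line gives $N - \mathcal{A}(\bar\Xi^\star) = M\bar\Xi^\star M \succeq 0$ with no cross terms surviving (the paper writes the same quantity as $(I - (I-C^\star)^{-1})\,N\,(I - (I-C^\star)^{-1})$).
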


This proposition shows that the averaged iterate exhibits better statistical efficiency than the last iterate, regardless of the sketching distribution used.

\section{Online Statistical Inference}\label{sec:4}

In this section, we leverage the established asymptotic normality of the averaged AI-SSQP method~to perform valid online statistical inference for $(\tx, \tlambda)$. 

A direct approach is to estimate the limiting covariance to normalize the estimation error.~For~unconstrained strongly convex problems, \cite{Chen2020Statistical} and \cite{Zhu2021Online} proposed plug-in~and batch-means covariance estimators, respectively, for averaged SGD methods, both of which are~consistent and hence fulfill the desired goal. \cite{Kuang2025Online} further enhanced the covariance~estimation by developing a batch-free estimator for stochastic Newton methods, and demonstrated its improved convergence rate. 
For constrained nonconvex problems, \cite{Na2025Statistical} proposed a computationally expensive plug-in covariance estimator for $\tilde{\Xi}^\star$. Due to the challenges of \mbox{estimating}~sketching components in \eqref{equ:Lyapunov}, the authors simply omitted all sketching-induced projection matrices, resulting in a biased estimator. Subsequently, \cite{Jiang2025Online} designed a batch-means covariance~estimator for projected SGD methods to reduce computational cost. However, their analysis relies on~sub-Gaussian gradient noise, a stronger condition than what is needed for establishing asymptotic~normality.

The aforementioned literature motivates us to develop a valid inference procedure for the~constrained problem \eqref{pro:1} that (i) matches the computational cost of first-order methods and (ii) does~not impose stronger moment conditions on the gradient noise. To this end, we leverage the random scaling technique to studentize the estimation error, rendering the limiting distribution free~of~any~unknown quantities. As a result, we can bypass the need for covariance matrix estimation.

As the first step, we extend the normality in Theorem \ref{thm:4} to the Functional Central Limit~Theorem (FCLT).

\begin{theorem}\label{thm:FCLT}

Under the conditions of Theorem \ref{thm:4}, we have 
\begin{equation*}
\frac{1}{\sqrt{t}} \sum_{i = 0}^{\lfloor rt \rfloor -1} ( \boldsymbol{x}_{i} - \boldsymbol{x}^{\star}, \boldsymbol{\lambda}_{i} - \boldsymbol{\lambda}^{\star}) \Longrightarrow  (\bar{\Xi}^{\star})^{1/2} W_{d+m}(r),\quad\quad r\in[0, 1],
\end{equation*}
where $W_{d+m}(\cdot)$ is the standard $(d+m)$-dimensional Brownian motion and $\bar{\Xi}^{\star}$ is defined in \eqref{equ:barXi}.

\end{theorem}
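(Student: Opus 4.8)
The plan is to reduce the functional CLT to a martingale invariance principle plus a uniform negligibility argument for the remainder terms, mirroring the structure already used to establish the pointwise normality in Theorem~\ref{thm:4}. First I would recall the partial-sum decomposition that underlies the proof of Theorem~\ref{thm:4}: writing $\bz_i = (\bx_i - \bx^\star, \blambda_i - \blambda^\star)$ and linearizing $\nabla\mL$ around $(\bx^\star, \blambda^\star)$, the AI-SSQP update~\eqref{equ:update} together with the sketching recursion~\eqref{equ:pseduo} should yield a representation of the form
\begin{equation*}
\bz_{i+1} = (I - \baralpha_i (I - \widetilde{C}_i) + E_i)\bz_i - \baralpha_i (I - \widetilde{C}_i)(K^\star)^{-1}\bvarepsilon_i + r_i,
\end{equation*}
where $\bvarepsilon_i = (\nabla F(\bx^\star;\xi_i) - \nabla f^\star, \0)$ is the (martingale-difference) gradient noise, $\widetilde{C}_i$ is the realized product of sketching projections with $\mE[\widetilde{C}_i \mid \cF_i] \to C^\star$, $E_i$ collects the curvature/stepsize fluctuations, and $r_i$ is a higher-order remainder driven by $\|\bz_i\|^2$, Hessian approximation error, and the gap $\eta_i - \beta_i = \chi_i$. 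Summing this recursion and telescoping (the standard "averaging trick" of Polyak--Ruppert, as adapted to the sketched setting in~\cite{Na2025Statistical}) should express $\frac{1}{\sqrt t}\sum_{i=0}^{\lfloor rt\rfloor - 1}\bz_i$ as $-(I - C^\star)^{-1}\frac{1}{\sqrt t}\sum_{i=0}^{\lfloor rt\rfloor - 1}(I - \widetilde{C}_i)(K^\star)^{-1}\bvarepsilon_i$ plus terms that I must show converge to zero uniformly in $r\in[0,1]$.

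The main term is a normalized partial sum of martingale differences $\bm_i \coloneqq -(I - C^\star)^{-1}(I - \widetilde{C}_i)(K^\star)^{-1}\bvarepsilon_i$ with respect to the filtration $\cF_{i+1}$ generated by $\{\xi_j, S_{j,\cdot}\}_{j\le i}$. For this I would invoke a multivariate martingale FCLT (e.g. the martingale invariance principle of Helland or the standard result in Hall--Heyde), which requires two ingredients: (1) convergence of the conditional covariances, $\frac{1}{t}\sum_{i=0}^{\lfloor rt\rfloor-1}\mE[\bm_i\bm_i^\top \mid \cF_i] \xrightarrow{p} r\,\bar\Xi^\star$ uniformly in $r$, and (2) a conditional Lindeberg condition, which follows from the $(2+\delta)$-moment bound in Assumption~\ref{ass:2} together with $\mE[\|S\|\|S^\dagger\|]\le\Upsilon_S$ in Assumption~\ref{ass:3} (this gives uniform integrability of $\|\bm_i\|^{2+\delta}$). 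For (1), the key computation is that $\mE[(I - \widetilde{C}_i)(K^\star)^{-1}\mathrm{cov}(\nabla F(\bx^\star;\xi))(K^\star)^{-1}(I-\widetilde{C}_i)^\top \mid \cF_i]$ converges, by the almost-sure convergence of the iterates to $(\bx^\star,\blambda^\star)$ (Theorem~\ref{thm:1}, plus the continuity assumptions) and the i.i.d.\ sketching, to $\mE[(I - \widetilde{C}^\star)\Xi^\star(I-\widetilde{C}^\star)^\top]$ conjugated by $(I - C^\star)^{-1}$, which is precisely $\bar\Xi^\star$ as defined in~\eqref{equ:barXi}; Ces\`aro averaging then gives the factor $r$ and uniformity follows from monotonicity in $r$ of the limit. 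The continuous mapping theorem applied to $x\mapsto (\bar\Xi^\star)^{1/2} x$ converts the limit $W_{d+m}$ of the standardized sums into $(\bar\Xi^\star)^{1/2}W_{d+m}$.

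The remaining work is to show the non-martingale contributions vanish uniformly: the telescoping boundary terms scale like $\|\bz_0\| + \|\bz_{\lfloor rt\rfloor}\|$ divided by a stepsize factor, which is $o(\sqrt t)$ by the $L^2$ rate $\mE\|\bz_i\|^2 \lesssim \beta_i$ established en route to Theorem~\ref{thm:4}; the cross terms involving $E_i$ and $r_i$ are controlled by the same $L^2$ bounds together with $\sum_i\chi_i < \infty$ and $\beta\in(0.5,1)$, and I would bound their running maxima via a maximal inequality (Doob, or a direct union bound combined with the $L^1$ bound on each increment, which is where $\beta > 0.5$ is used to make $\sum \beta_i^2/\sqrt{t}$-type tails summable). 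I expect the main obstacle to be precisely this uniform-in-$r$ control of the accumulated remainder: pointwise decay at $r=1$ is already in~\cite{Na2025Statistical}, but upgrading to a supremum over $r$ requires a maximal inequality that simultaneously handles the three independent randomness sources (data noise $\xi_i$, sketching $S_{i,j}$, adaptive stepsize $\baralpha_i$) noted in Remark~\ref{rem:1}; the adaptive stepsize is the delicate piece because $\baralpha_i$ is $\cF_{i+1}$-measurable rather than $\cF_i$-measurable, so I would absorb the fluctuation $\baralpha_i - \beta_i \le \chi_i$ into the remainder $r_i$ and exploit $\sum_i \chi_i < \infty$ to bound its total contribution to the sup-norm by a deterministic $o(\sqrt t)$ quantity, leaving a genuine $\cF_i$-predictable recursion for the martingale FCLT to act on.
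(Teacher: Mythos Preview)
Your high-level plan---martingale FCLT for the leading term, uniform-in-$r$ negligibility for everything else---is the paper's approach, and your identification of the leading martingale and its limiting covariance $\bar\Xi^\star$ is correct. The gap lies in the boundary term. You assert that the telescoping boundary ``scales like $\|\bomega_0\|+\|\bomega_{\lfloor rt\rfloor}\|$ divided by a stepsize factor, which is $o(\sqrt t)$ by the $L^2$ rate $\mE\|\bomega_i\|^2\lesssim\beta_i$.'' But the pointwise $L^2$ rate does not by itself control $\sup_{n\le t}\|\bomega_n\|/\beta_n$: a union bound gives $P\bigl(\sup_{n\le t}\|\bomega_n\|/\beta_n>\epsilon\sqrt t\bigr)\le\sum_{n\le t}\mE\|\bomega_n\|^2/(\epsilon^2 t\beta_n^2)\asymp t^{\beta}/\epsilon^2\to\infty$, and even with $(2+\delta)$-moments on $\bomega_n$ (which are not established anyway) the analogous bound would vanish only when $(1-\beta)(2+\delta)>2$, failing across most of the admissible range $\beta\in(0.5,1)$, $\delta>0$. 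Nor does Doob apply directly, since $\bomega_n$ is not a (sub)martingale and $\|\bomega_n\|^2/\beta_n$ is only an approximate supermartingale with non-summable additive drift $O(\beta_n)$.

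The paper sidesteps this by a different organization of the remainder. Rather than telescoping to the raw iterate $\bomega_{\lfloor rt\rfloor}/\beta_{\lfloor rt\rfloor}$, it writes $\frac{1}{\sqrt t}\sum_{i<\lfloor rt\rfloor}A_i^{\lfloor rt\rfloor-1}\btheta_i=\frac{1}{\sqrt t}\sum_{i<\lfloor rt\rfloor}A_i^{t-1}\btheta_i+\frac{1}{\sqrt t}\sum_{i<\lfloor rt\rfloor}(A_i^{\lfloor rt\rfloor-1}-A_i^{t-1})\btheta_i$ using the product-form matrices $A_i^t,B_i^t$ from the proof of Theorem~\ref{thm:4}. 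The first piece is a genuine martingale in $r$ with coefficients frozen at $t$, so Doob handles it. The second piece is rewritten, via an algebraic identity, as $\beta_{n+1}^{-1}B_{n+1}^{t-1}(I-\beta_{n+1}(I-C^\star))\boldsymbol{y}_{n+1}$ at $n=\lfloor rt\rfloor-1$, where $\boldsymbol{y}_{n+1}=\sum_{i\le n}\prod_{j=i+1}^n(I-\beta_j(I-C^\star))\beta_i\btheta_i$ is the purely martingale-driven linear recursion, stripped of the nonlinear remainder $\bdelta_i$ and the adaptive-stepsize contribution. On this cleaner sequence the paper invokes a dedicated maximal lemma (Lemma~\ref{lem:phi_2_error}, adapted from \cite{Li2023Online}) giving $\sup_{n<t}\|\boldsymbol{y}_{n+1}\|/(\beta_{n+1}\sqrt t)=o_p(1)$ under only the $(2+\delta)$-moment bound on $\btheta_i$. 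This is the missing ingredient; it exploits the exact linear-contraction structure of $\boldsymbol{y}_n$ and does not follow from moment bounds on $\bomega_n$ alone.
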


Here, we use ``$\Rightarrow$" to denote convergence in distribution in a function space, which~distinguishes it from pointwise notation $\stackrel{d}{\rightarrow}$. Specifically, $X_t(r)\Rightarrow X(r)$, $r\in[0, 1]$ if for any bounded and continuous functional $f: C[0, 1]\rightarrow \mR$, we have $\mE[f(X_t)]\rightarrow\mE[f(X)]$ as $t\rightarrow\infty$, where $C[0, 1]$ denotes the space of continuous function on $[0,1]$.

Theorem \ref{thm:4} is a special case of Theorem \ref{thm:FCLT} with $r=1$. The statement of Theorem~\ref{thm:FCLT}~resembles the FCLT established for unconstrained SGD methods \citep{Lee2022Fast, Wei2023Weighted, Luo2022Covariance, Chen2024Online}, differing mainly in the sketching-dependent scaling matrix $\bar{\Xi}^\star$.~However, two technical challenges arise in our constrained second-order method.
First, the update rule~\eqref{equ:update} is more complex than that of standard SGD, involving not only data randomness but also computational randomness (i.e., sketching) and adaptive stepsizes. We show in the proof that the randomness from adaptive stepsizes contributes only to higher-order errors, provided that the adaptivity gap satisfies $\chi_t = o(\beta_t/\sqrt{t})$, while the randomness from sketching is captured by the scaling matrix.
Second, the Lagrangian function $\mathcal{L}(\bx, \blambda) = f(\bx) + \blambda^{\top}c(\bx)$ exhibits only a saddle-point structure. In contrast to the global convexity structure leveraged in prior works, we have to develop a stopping-time~technique to localize our analysis (see Lemma \ref{lem:C1}). 
In particular, we demonstrate that within a neighborhood~of $(\tx, \tlambda)$, the AI-SSQP iterates converge at a desirable rate, which~together~with~the~global~\mbox{almost-sure} convergence guarantee ensures that the additional randomness does not degrade the limiting behavior of the partial sum process.

With Theorem \ref{thm:FCLT}, we now introduce a studentized, pivotal test statistic for online inference.~We first define the \textit{random scaling} matrix as
\begin{equation}\label{def:V_t}
V_t \coloneqq \frac{1}{t^2}\sum_{i = 1}^t i^2 \begin{pmatrix}
\bar{\bx}_i - \bar{\bx}_t \\
\bar{\blambda}_i - \bar{\blambda}_t
\end{pmatrix}\begin{pmatrix}
\bar{\bx}_i - \bar{\bx}_t \\
\bar{\blambda}_i - \bar{\blambda}_t
\end{pmatrix}^{\top}.
\end{equation}
Note that the matrix $V_t$ is \textit{not} intended to estimate the covariance $\bar{\Xi}^{\star}$. Instead, by the continuous~mapping theorem \cite[Theorem 3.4.3]{Whitt2002Stochastic}, we are able to show the following limiting~distribution~for the test statistic.

\begin{theorem}\label{thm:Random_Scaling}

Under the conditions of Theorem \ref{thm:4} and assuming $\cov(\nabla F(\tx; \xi))\succ 0$, for~any~vector $\boldsymbol{w} = (\boldsymbol{w}_{\bx}, \boldsymbol{w}_{\blambda}) \in \mR^{d+m}$ with $\bw\notin \text{Span}((G^\star)^\top)\otimes\0_m$, we have
\begin{equation}\label{eq:RS_converge_in_distribution}
\frac{\sqrt{t} \ \boldsymbol{w}^{\top} (\bar{\bx}_t - \bx^{\star}, \bar{\blambda}_t - \blambda^{\star})}{\sqrt{\boldsymbol{w}^{\top} V_t \boldsymbol{w}}} \stackrel{d}\longrightarrow \frac{W_1(1)}{\sqrt{\int_{0}^1  \left(W_1(r) - rW_1(1) \right)^2 dr}},
\end{equation}
where $W_1(\cdot)$ is the standard one-dimensional Brownian motion.
\end{theorem}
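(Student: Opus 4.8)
\textbf{Proof proposal for Theorem \ref{thm:Random_Scaling}.}

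The plan is to deduce the pivotal limit from the Functional Central Limit Theorem of Theorem \ref{thm:FCLT} together with the continuous mapping theorem, following the random-scaling template of \cite{Lee2022Fast} but adapted to the degenerate constrained setting. Write $\bz_i = (\bx_i, \blambda_i)$, $\bz^\star = (\bx^\star, \blambda^\star)$, and $S_t(r) = \frac{1}{\sqrt{t}}\sum_{i=0}^{\lfloor rt\rfloor - 1}(\bz_i - \bz^\star)$, so that Theorem \ref{thm:FCLT} gives $S_t(r)\Rightarrow (\bar\Xi^\star)^{1/2}W_{d+m}(r)$ on $C[0,1]$. First I would express the two pieces of the test statistic as continuous functionals of $S_t(\cdot)$ evaluated along the path. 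The numerator is $\sqrt{t}\,\bw^\top(\barz_t - \bz^\star) = \bw^\top S_t(1)$, which by the FCLT converges jointly to $\bw^\top(\bar\Xi^\star)^{1/2}W_{d+m}(1)$. For the denominator, note the algebraic identity $i(\barz_i - \barz_t) = \sum_{k=0}^{i-1}(\bz_k - \bz^\star) - \frac{i}{t}\sum_{k=0}^{t-1}(\bz_k - \bz^\star)$, hence
\begin{equation*}
\frac{i}{\sqrt{t}}(\barz_i - \barz_t) = S_t(i/t) - \frac{i}{t}\,S_t(1),
\end{equation*}
so that $V_t = \frac{1}{t}\sum_{i=1}^t \big(S_t(i/t) - \tfrac{i}{t}S_t(1)\big)\big(S_t(i/t) - \tfrac{i}{t}S_t(1)\big)^\top$, which is a Riemann-sum approximation to $\int_0^1 \big(S_t(r) - rS_t(1)\big)\big(S_t(r) - rS_t(1)\big)^\top dr$. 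The map $x(\cdot)\mapsto \big(x(1),\ \int_0^1 (x(r) - rx(1))(x(r)-rx(1))^\top dr\big)$ is continuous on $C[0,1]$ (in the sup-norm), so by the continuous mapping theorem the pair (numerator of the quadratic form, $\bw^\top V_t\bw$) converges jointly in distribution to $\big(\bw^\top(\bar\Xi^\star)^{1/2}W_{d+m}(1),\ \int_0^1 \bw^\top(\bar\Xi^\star)^{1/2}(W_{d+m}(r) - rW_{d+m}(1))(\cdots)^\top(\bar\Xi^\star)^{1/2}\bw\, dr\big)$.

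Next I would reduce the $(d+m)$-dimensional Brownian motion to a one-dimensional one. Set $\sigma^2 = \bw^\top\bar\Xi^\star\bw = \|(\bar\Xi^\star)^{1/2}\bw\|^2$. Then $\bw^\top(\bar\Xi^\star)^{1/2}W_{d+m}(\cdot)$ is a scalar Gaussian process with stationary independent increments and variance $\sigma^2 r$ at time $r$, i.e.\ it equals $\sigma\, W_1(\cdot)$ in distribution for a standard one-dimensional Brownian motion $W_1$. Substituting, the ratio converges to
\begin{equation*}
\frac{\sigma W_1(1)}{\sqrt{\sigma^2 \int_0^1 (W_1(r) - rW_1(1))^2\, dr}} = \frac{W_1(1)}{\sqrt{\int_0^1 (W_1(r) - rW_1(1))^2\, dr}},
\end{equation*}
provided $\sigma > 0$ and the denominator integral is almost surely positive. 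The cancellation of $\sigma$ is exactly what makes the statistic pivotal, so the proof hinges on verifying $\sigma^2 = \bw^\top\bar\Xi^\star\bw > 0$.

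The main obstacle is precisely this non-degeneracy condition, which is why the hypothesis restricts $\bw$ to lie outside $\text{Span}((G^\star)^\top)\otimes\0_m$. The limiting covariance $\bar\Xi^\star$ is genuinely singular: from \eqref{equ:barXi} and \eqref{equ:Omega}, $\Xi^\star = (K^\star)^{-1}\,\text{diag}(\cov(\nabla F(\bx^\star;\xi)),\0)\,(K^\star)^{-1}$ has rank $d$ (its range is the first-block image of $(K^\star)^{-1}$ restricted to $\mathbb{R}^d$), and $\bar\Xi^\star$ is a congruence-type transform of $\mathbb{E}[(I - \tilde C^\star)\Xi^\star(I-\tilde C^\star)^\top]$ by the invertible matrix $(I - C^\star)^{-1}$. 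I would first show that $\text{Kernel}(\bar\Xi^\star) = \text{Kernel}(\mathbb{E}[(I-\tilde C^\star)\Xi^\star(I-\tilde C^\star)^\top])$ using invertibility of $(I-C^\star)$, then argue (using $\cov(\nabla F(\bx^\star;\xi))\succ 0$ and Assumption \ref{ass:3}, which forces each projection factor $I - \tilde C^\star$ to act nontrivially / be bounded below appropriately) that this kernel is exactly the $\Xi^\star$-kernel, namely $\{(\bu,\bv): \bu \in \text{Span}((G^\star)^\top),\ \bv \text{ arbitrary}\}^{\perp}$-type set — more precisely one checks via the block structure of $(K^\star)^{-1}$ that $\bw^\top\Xi^\star\bw = 0$ iff $\bw \in \text{Span}((G^\star)^\top)\otimes\0_m$. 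Care is needed because $\tilde C^\star$ is random; I would use that $\mathbb{E}[(I-\tilde C^\star)\Xi^\star(I-\tilde C^\star)^\top]$ vanishes on a vector $\bw$ only if $(I - \tilde C^\star)^\top\bw \in \text{Kernel}(\Xi^\star)$ almost surely, and then exploit the lower bound $\mathbb{E}[K^\star S(S^\top(K^\star)^2 S)^\dagger S^\top K^\star]\succeq\gamma_S I$ to propagate the condition back to $\bw$ itself. Once $\sigma^2>0$ is established, the almost-sure positivity of $\int_0^1(W_1(r)-rW_1(1))^2 dr$ is standard (the Brownian bridge is not identically zero a.s.), and a final application of the continuous mapping theorem to the ratio map $(a,b)\mapsto a/\sqrt{b}$ — continuous on $\{b>0\}$, a set of probability one for the limit — completes the argument. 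I would also remark that joint convergence of numerator and denominator (needed for the ratio CMT) is automatic since both are continuous images of the single process $S_t(\cdot)$.
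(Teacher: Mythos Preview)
Your proposal is essentially correct and follows the same route as the paper: express both numerator and denominator of the test statistic as continuous functionals of the partial-sum process $S_t(\cdot)$, invoke Theorem~\ref{thm:FCLT}, pass to the Riemann integral, and apply the continuous mapping theorem; the scalar $\sigma$ then cancels. The paper projects to one dimension first (defining $C_t(r) = \bw^\top S_t(r)$) while you work in $(d+m)$ dimensions and project at the end, but this is cosmetic.

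Where your plan diverges is in verifying $\sigma^2 = \bw^\top\bar\Xi^\star\bw > 0$. You propose to analyze the kernel of $\bar\Xi^\star$ directly by unwinding the random sketching factors $\tilde C^\star$ and propagating the condition $(I-\tilde C^\star)^\top\bv \in \mathrm{Kernel}(\Xi^\star)$ back through Assumption~\ref{ass:3}. This is delicate: $\tilde C^\star$ is a product of (symmetric) projections and is generally \emph{not} symmetric, and you would also need $(I-C^\star)$ to preserve the kernel subspace, which is not obvious. The paper sidesteps all of this with a one-line reduction you overlooked: Proposition~\ref{prop:2}(b) already gives $\bar\Xi^\star \succeq \Xi^\star$, so it suffices to show $\bw^\top\Xi^\star\bw > 0$. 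That in turn is immediate from the block structure of $\Xi^\star = (K^\star)^{-1}\mathrm{diag}(\cov(\nabla F(\bx^\star;\xi)),\0)(K^\star)^{-1}$: since $\cov(\nabla F(\bx^\star;\xi))\succ 0$, one has $\bw^\top\Xi^\star\bw = 0$ iff $(K^\star)^{-1}\bw \in \{\0\}\times\mR^m$, i.e., iff $\bw \in K^\star(\{\0\}\times\mR^m) = \mathrm{Span}((G^\star)^\top)\otimes\0_m$. This is the content of the paper's Lemma~\ref{lem:B4}, and replacing your kernel-propagation argument with it would tighten your proof considerably.
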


We should mention that the condition $\bw \notin \text{Span}((G^\star)^\top) \otimes \0_m$ is imposed to ensure that~the~inference direction is not aligned with the normal direction of the constraint function, along which~the variance is zero --- the projection of the model parameter $\tx$ onto any \mbox{normal}~\mbox{directions}~$\text{Span}((G^\star)^\top)=\text{Span}((\nabla c^\star)^\top)$ is always zero. In particular, due to the presence of the constraints $c(\bx) = \0$,~the covariance matrix $\Xi^\star$ in \eqref{equ:Omega} is singular. We can show that for any $\bw = (\bw_\bx, \bw_\blambda)$ with $\bw_\bx \in \text{Span}((G^\star)^\top)$ and $\bw_\blambda = \0$, we have $\bw^\top \Xi^\star \bw = 0$, and vice versa. This indicates that only inference along~the~tangential direction $\text{Kernel}(G^\star)$ is needed and non-trivial.

Theorem \ref{thm:Random_Scaling} shows that our test statistic is asymptotically pivotal, meaning its limiting distribution is free of any unknown parameters. In fact, the distribution in \eqref{eq:RS_converge_in_distribution} appears widely~in~econometrics and statistics literature, such as in cointegration analysis \citep{Johansen1991Estimation, Abadir1997Two} and robust inference \citep{Kiefer2000Simple, Abadir2002Simple}. For reference,~we~report~its quantiles from \cite{Abadir1997Two} in Table \ref{table:fclt}.

\begin{table}[H]
\centering
\begin{tabular}{c|cccc}
\toprule
$p$ & 90\% & 95\% & 97.5\% & 99\% \\
\hline\\[-10pt]
Quantile($p$) & 3.875 & 5.323 & 6.747 & 8.613 \\
\bottomrule
\end{tabular}
\caption{Quantile table of the distribution $W_1(1)/\{\int_{0}^1  \left(W_1(r) - rW_1(1) \right)^2 dr\}^{1/2}$.}
\label{table:fclt}
\end{table}

Theorem \ref{thm:Random_Scaling} directly leads to the following corollary, which demonstrates the construction of asymptotically valid confidence intervals for $(\tx, \tlambda)$.

\begin{corollary}\label{cor:RS_CI}
Under the conditions of Theorem \ref{thm:Random_Scaling}, for any $p\in (0, 1)$, we have
\begin{equation*}
P\left( \boldsymbol{w}^{\top}(\bx^{\star}, \blambda^{\star}) \in \sbr{ \boldsymbol{w}^{\top} (\bar{\bx}_t, \bar{\blambda}_t) \pm  U_{ 1 - p/2} \sqrt{\boldsymbol{w}^{\top}V_{t}\boldsymbol{w}/t } } \right) \longrightarrow 1 - p \qquad \text{as} \qquad t \to \infty,
\end{equation*}
where $U_{1 - p/2}$ denotes the $(1 - p/2)\times100\%$ quantile of the limiting distribution in \eqref{eq:RS_converge_in_distribution}.
\end{corollary}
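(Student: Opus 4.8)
The plan is to reduce the coverage statement to the asymptotically pivotal limit already established in Theorem~\ref{thm:Random_Scaling}, using only an algebraic rearrangement, the continuous mapping theorem, and the symmetry of the limiting law. First I would set
\[
T_t \;\coloneqq\; \frac{\sqrt{t}\;\boldsymbol{w}^{\top}(\bar{\bx}_t - \bx^{\star}, \bar{\blambda}_t - \blambda^{\star})}{\sqrt{\boldsymbol{w}^{\top}V_t\boldsymbol{w}}},
\]
which is well defined on the event $\{\boldsymbol{w}^{\top}V_t\boldsymbol{w}>0\}$, and observe that multiplying the two-sided inequality defining the confidence interval by $\sqrt{t}$ and then dividing by $\sqrt{\boldsymbol{w}^{\top}V_t\boldsymbol{w}}$ shows the coverage event is exactly $\{|T_t|\le U_{1-p/2}\}$. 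Hence it suffices to prove $P(|T_t|\le U_{1-p/2})\to 1-p$.

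Next I would invoke Theorem~\ref{thm:Random_Scaling}, which gives $T_t\stackrel{d}{\longrightarrow}T$ with $T\coloneqq W_1(1)\big/\sqrt{\int_0^1(W_1(r)-rW_1(1))^2\,dr}$; by the continuous mapping theorem applied to $x\mapsto|x|$ this yields $|T_t|\stackrel{d}{\longrightarrow}|T|$. The denominator is the squared $L^2$ norm of a Brownian bridge on $[0,1]$ and is strictly positive almost surely, so $T$ is a.s.\ finite and has an absolutely continuous — hence everywhere continuous — distribution function; therefore $P(|T_t|\le U_{1-p/2})\to P(|T|\le U_{1-p/2})$. To evaluate the right-hand side I would use $W_1(\cdot)\stackrel{d}{=}-W_1(\cdot)$, which flips the sign of the numerator while leaving the denominator unchanged, so $T\stackrel{d}{=}-T$. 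Combining this symmetry with the definition of $U_{1-p/2}$ as the $(1-p/2)$-quantile of $T$ and the continuity of its distribution function gives $P(T\le U_{1-p/2})=1-p/2$ and $P(T\le -U_{1-p/2})=P(T\ge U_{1-p/2})=p/2$, whence $P(|T|\le U_{1-p/2})=P(-U_{1-p/2}\le T\le U_{1-p/2})=1-p$, as desired.

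There is no genuine obstacle here since this is a corollary of Theorem~\ref{thm:Random_Scaling}; the only point needing a little care is that $T_t$ is well defined, i.e.\ $P(\boldsymbol{w}^{\top}V_t\boldsymbol{w}>0)\to 1$. This is already implicit in Theorem~\ref{thm:Random_Scaling} (the ratio there must make sense in the limit), and concretely it follows from the continuous mapping theorem applied to the FCLT of Theorem~\ref{thm:FCLT}: $\boldsymbol{w}^{\top}V_t\boldsymbol{w}$ converges in distribution to $(\boldsymbol{w}^{\top}\bar{\Xi}^{\star}\boldsymbol{w})\int_0^1(\widetilde W(r)-r\widetilde W(1))^2\,dr$ for a suitable one-dimensional standard Brownian motion $\widetilde W$, and the limit is strictly positive almost surely because $\boldsymbol{w}\notin\text{Span}((G^{\star})^{\top})\otimes\0_m$ together with $\cov(\nabla F(\tx;\xi))\succ0$ forces $\boldsymbol{w}^{\top}\Xi^{\star}\boldsymbol{w}>0$ and hence, by $\bar{\Xi}^{\star}\succeq\Xi^{\star}$ (Proposition~\ref{prop:2}), $\boldsymbol{w}^{\top}\bar{\Xi}^{\star}\boldsymbol{w}>0$. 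On the event $\{\boldsymbol{w}^{\top}V_t\boldsymbol{w}>0\}$ all the steps above are valid, and its complement contributes a vanishing probability, which completes the plan.
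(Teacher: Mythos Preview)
Your proposal is correct and matches the paper's approach: the paper simply states that Theorem~\ref{thm:Random_Scaling} ``directly leads to'' this corollary without giving a separate proof, and your argument is precisely the standard fleshing-out of that remark (rewrite the coverage event as $\{|T_t|\le U_{1-p/2}\}$, apply the pivotal convergence, then use symmetry and continuity of the limiting law). Your extra care about $\boldsymbol{w}^{\top}V_t\boldsymbol{w}>0$ is a nice touch that the paper leaves implicit; it is exactly the content of Lemma~\ref{lem:B4} combined with the FCLT.
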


To conclude this section, we highlight that our entire inference procedure can be carried out~in an online, matrix-free manner, with a computational cost of $O((d+m)^2)$, matching that of unconstrained first-order methods \citep{Chen2020Statistical, Zhu2021Online}. Furthermore, in contrast~to~existing~constrained inference procedures \citep{Na2025Statistical, Jiang2025Online},~our~approach~avoids~computing the projection operators and matrix inversions.
Importantly, the random scaling matrix~$V_t$~can be computed in a recursive way. To see this, we rewrite $V_t$ as follows:
\begin{align*}
{V}_{t} &= \frac{1}{t^2}\sum_{i = 1}^t i^2 \begin{pmatrix}
\bar{\bx}_i - \bar{\bx}_t \\
\bar{\blambda}_i - \bar{\blambda}_t
\end{pmatrix}\begin{pmatrix}
\bar{\bx}_i - \bar{\bx}_t \\
\bar{\blambda}_i - \bar{\blambda}_t
\end{pmatrix}^{\top}\\
& = \frac{1}{t^2}\sum_{i = 1}^{t}i^2\begin{pmatrix}
\bar{\bx}_i \\
\bar{\blambda}_i
\end{pmatrix} \begin{pmatrix}
\bar{\bx}_i \\
\bar{\blambda}_i
\end{pmatrix}^\top
- \frac{1}{t^2}\begin{pmatrix}
\sum_{i = 1}^{t}i^2\bar{\bx}_i  \\
\sum_{i = 1}^{t}i^2\bar{\blambda}_i
\end{pmatrix}
\begin{pmatrix}
 \bar{\bx}_t \\
\bar{\blambda}_t
\end{pmatrix}^\top - \frac{1}{t^2}\begin{pmatrix}
\bar{\bx}_t \\
\bar{\blambda}_t
\end{pmatrix} \begin{pmatrix}
\sum_{i = 1}^{t}i^2\bar{\bx}_i  \\
\sum_{i = 1}^{t}i^2\bar{\blambda}_i
\end{pmatrix}^\top
+ \frac{\sum_{i = 1}^{t}i^2}{t^2} 
\begin{pmatrix}
\bar{\bx}_t \\
\bar{\blambda}_t
\end{pmatrix}\begin{pmatrix}
\bar{\bx}_t \\
\bar{\blambda}_t
\end{pmatrix}^\top.
\end{align*}
Each of the four terms above can be easily computed recursively. We formalize the online calculation in the following algorithm.

\begin{algorithm}
\caption{Online computation of the scaling matrix $V_{t}$}\label{alg:2}
\begin{algorithmic}[1]
\State \textbf{Initialize:} set initial values for \( \boldsymbol{s}_0 = (\bx_0, \blambda_0), \bar{\boldsymbol{s}}_1 = \boldsymbol{s}_0, P_1 = \bar{\boldsymbol{s}}_1\bar{\boldsymbol{s}}_1^{\top}, Q_1 = \bar{\boldsymbol{s}}_1, V_1 = 0\);
\For{\( t = 1, 2, \ldots \)}
\State Run AI-SSQP with the update \eqref{equ:update} to obtain $\boldsymbol{s}_{t} = (\bx_t, \blambda_t)$;
\State Compute $\bar{\boldsymbol{s}}_{t+1} = \frac{1}{t+1}\boldsymbol{s}_t + \frac{t}{t+1}\bar{\boldsymbol{s}}_{t}$, $P_{t+1} = (t+1)^2\bar{\boldsymbol{s}}_{t+1}\bar{\boldsymbol{s}}_{t+1}^{\top} + P_{t}$, $Q_{t+1} = (t+1)^2\bar{\boldsymbol{s}}_{t+1} + Q_{t}$;
\State Output $V_{t+1} = \frac{1}{(t+1)^2}P_{t+1} - \frac{1}{(t+1)^2}Q_{t+1}\bar{\boldsymbol{s}}_{t+1}^{\top} - \frac{1}{(t+1)^2}\bar{\boldsymbol{s}}_{t+1} Q_{t+1}^{\top}+ \frac{(t+2)(2t+3)}{6(t+1)}\bar{\boldsymbol{s}}_{t+1}\bar{\boldsymbol{s}}_{t+1}^{\top}$;
\EndFor
\end{algorithmic}
\end{algorithm}

\section{Numerical Experiments}\label{sec:5}

In this section, we demonstrate the empirical performance of our proposed inference procedure on both constrained linear and logistic regression problems. We refer our method to as \texttt{AveRS}. We compare it with four other online inference procedures based on asymptotic normality with different covariance estimators. 
Two procedures perform inference using the averaged AI-SSQP~iterates:~one~employs the plug-in covariance estimator (adapted from \cite{Na2025Statistical}) (\texttt{AvePlugIn}) and one employs the batch-means covariance estimator \citep{Zhu2021Online} (\texttt{AveBM}).
The other two procedures perform inference using the last AI-SSQP iterates: one employs the plug-in covariance estimator \citep{Na2025Statistical} (\texttt{LastPlugIn}) and one employs the batch-free covariance estimator \citep{Kuang2025Online} (\texttt{LastBF}). We note that, although the batch-means and batch-free covariance estimators~are~not originally proposed for the AI-SSQP methods considered here, those estimation procedures can still be readily adapted for a reasonable comparison.

We evaluate the performance of all methods by reporting the primal-dual mean absolute error,~the coverage rate and length of~the confidence intervals, as well as the floating-point operations (flops)~per iteration required for inference.

\subsection{Experimental setup}

For the constrained linear regression problem, we consider the model $\xi_{b} = \xi_{a}^\top \boldsymbol{x}^\star + \varepsilon$, where~$\xi = (\xi_a, \xi_b) \in \mathbb{R}^{d+1}$ is the covariate-response pair;  $\varepsilon \sim \mathcal{N}(0, \sigma^2)$ is the Gaussian noise; and $\bx^{\star} \in \mathbb{R}^{d}$ is the model parameter. For this model, we consider the squared loss function $ F(\bx; \xi) = 0.5(\xi_b - \xi_a^\top \boldsymbol{x})^2$. We also enforce both linear constraint $\boldsymbol{A}\bx = \boldsymbol{b}$ and nonlinear constraint \mbox{$\|\bx\|^2 = R^2$}.~Thus,~the~constrained linear regression problem can be summarized as: \vskip-0.1cm
\begin{equation}\label{eq:constrainde_LS_problem}
\min_{\boldsymbol{x} \in \mR^{d}} \mathbb{E}\big[\frac{1}{2}(\xi_b - \xi_a^{\top} \boldsymbol{x})^2\big] \qquad \text{s.t.} \quad  A \boldsymbol{x} = \boldsymbol{b},\quad  \; \|\boldsymbol{x}\|^2 = R^2.
\end{equation}

For the constrained logistic regression problem, we consider the model $P(\xi_{b} | \xi_a) = \frac{\text{exp}(\xi_b \cdot \xi_a^\top \boldsymbol{x}^\star)}{1 + \text{exp}(\xi_b \cdot \xi_a^\top \boldsymbol{x}^\star)}$, where $(\xi_a, \xi_b)\in \mR^d\times \{-1,1\}$ is the covariate-response pair and $\boldsymbol{x}^{\star} \in \mathbb{R}^d$ is the model parameter.~For this model, we use the log loss function $F(\bx; \xi) = \log( 1 + \exp(-\xi_b \cdot \xi_a^\top \boldsymbol{x}))$. Similar~to~the~linear~regression problem, we consider both linear and nonlinear constraints, arriving at
\begin{equation}\label{eq:constrainde_logistic_problem}
\min_{\boldsymbol{x} \in \mR^d} \mathbb{E}\left[ \log \left( 1 + \text{exp} (-\xi_b \cdot \xi_a^{\top} \boldsymbol{x})\right) \right] \qquad \text{s.t.} \quad A \boldsymbol{x} = \boldsymbol{b}, \quad\; \|\boldsymbol{x}\|^2 = R^2.
\end{equation}

\noindent $\bullet$ \textbf{Model parameters setup.} 
For both regression problems, we vary the dimension $d \in \{5, 20, 40\}$ and let the true model parameter  $\bx^{\star} \in \mR^d$ be linearly spanned between $0$ and $1$. For the linear model, we let the noise variance $\sigma^2=1$. For each dimension, we follow the study \cite{Na2025Statistical} and generate the covariate $\xi_a\sim \mN(\0, 5I +  \Sigma_a)$ with three different types of $\Sigma_a$. (i) Identity matrix:~$\Sigma_a = I$; (ii) Toeplitz matrix: $[\Sigma_a]_{i, j} = r^{|i - j|}$ with $r \in \{0.4, 0.5, 0.6\}$;~and~(iii)~\mbox{Equi-correlation}~\mbox{matrix}:~$[\Sigma_a]_{i, i} = 1$ and $[\Sigma_a]_{i, j} = r$ for $i \neq j$, with $r \in \{0.1, 0.2, 0.3\}$. Given $\xi_a$, the response $\xi_b$ is generated~by~following the particular linear or logistic models. For constraints, we let $A \in \mR^{m \times d}$ with each entry~independently drawn from the standard normal distribution. We set $\bb = A\tx$, $R = \|\tx\|$,~and~$m = 1$~for~$d=5$ and $m=3$ otherwise.

\vskip2pt
\noindent$\bullet$ \textbf{Algorithm parameters setup.} 
All five inference procedures are based on the same AI-SSQP iteration sequence under the same algorithmic setup. In our experiment, we vary the~\mbox{sketching}~steps $\tau \in \{20, 40, \infty\}$, where $\tau = \infty$ corresponds to using the exact solver for \eqref{equ:Newton}.~For the \mbox{sketching}~solver,~we~implement the randomized Kaczmarz method \citep{Strohmer2008Randomized}; specifically, the~\mbox{sketching} vectors are drawn from $S \sim \text{Uniform}(\{\be_i\}_{i=1}^{d+m})$ (cf. Section \ref{sec:3.1}). 
We set $\beta_t = 1/(t+1)^{0.501}$, $\chi_t = \beta_t^2$, and choose the random stepsize $\bar{\alpha}_t \sim \text{Uniform}[\beta_t, \eta_t]$ with $\eta_t = \beta_t + \chi_t$.~For the plug-in and batch-free covariance estimators, there are no additional tuning parameters beyond those in the algorithm. For the batch-means covariance estimator, we follow the setup from \cite{Zhu2021Online}, setting the batch size sequence as $a_m  =  \lfloor m^{2/(1-\beta)}\rfloor$ with $\beta = 0.501$ (in their notation).~For~all~\mbox{inference}~\mbox{methods},~we initialize $(\bx_0, \blambda_0)$ as vectors of all ones and run $10^5$ iterations. The nominal coverage probability $1 - p$ is set to $95\%$, and we conduct statistical inference for the coordinate-wise average of the model parameter, $\sum_{i=1}^d \bx_i^\star / d$. To report the confidence interval lengths and coverage rates, we conduct~200 independent runs for each parameter configuration.

\subsection{Numerical results}

We first investigate the consistency of the AI-SSQP iterates. In particular, we compare the estimation error of the averaged iterate $\|\bar{\bx}_t - \bx^{\star}\|$ with that of the last iterate $\|\bx_t - \bx^{\star}\|$. 
We take $d = 20$ and use the Equi-correlation design with $r = 0.2$ as a representative example, and~present~the~\mbox{comparisons}~under varying sketching steps in Figure \ref{fig:error} (similar patterns are observed across all other settings).~From the figure, we observe that the error curves of the averaged iterate decay significantly faster than those of the last iterate.~This is because as shown~in Theorem~\ref{thm:4},~the averaged iterate enjoys~\mbox{$\sqrt{t}$-consistency} 
while the last iterate only enjoys $\sqrt{1/\beta_t} \approx O(t^{1/4})$-consistency (see \eqref{nequ:1}).~\mbox{Furthermore}, the larger~fluctuations in the error curves of the last iterate, compared to the smoother error decay of the averaged iterate, indicate higher variability and thus lower statistical efficiency of the last iterate.

\begin{figure}[!th]
\centering     %%% not \center
\subfigure[$\tau = 20$]{\label{A1}\includegraphics[width=0.328\textwidth]{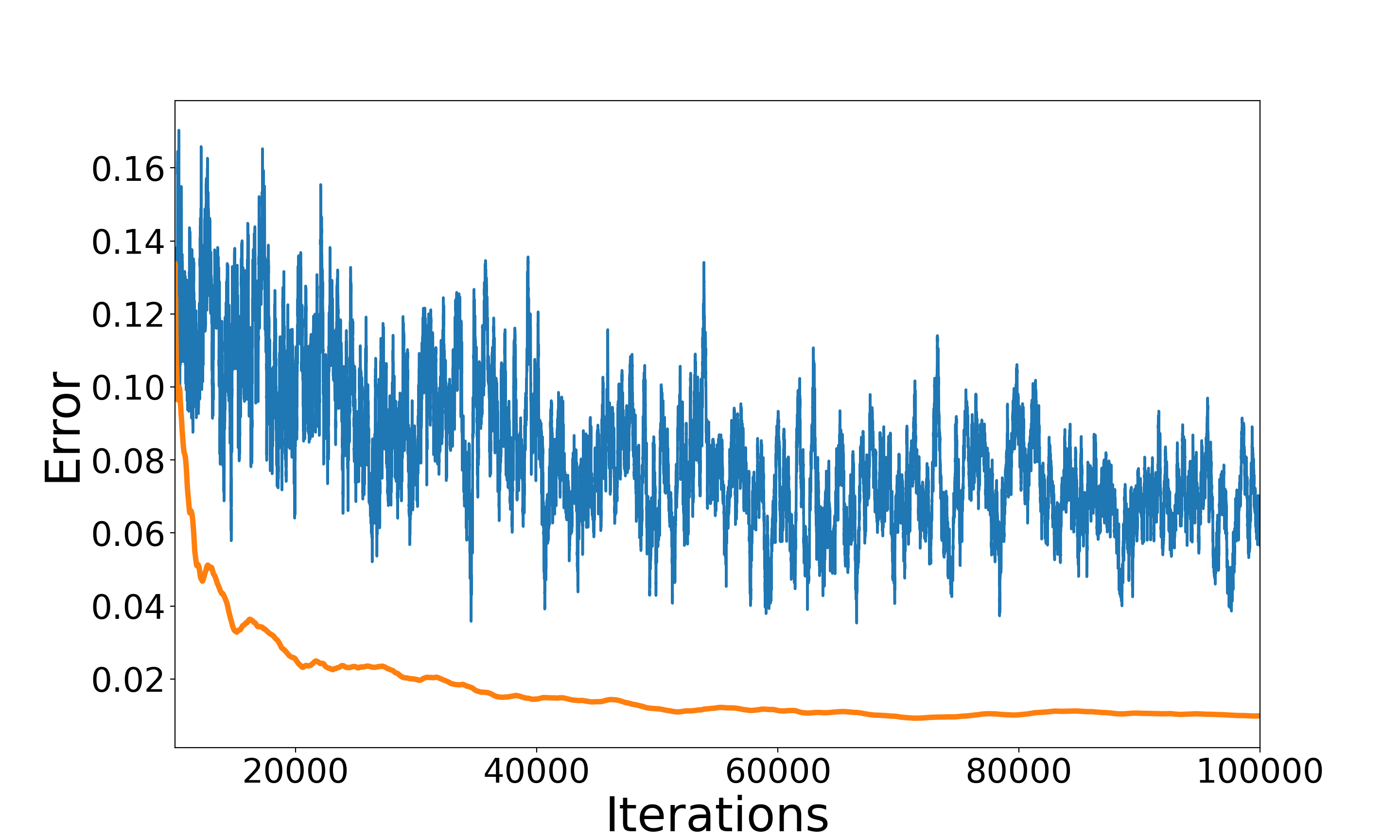}}
\subfigure[$\tau = 40$]{\label{A2}\includegraphics[width=0.328\textwidth]{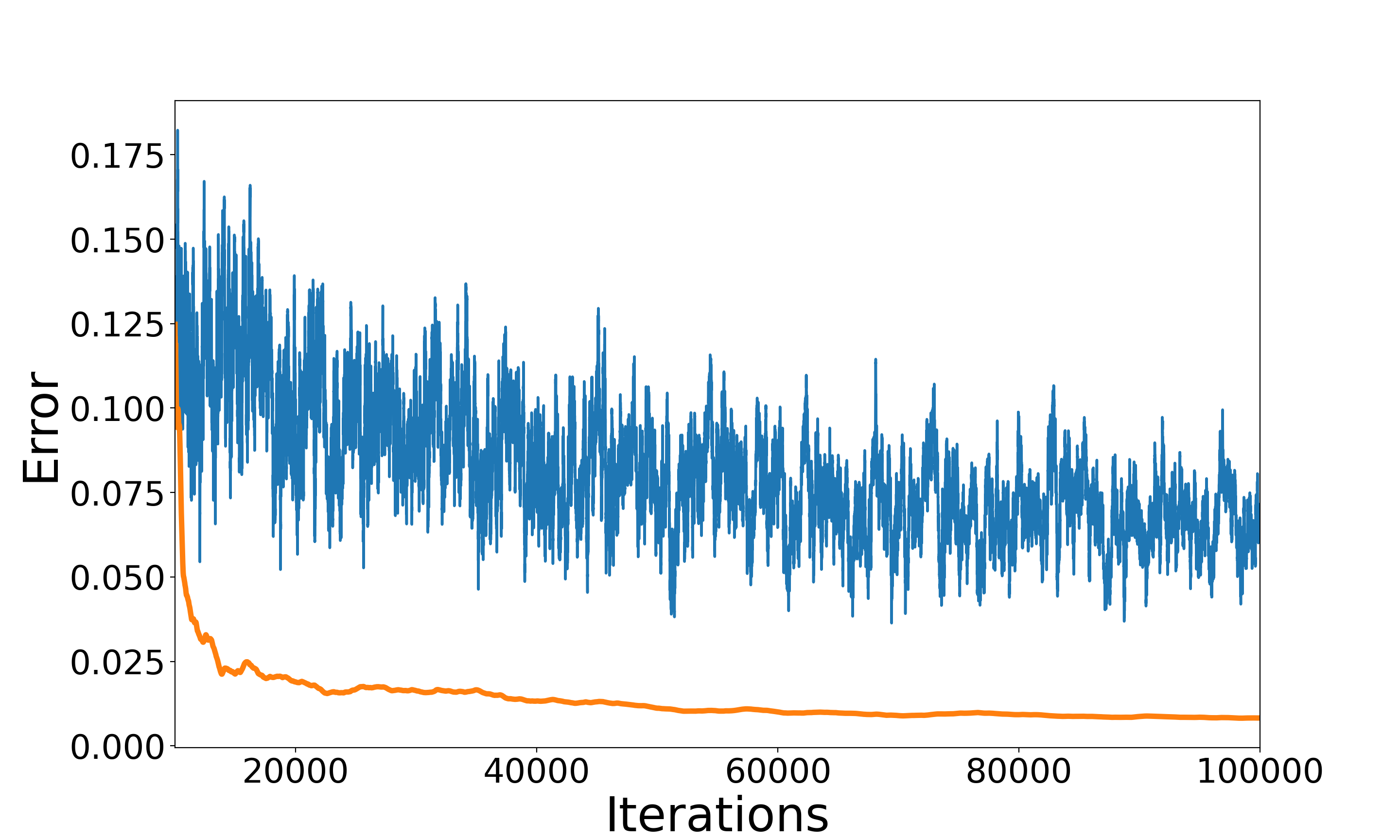}}
\subfigure[$\tau = \infty$]{\label{A3}\includegraphics[width=0.328\textwidth]{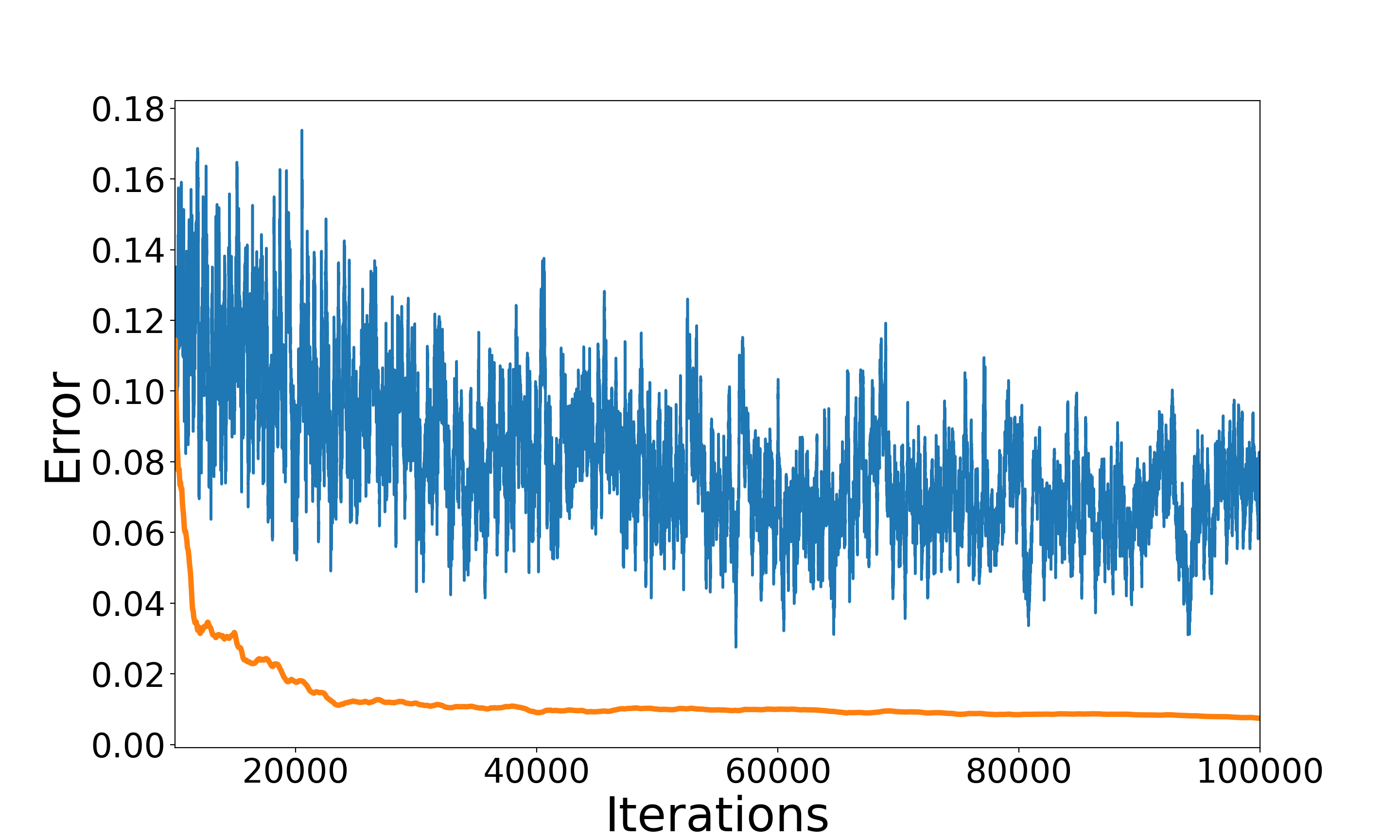}}
\centering{Constrained Linear Regression}

\subfigure[$\tau = 20$]{\label{A4}\includegraphics[width=0.328\textwidth]{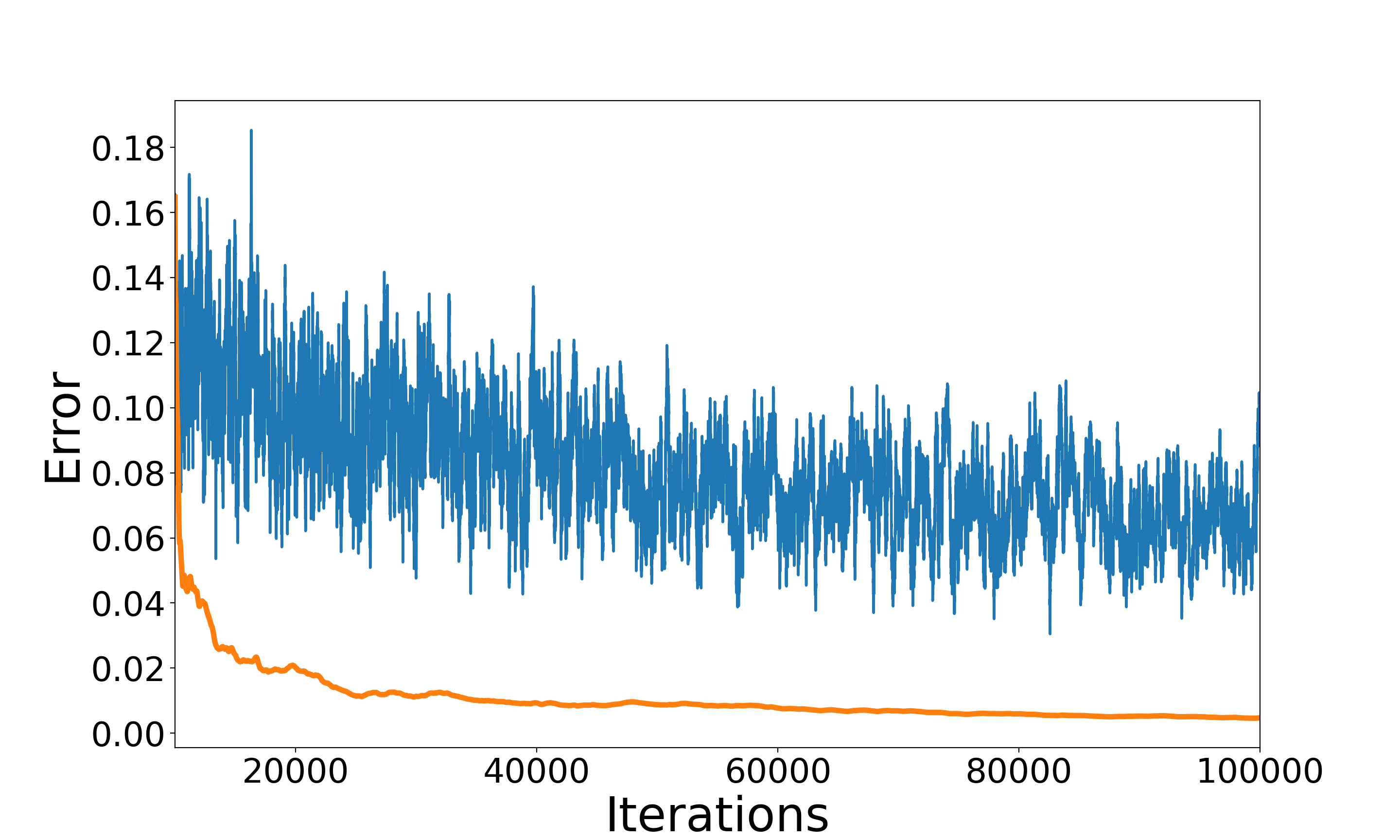}}
\subfigure[$\tau = 40$]{\label{A5}\includegraphics[width=0.328\textwidth]{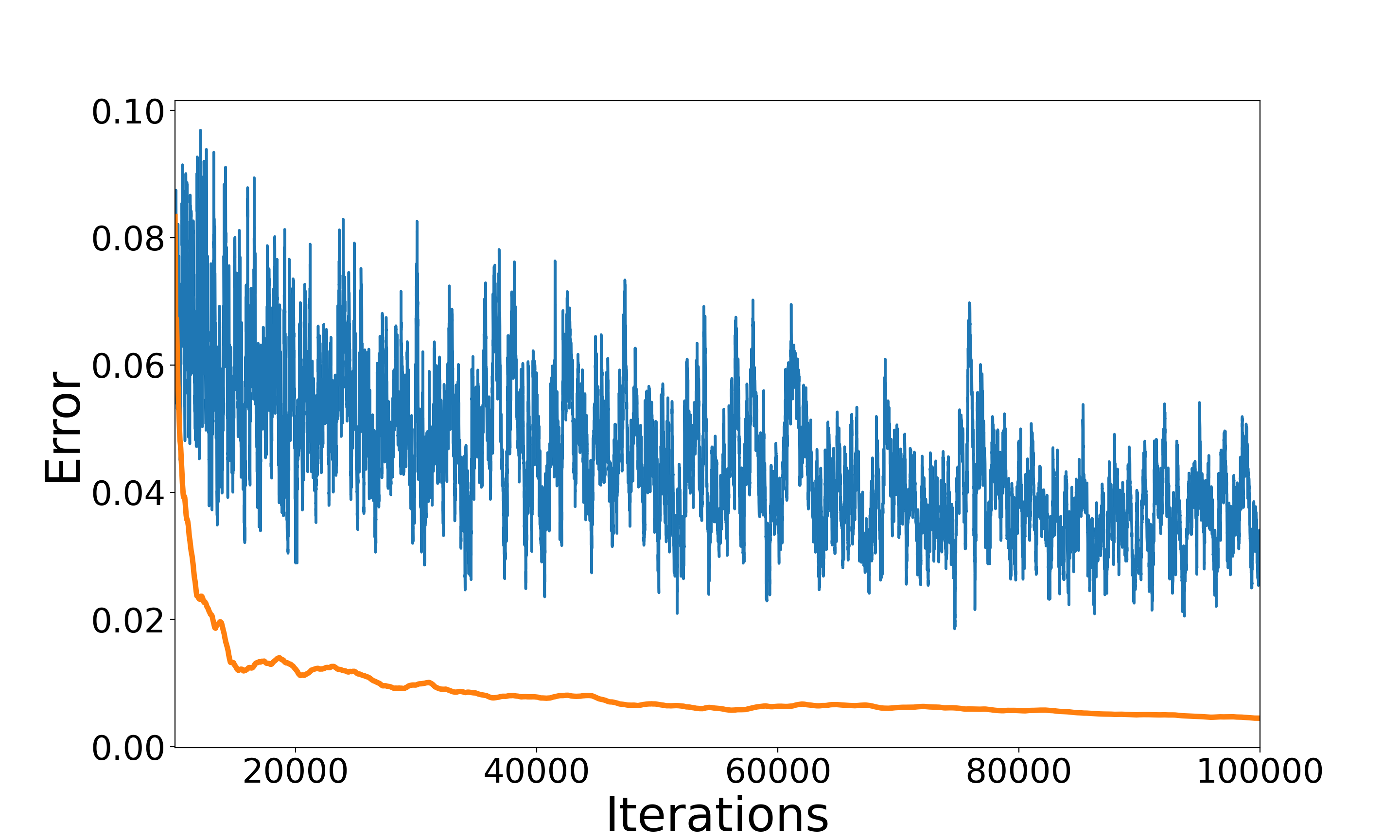}}
\subfigure[$\tau = \infty$]{\label{A6}\includegraphics[width=0.328\textwidth]{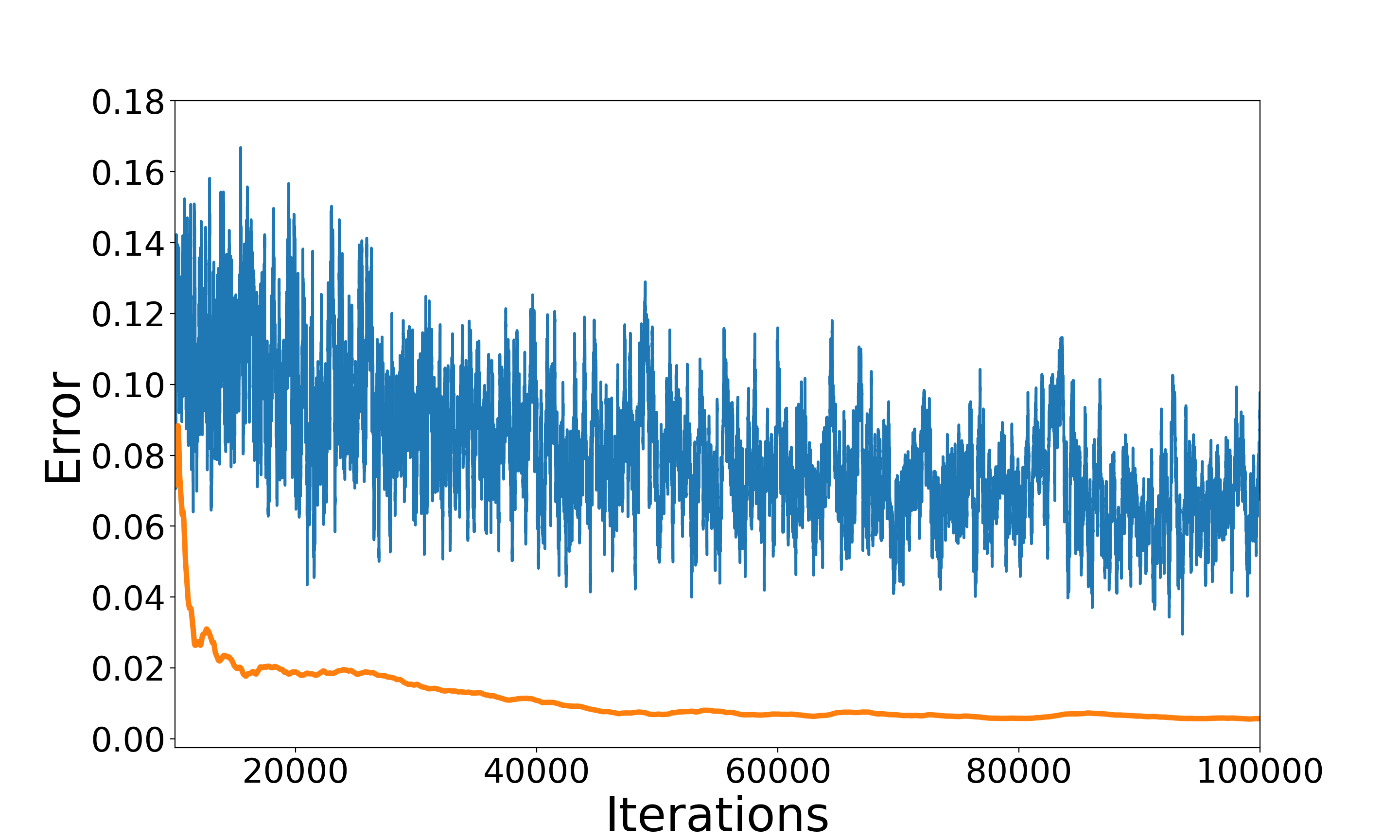}}
\centering{Constrained Logistic Regression}
\vskip5pt
\includegraphics[width=0.5\textwidth]{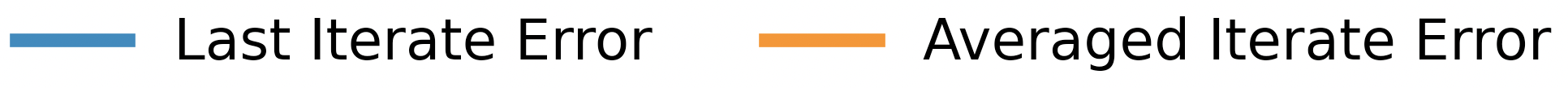}
\caption{\textit{
The averaged iterate error $\|\bar{\bx}_t - \bx^{\star}\|$ and the last iterate error $\|\bx_t - \bx^{\star}\|$  for constrained~linear and logistic regression with varying sketching steps $\tau$. We use $d=20$ and Equi-correlation design with $r=0.2$ as an illustrative example.
}}\label{fig:error}
\end{figure}

Next, we investigate the empirical performance of our random scaling inference procedure alongside four other inference methods that employ different covariance matrix estimators. Among~the~five methods, two are based on the last iterate (\texttt{LastPlugIn}, \texttt{LastBF}), and three are based on the averaged iterate (\texttt{AvePlugIn}, \texttt{AveBM}, \texttt{AveRS} (ours)).
We report the primal-dual mean \mbox{absolute}~\mbox{errors},~$\|(\bx_t-\tx,\blambda_t-\tlambda)\|$ for the last iterate and $\|(\barx_t-\tx, \barlambda_t-\tlambda)\|$ for the averaged iterate, along with~the~coverage rates and lengths of their constructed confidence intervals over 200 independent runs.~We~also~compare the flops per iteration for different inference procedures.
We summarize a subset of the results~in Table \ref{tab:2}, while the complete results and detailed discussions are provided in Appendix~\ref{appen:D}.

\vskip3pt
\noindent $\bullet$ \textbf{Mean absolute error.} 
From Table \ref{tab:2}, we observe that the averaged iterate error improves upon the last iterate error by 0.5-1 order of magnitude across all cases. As explained, this is expected~since the averaged iterate achieves $\sqrt{t}$-consistency for any stepsize control \mbox{sequence}~\mbox{$\beta_t = O(1/t^{\beta})$} with~$\beta \in (0.5,1)$, while the last iterate only achieves $O(1/\sqrt{\beta_t})$-consistency (which is $O(t^{1/4})$ in our experiments).
In fact, it is known that setting $\beta = 0.5$ (i.e., using a larger stepsize) achieves~the~\mbox{optimal}~non-asymptotic algorithmic convergence rate for stochastic methods, while setting $\beta = 1$ (i.e., using a smaller stepsize) achieves the optimal asymptotic statistical efficiency. In practice, however,~setting $\beta = 1$ often leads to significantly slower convergence, making it difficult to observe~optimal~asymptotic normality and to perform reliable inference.
In contrast, using the averaged iterate for inference allows setting $\beta \approx 0.5$ without sacrificing efficiency.

\vskip3pt
\noindent $\bullet$ \textbf{Asymptotic validity.} 
In terms of the averaged coverage rate (Ave Cov), Table \ref{tab:2} shows that our random scaling method consistently achieves promising coverage rates, generally close to 95\%~across the majority of scenarios. One exception arises in constrained linear regression when $d=40$ and the design covariance $\Sigma_a$ is Equi-correlation with $r=0.2$: here, applying the random scaling method to exact SSQP iterates ($\tau=\infty$) leads to an overcoverage of 99\%.~Nevertheless,~on~the~more~computationally efficient inexact, sketched SSQP iterates ($\tau=40$), the coverage rate improves to~96.5\%.$\quad\quad\;$

We compare \texttt{AveRS} with other inference procedures that leverage different covariance estimators. We observe that the plug-in covariance estimators, \texttt{LastPlugIn} and \texttt{AvePlugIn}, perform reasonably well for exact SSQP iterates ($\tau=\infty$) -- albeit with significantly higher computational costs -- but tend to fail for inexact SSQP iterates ($\tau=40$). This is because the plug-in estimators neglect all sketching-related components in the limiting covariance (cf.~\eqref{equ:barXi} and \eqref{equ:Lyapunov}),~\mbox{leading}~to~\mbox{inconsistency}~under~inexact iterations. For example, in both constrained linear and~logistic regression with $d=40$~and~identity design, \texttt{LastPlugIn} and \texttt{AvePlugIn} exhibit undercoverage (below 90\%) for inexact SSQP, whereas \texttt{AveRS} still maintains reasonable performance.
Furthermore, we note that \texttt{LastBF} is generally as competitive as \texttt{AveRS} in terms of coverage rate; however, it suffers from larger mean absolute errors and produces significantly wider confidence intervals. In addition, the existing analysis of the batch-free covariance estimator relies on the strong convexity of the problem, so the theoretical foundations for its performance in our setting~\mbox{remain}~\mbox{unclear}~\citep{Kuang2025Online}. Finally, we observe that the~batch-means \mbox{estimator}~\texttt{AveBM}~exhibits coverage rates far below 95\%. This~poor~\mbox{performance}~may~be attributed to the complexity of our constrained online inference tasks and the nonconvexity of the problem, as existing analyses for this estimator often assume sub-Gaussian noise and/or strongly convex objectives \citep{Zhu2021Online, Jiang2025Online}.

\vskip3pt
\noindent $\bullet$ \textbf{Statistical efficiency.}
In terms of the averaged confidence interval length (Ave Len), we observe that \texttt{AvePlugIn} and \texttt{AveBM} yield the shortest confidence intervals, closely followed by~\texttt{AveRS}.~In~contrast, \texttt{LastPlugIn} and \texttt{LastBF} yield much wider intervals -- by an order of magnitude -- than~the other three inference methods based on the averaged iterate. This pattern aligns with our observations for mean absolute error, further supporting the efficiency of averaged iterates. 
While~\texttt{AvePlugIn}~and \texttt{AveBM} benefit from the asymptotic normality of the averaged iterates, achieving minimax optimal efficiency, we emphasize that \texttt{AveRS} attains a comparable interval length while substantially improving their coverage rates since their covariance estimators are unreliable. This highlights the~strength of the random scaling method with a pivotal asymptotic distribution.

\vskip3pt
\noindent $\bullet$ \textbf{Computational efficiency.} 
In terms of flops per iteration, we observe that \texttt{AveRS}, along with \texttt{AveBM} and \texttt{LastBF}, incurs the lowest computational costs, while the methods with plug-in covariance estimators (\texttt{LastPlugIn} and \texttt{AvePlugIn}) are much more expensive. This is because the plug-in estimators require computing a matrix inverse with flops $O((d+m)^3)$. In addition, \mbox{AI-SSQP} methods are more computationally efficient than exact SSQP methods, as the latter must solve the Newton system that involves another costly matrix inversion. Overall, AI-SSQP~methods~combined with the random scaling technique result in matrix-free inference procedures, making them \mbox{particularly}~promising~for considered second-order methods and matching the cost of projection-based first-order methods.$\quad\quad$

\begin{table}[thbp!] 
\vskip-0.3cm
\centering
\setlength{\tabcolsep}{6pt}         % default is ~6pt
\renewcommand{\arraystretch}{0.98}   % default is 1.0
\resizebox{1.00\linewidth}{0.49\textheight}{\begin{tabular}{|c|cc|cc|ccc|ccc|c|}
\hline
&&&&&&&&&&& \\[-2.4ex]
\multirow{2}{*}{$d$}&\multicolumn{2}{c|}{\multirow{2}{*}{Design Cov}}&\multicolumn{2}{c|}{\multirow{2}{*}{Method}}&\multicolumn{3}{c|}{Constrained linear regression}& \multicolumn{3}{c|}{Constrained logistic regression} & \multirow{2}{*}{Flops/iter} \\
\cline{6-11}
&&&&&&&&&&& \\[-2.4ex]
& & & & & MAE $(10^{-2})$ &  \multicolumn{1}{|c|}{Ave Cov $(\%)$} & Ave Len $(10^{-2})$ & MAE $(10^{-2})$ &  \multicolumn{1}{|c|}{Ave Cov $(\%)$} & Ave Len $(10^{-2})$ & \\
\hline
\multirow{30}{*}{20}
&\parbox[t]{2mm}{\multirow{10}{*}{\;\;\;Identity}}& & \multirow{5}{*}{$\tau=\infty$} & 
\multicolumn{1}{|c|}{\texttt{LastPlugIn}} & 9.19 & \multicolumn{1}{|c|}{94.50} & 1.06 & 4.39 & \multicolumn{1}{|c|}{91.00} & 0.43 & \multirow{2}{*}{17632.74}\\
\cline{5-11}
&&&& \multicolumn{1}{|c|}{\texttt{AvePlugIn}} & 1.63 & \multicolumn{1}{|c|}{94.50} & 0.19 & 0.84 &\multicolumn{1}{|c|}{88.00} & 0.08 & \\
\cline{5-12}
&&&& \multicolumn{1}{|c|}{\texttt{LastBF}} & 9.19 &\multicolumn{1}{|c|}{94.50}&1.05& 4.39 &\multicolumn{1}{|c|}{93.50}&0.46& \multirow{3}{*}{14063.88} \\
\cline{5-11}
&&&& \multicolumn{1}{|c|}{\texttt{AveBM}} & \multirow{2}{*}{1.63} &\multicolumn{1}{|c|}{60.00}&0.09& \multirow{2}{*}{0.84} &\multicolumn{1}{|c|}{59.00}&0.05& \\
\cline{5-5}\cline{7-8} \cline{10-11}
&&&& \multicolumn{1}{|c|}{\texttt{AveRS}} &  &\multicolumn{1}{|c|}{97.50} & 0.29 &  & \multicolumn{1}{|c|}{95.00} & 0.14 & \\
\cline{4-12}
&&& \multirow{5}{*}{$\tau=40$} & 
\multicolumn{1}{|c|}{\texttt{LastPlugIn}}  & 7.30  & \multicolumn{1}{|c|}{91.00} & 1.07 & 4.19 & \multicolumn{1}{|c|}{91.00} & 0.43 & \multirow{2}{*}{4768.87}\\
\cline{5-11}
&&&& \multicolumn{1}{|c|}{\texttt{AvePlugIn}} & 2.03 & \multicolumn{1}{|c|}{93.50} & 0.19 & 0.95 & \multicolumn{1}{|c|}{81.00} & 0.08 &  \\
\cline{5-12}
&&&& \multicolumn{1}{|c|}{\texttt{LastBF}} & 7.30 &\multicolumn{1}{|c|}{92.50}&1.09& 4.19 & \multicolumn{1}{|c|}{95.00}& 0.47 & \multirow{3}{*}{\textbf{1200.00}}\\
\cline{5-11}
&&&& \multicolumn{1}{|c|}{\texttt{AveBM}} & \multirow{2}{*}{2.03} & \multicolumn{1}{|c|}{65.00} & 0.11 & \multirow{2}{*}{0.95} & \multicolumn{1}{|c|}{66.00} & 0.05 &   \\
\cline{5-5}\cline{7-8} \cline{10-11}
&&&& \multicolumn{1}{|c|}{\texttt{AveRS}}  &  & \multicolumn{1}{|c|}{\textbf{\red96.00}} & \textbf{\red0.36} &  & \multicolumn{1}{|c|}{\textbf{\red95.50}} & \textbf{\red0.17} &   \\
%%%%%%%%%%%%%%%%%%%%
\cline{2-12}
& \multicolumn{11}{c|}{}\\[-2ex]
\cline{2-12}
%%%%%%%%%%%%%%%%%%%%
&\parbox[t]{2mm}{\multirow{10}{*}{\;\;\;\shortstack{Toeplitz\\$r=0.5$}}}& & \multirow{5}{*}{$\tau=\infty$} & 
\multicolumn{1}{|c|}{\texttt{LastPlugIn}} & 8.99 & \multicolumn{1}{|c|}{96.50} & 1.46 & 4.16 & \multicolumn{1}{|c|}{91.50} & 0.47 & \multirow{2}{*}{17632.74} \\
\cline{5-11}
&&&& \multicolumn{1}{|c|}{\texttt{AvePlugIn}} & 1.67 & \multicolumn{1}{|c|}{93.50} & 0.26 & 0.78 & \multicolumn{1}{|c|}{\textbf{\red93.50}} & \textbf{\red0.08} &  \\
\cline{5-12}
&&&& \multicolumn{1}{|c|}{\texttt{LastBF}} & 8.99 & \multicolumn{1}{|c|}{96.50} &1.45 & 4.16 & \multicolumn{1}{|c|}{93.00} &0.50& \multirow{3}{*}{14063.88} \\
\cline{5-11}
&&&& \multicolumn{1}{|c|}{\texttt{AveBM}} & \multirow{2}{*}{1.67} & \multicolumn{1}{|c|}{63.00} &0.13 & \multirow{2}{*}{0.78} & \multicolumn{1}{|c|}{58.50}&0.05& \\
\cline{5-5}\cline{7-8} \cline{10-11}
&&&& \multicolumn{1}{|c|}{\texttt{AveRS}} &  & \multicolumn{1}{|c|}{95.50} & 0.42 &  & \multicolumn{1}{|c|}{98.00} & 0.15 & \\
\cline{4-12}
&&& \multirow{5}{*}{$\tau=40$} & 
\multicolumn{1}{|c|}{\texttt{LastPlugIn}}  & 7.31 & \multicolumn{1}{|c|}{99.00} & 1.46 & 4.01 & \multicolumn{1}{|c|}{92.00} & 0.47 & \multirow{2}{*}{4768.87} \\
\cline{5-11}
&&&& \multicolumn{1}{|c|}{\texttt{AvePlugIn}} & 2.03 & \multicolumn{1}{|c|}{93.00} & 0.26 & 0.92 & \multicolumn{1}{|c|}{86.50} & 0.08 &  \\
\cline{5-12}
&&&& \multicolumn{1}{|c|}{\texttt{LastBF}} & 7.31 & \multicolumn{1}{|c|}{96.00} &1.12& 4.01 & \multicolumn{1}{|c|}{\textbf{\red95.50}} &\textbf{\red0.53}& \multirow{3}{*}{\textbf{1200.00}}\\
\cline{5-11}
&&&& \multicolumn{1}{|c|}{\texttt{AveBM}} & \multirow{2}{*}{2.03} & \multicolumn{1}{|c|}{56.00} & 0.12 & \multirow{2}{*}{0.92} & \multicolumn{1}{|c|}{66.00} & 0.06 & \\
\cline{5-5}\cline{7-8} \cline{10-11}
&&&& \multicolumn{1}{|c|}{\texttt{AveRS}}  &  & \multicolumn{1}{|c|}{\textbf{\red94.50}} & \textbf{\red0.39} &  & \multicolumn{1}{|c|}{98.00} & 0.19 & \\
%%%%%%%%%%%%%%%%%%%%
\cline{2-12}
& \multicolumn{11}{c|}{}\\[-2ex]
\cline{2-12}
%%%%%%%%%%%%%%%%%%%%
&\parbox[t]{2mm}{\multirow{10}{*}{\;\;\shortstack{Equi-Corr\\$r=0.2$}}}& & \multirow{5}{*}{$\tau=\infty$} & 
\multicolumn{1}{|c|}{\texttt{LastPlugIn}} & 8.92 & \multicolumn{1}{|c|}{95.00} & 1.40 & 4.07 & \multicolumn{1}{|c|}{94.50} & 0.43 & \multirow{2}{*}{17632.74} \\
\cline{5-11}
&&&& \multicolumn{1}{|c|}{\texttt{AvePlugIn}} & 1.59 & \multicolumn{1}{|c|}{96.00} & 0.25 & 0.76 & \multicolumn{1}{|c|}{92.50} & 0.08 &  \\
\cline{5-12}
&&&& \multicolumn{1}{|c|}{\texttt{LastBF}} & 8.92 & \multicolumn{1}{|c|}{94.00} & 1.36 & 4.07 & \multicolumn{1}{|c|}{94.00} & 0.45 & \multirow{3}{*}{14063.88} \\
\cline{5-11}
&&&& \multicolumn{1}{|c|}{\texttt{AveBM}} & \multirow{2}{*}{1.59} & \multicolumn{1}{|c|}{60.50} & 0.12 & \multirow{2}{*}{0.76} & \multicolumn{1}{|c|}{64.00} & 0.04 &   \\
\cline{5-5}\cline{7-8} \cline{10-11}
&&&& \multicolumn{1}{|c|}{\texttt{AveRS}} &  & \multicolumn{1}{|c|}{98.00} & 0.39 &  & \multicolumn{1}{|c|}{97.50} & 0.14 &  \\
\cline{4-12}
&&& \multirow{5}{*}{$\tau=40$} & 
\multicolumn{1}{|c|}{\texttt{LastPlugIn}}  & 7.43 & \multicolumn{1}{|c|}{99.50} & 1.40 & 3.81 & \multicolumn{1}{|c|}{\textbf{\red94.00}} & \textbf{\red0.42} &   \multirow{2}{*}{4768.87} \\
\cline{5-11}
&&&& \multicolumn{1}{|c|}{\texttt{AvePlugIn}}  &  1.93 & \multicolumn{1}{|c|}{89.50} & 0.25 & 0.87 & \multicolumn{1}{|c|}{92.00} & 0.08 &  \\
\cline{5-12}
&&&& \multicolumn{1}{|c|}{\texttt{LastBF}} & 7.43 & \multicolumn{1}{|c|}{94.50} & 1.15 & 3.81 & \multicolumn{1}{|c|}{95.00} & 0.46 & \multirow{3}{*}{\textbf{1200.00}} \\
\cline{5-11}
&&&& \multicolumn{1}{|c|}{\texttt{AveBM}} & \multirow{2}{*}{1.93} & \multicolumn{1}{|c|}{57.00} & 0.13 & \multirow{2}{*}{0.87} & \multicolumn{1}{|c|}{70.00} & 0.05 &  \\
\cline{5-5}\cline{7-8} \cline{10-11}
&&&& \multicolumn{1}{|c|}{\texttt{AveRS}}  &  & \multicolumn{1}{|c|}{\textbf{\red94.50}} & \textbf{\red0.39} &  & \multicolumn{1}{|c|}{98.50} & 0.16 &  \\
%%%%%%%%%%%%%%%%%%%%			
%%%%%%%%%%%%%%%%%%%%			
\cline{1-12}
\multicolumn{12}{|c|}{}\\[-2ex]
\cline{1-12}
%%%%%%%%%%%%%%%%%%%%			
%%%%%%%%%%%%%%%%%%%%			
\multirow{30}{*}{40}&\parbox[t]{2mm}{\multirow{10}{*}{\;\;\;Identity}}& & \multirow{5}{*}{$\tau=\infty$} & 
\multicolumn{1}{|c|}{\texttt{LastPlugIn}} & 10.68 & \multicolumn{1}{|c|}{97.00} & 0.62 & 5.63 & \multicolumn{1}{|c|}{95.00} & 0.28 & \multirow{2}{*}{106068.43}\\
\cline{5-11}
&&&& \multicolumn{1}{|c|}{\texttt{AvePlugIn}} & 1.93 & \multicolumn{1}{|c|}{94.00} & 0.11 & 1.07 & \multicolumn{1}{|c|}{90.00} & 0.05 &  \\
\cline{5-12}
&&&& \multicolumn{1}{|c|}{\texttt{LastBF}} & 10.68 & \multicolumn{1}{|c|}{95.50} &0.61 & 5.63 & \multicolumn{1}{|c|}{95.50} & 0.29 & \multirow{3}{*}{85975.21} \\
\cline{5-11}
&&&& \multicolumn{1}{|c|}{\texttt{AveBM}} & \multirow{2}{*}{1.93} & \multicolumn{1}{|c|}{60.00} & 0.06 & \multirow{2}{*}{1.07} & \multicolumn{1}{|c|}{59.00} & 0.03 & \\
\cline{5-5}\cline{7-8} \cline{10-11}
&&&& \multicolumn{1}{|c|}{\texttt{AveRS}} &  & \multicolumn{1}{|c|}{\textbf{\red96.00}} & \textbf{\red0.18} &  & \multicolumn{1}{|c|}{\textbf{\red96.00}} & \textbf{\red0.09} &   \\
\cline{4-12}
&&& \multirow{5}{*}{$\tau=40$} & 
\multicolumn{1}{|c|}{\texttt{LastPlugIn}}  & 10.19& \multicolumn{1}{|c|}{86.00} & 0.62 & 5.16 & \multicolumn{1}{|c|}{89.50} & 0.28 & \multirow{2}{*}{22645.27}\\
\cline{5-11}
&&&& \multicolumn{1}{|c|}{\texttt{AvePlugIn}} & 2.60& \multicolumn{1}{|c|}{81.00} & 0.11 & 1.43 & \multicolumn{1}{|c|}{78.00} & 0.05 & \\
\cline{5-12}
&&&& \multicolumn{1}{|c|}{\texttt{LastBF}} & 10.19& \multicolumn{1}{|c|}{93.50} & 0.76 & 5.16 & \multicolumn{1}{|c|}{91.50} & 0.30 & \multirow{3}{*}{\textbf{2552.05}} \\
\cline{5-11}
&&&& \multicolumn{1}{|c|}{\texttt{AveBM}} & \multirow{2}{*}{2.60} & \multicolumn{1}{|c|}{64.00} & 0.08  & \multirow{2}{*}{1.43} & \multicolumn{1}{|c|}{65.00} & 0.04 & \\
\cline{5-5}\cline{7-8} \cline{10-11}
&&&& \multicolumn{1}{|c|}{\texttt{AveRS}} &  & \multicolumn{1}{|c|}{\textbf{\red97.00}} & \textbf{\red0.26} &  & \multicolumn{1}{|c|}{\textbf{\red96.50}} & \textbf{\red0.13} & \\
\cline{2-12}
& \multicolumn{11}{c|}{}\\[-2ex]
\cline{2-12}
&\parbox[t]{2mm}{\multirow{10}{*}{\;\;\;\shortstack{Toeplitz\\$r=0.5$}}}& & \multirow{5}{*}{$\tau=\infty$} & 
\multicolumn{1}{|c|}{\texttt{LastPlugIn}} & 10.73 & \multicolumn{1}{|c|}{94.00} & 0.59 & 5.28& \multicolumn{1}{|c|}{94.00} & 0.30 & \multirow{2}{*}{10608.43} \\
\cline{5-11}
&&&& \multicolumn{1}{|c|}{\texttt{AvePlugIn}} &1.96 & \multicolumn{1}{|c|}{95.50} & 0.11 & 0.99& \multicolumn{1}{|c|}{90.50} & 0.05 &  \\
\cline{5-12}
&&&& \multicolumn{1}{|c|}{\texttt{LastBF}} & 10.73 & \multicolumn{1}{|c|}{94.00} & 0.58 & 5.28 & \multicolumn{1}{|c|}{94.00} & 0.31 & \multirow{3}{*}{85975.21}  \\
\cline{5-11}
&&&& \multicolumn{1}{|c|}{\texttt{AveBM}} & \multirow{2}{*}{1.96} & \multicolumn{1}{|c|}{66.50} & 0.06 & \multirow{2}{*}{0.99} &\multicolumn{1}{|c|}{68.00}&0.03& \\
\cline{5-5}\cline{7-8} \cline{10-11}
&&&& \multicolumn{1}{|c|}{\texttt{AveRS}} &  & \multicolumn{1}{|c|}{\textbf{\red94.50}} & \textbf{\red0.18} &  &\multicolumn{1}{|c|}{\textbf{\red96.00}} & \textbf{\red0.10} & \\
\cline{4-12}
&&& \multirow{5}{*}{$\tau=40$} & 
\multicolumn{1}{|c|}{\texttt{LastPlugIn}}  &10.15 & \multicolumn{1}{|c|}{84.50} & 0.59 & 4.84 & \multicolumn{1}{|c|}{95.00} & 0.30 &  \multirow{2}{*}{22645.27} \\
\cline{5-11}
&&&& \multicolumn{1}{|c|}{\texttt{AvePlugIn}} & 2.67& \multicolumn{1}{|c|}{80.50} & 0.10 & 1.38 & \multicolumn{1}{|c|}{79.00} & 0.05 &  \\
\cline{5-12}
&&&& \multicolumn{1}{|c|}{\texttt{LastBF}} & 10.15& \multicolumn{1}{|c|}{95.50} & 0.77 & 4.84 & \multicolumn{1}{|c|}{92.50} &0.30& \multirow{3}{*}{\textbf{2552.05}}\\
\cline{5-11}
&&&& \multicolumn{1}{|c|}{\texttt{AveBM}} & \multirow{2}{*}{2.67} & \multicolumn{1}{|c|}{60.50} &0.07 & \multirow{2}{*}{1.38} & \multicolumn{1}{|c|}{62.00} &0.04&  \\
\cline{5-5}\cline{7-8} \cline{10-11}
&&&& \multicolumn{1}{|c|}{\texttt{AveRS}}  &  & \multicolumn{1}{|c|}{\textbf{\red97.00}} & \textbf{\red0.23} &  & \multicolumn{1}{|c|}{\textbf{\red95.50}} & \textbf{\red0.12} &  \\
\cline{2-12}
& \multicolumn{11}{c|}{}\\[-2ex]
\cline{2-12}
&\parbox[t]{2mm}{\multirow{10}{*}{\;\;\shortstack{Equi-Corr\\$r=0.2$}}}& & \multirow{5}{*}{$\tau=\infty$} & 
\multicolumn{1}{|c|}{\texttt{LastPlugIn}} & 10.95 & \multicolumn{1}{|c|}{94.50} & 0.66 & 4.85 & \multicolumn{1}{|c|}{95.00} & 0.27 &  \multirow{2}{*}{10608.44}\\
\cline{5-11}
&&&& \multicolumn{1}{|c|}{\texttt{AvePlugIn}} & 1.97 & \multicolumn{1}{|c|}{\textbf{\red95.50}} & \textbf{\red0.12} & 0.90 & \multicolumn{1}{|c|}{92.50} & 0.05 &   \\
\cline{5-12}
&&&& \multicolumn{1}{|c|}{\texttt{LastBF}} & 10.95 & \multicolumn{1}{|c|}{94.50} &0.65 & 4.85 & \multicolumn{1}{|c|}{95.00}&0.28& \multirow{3}{*}{85975.21} \\
\cline{5-11}
&&&& \multicolumn{1}{|c|}{\texttt{AveBM}} & \multirow{2}{*}{1.97} & \multicolumn{1}{|c|}{70.00} &0.06 & \multirow{2}{*}{0.90} &\multicolumn{1}{|c|}{69.00} &0.03& \\
\cline{5-5}\cline{7-8} \cline{10-11}
&&&& \multicolumn{1}{|c|}{\texttt{AveRS}} &  & \multicolumn{1}{|c|}{99.00} & 0.19 &  & \multicolumn{1}{|c|}{97.00} & 0.09 &  \\
\cline{4-12}
&&& \multirow{5}{*}{$\tau=40$} & 
\multicolumn{1}{|c|}{\texttt{LastPlugIn}}  & 10.31 & \multicolumn{1}{|c|}{92.00} & 0.66 & 4.52 & \multicolumn{1}{|c|}{94.50} & 0.27 & \multirow{2}{*}{22645.27}  \\
\cline{5-11}
&&&& \multicolumn{1}{|c|}{\texttt{AvePlugIn}} & 2.83 & \multicolumn{1}{|c|}{86.00} & 0.12 & 1.21 & \multicolumn{1}{|c|}{80.50} & 0.05 & \\
\cline{5-12}
&&&& \multicolumn{1}{|c|}{\texttt{LastBF}} & 10.31 & \multicolumn{1}{|c|}{94.50} &0.78 & 4.52 & \multicolumn{1}{|c|}{94.50} & 0.27& \multirow{3}{*}{\textbf{2552.05}}  \\
\cline{5-11}
&&&& \multicolumn{1}{|c|}{\texttt{AveBM}} & \multirow{2}{*}{2.83} & \multicolumn{1}{|c|}{65.00} &0.08 & \multirow{2}{*}{1.21} & \multicolumn{1}{|c|}{63.50} &0.03&  \\
\cline{5-5}\cline{7-8} \cline{10-11}
&&&& \multicolumn{1}{|c|}{\texttt{AveRS}}  &  & \multicolumn{1}{|c|}{\textbf{\red96.50}} & \textbf{\red0.25} &  & \multicolumn{1}{|c|}{\textbf{\red96.50}} & \textbf{\red0.11} & \\
\hline
\end{tabular}}
\vspace{-0.2cm}
\caption{\textit{
A subset of comparison results for different inference methods on constrained linear and logistic regression problems.~For each combination of dimension and design covariance, we highlight the coverage rate and confidence interval length of a method if no other method achieves a comparably good coverage rate with a shorter confidence interval. If the coverage rates and confidence lengths are comparable, we highlight the method with fewer flops per iteration.
}}\label{tab:2}
\vspace{-0.2cm}
\end{table}

\section{Conclusion and Future Work}\label{sec:6}

In this paper, we developed an online inference procedure for constrained stochastic optimization problems by leveraging a numerical method called Adaptive Inexact Stochastic Sequential Quadratic Programming (AI-SSQP). At each step, the method employs a sketching solver to approximately solve the expensive quadratic subproblem and selects a proper adaptive random stepsize. We established~the asymptotic normality of the averaged SSQP iterate $(\bar{\bx}_t, \bar{\blambda}_t)$ and demonstrated that the averaged~iterate enjoys better statistical efficiency than the last iterate, as evidenced by a smaller limiting covariance matrix. This limiting covariance reduces to the minimax optimal covariance when sketching is degraded, and, when sketching is used, remains close to the optimal covariance within a radius that~decays exponentially fast in terms of the number of sketching steps.
Furthermore, we analyzed the~partial sum process of SSQP,~$\frac{1}{\sqrt{t}}\sum_{i = 0}^{\lfloor rt\rfloor-1} (\bx_i - \bx^{\star}, \blambda_i - \blambda^{\star})$, and proved that it converges in distribution to a standard Brownian motion. Based on this result, we proposed a test statistic by studentizing~the estimation error $\sqrt{t}\cdot (\bar{\bx}_t - \bx^{\star}, \bar{\blambda}_t - \blambda^{\star})$ using a random scaling matrix $V_t$. We showed that the resulting test statistic is \mbox{asymptotically}~\mbox{pivotal},~in~the sense that its limiting distribution is free of any~unknown parameters. This enables the construction of valid confidence intervals for the local solution $(\bx^{\star}, \blambda^{\star})$. Our random scaling method is fully online~and~\mbox{matrix-free},~\mbox{significantly}~\mbox{reducing}~both~the computational and memory costs that typically burden second-order methods. In particular, our method matches the \mbox{computational}~\mbox{efficiency}~of~state-of-the-art unconstrained first-order methods, making it particularly suitable for large-scale, streaming data settings.

For future research, it is of interest to design online inference procedures for inequality-constrained stochastic optimization problems. Many methods of solving inequality-constrained problems~asymptotically reduce to solving equality-constrained problems, suggesting that the random scaling analysis, along with the stopping-time localization technique, can still be applicable \citep{Na2023Inequality}.~In addition, recent works on online inference have extended the data sampling scheme~for~\mbox{first-order}~methods from i.i.d. data to Markovian data \citep{Li2023Online, Roy2023Online}. A \mbox{promising}~direction, therefore, is to adapt second-order methods to the Markovian sampling settings and establish the corresponding online inference theory. Finally, extending constrained online inference to high-dimensional settings, where the dimensionality grows with the sample size, is~also~an~important~avenue for future exploration.

% \section*{Acknowledgements}

\bibliographystyle{my-plainnat}
\bibliography{ref}

\appendix
%\pagebreak
\numberwithin{equation}{section}
\numberwithin{theorem}{section}

\section{Preparation Lemmas and Preliminaries}\label{appen:A}

We present some preparation results that will be used in our later proofs. Throughout the sections,~we use the following filtration to clearly identify the source of randomness:
\begin{equation*}
\mF_{t-0.5} =  \sigma(\{\xi_i, \{S_{i, j}\}_{j}, \bar{\alpha}_i\}_{i=0}^{t-1}\cup \xi_t),\quad\quad\quad \mF_t = \sigma(\{\xi_i, \{S_{i, j}\}_{j}, \bar{\alpha}_i\}_{i=0}^{t}).
\end{equation*}
For consistency, $\mF_{-1}$ denotes the trivial $\sigma$-algebra. We recall that $\Tilde{\bz}_{t} = (\Tilde{\Delta}\bx_t, \Tilde{\Delta}\blambda_t)$ is the~\mbox{exact}~stochastic Newton direction in \eqref{equ:Newton} and $\bz_{t, \tau} = (\barDelta\bx_t, \barDelta\blambda_t)$ is the sketched Newton direction in \eqref{def:zttau}.~For~the~sake of brevity, we define the following projection matrices:
\begin{equation}\label{def:C_t}
C_{t,j} \coloneqq I - K_tS_{t,j}(S_{t,j}^\top K_t^2S_{t,j})^\dagger S_{t,j}^\top K_t,\quad\quad \tC_t \coloneqq \prod_{j=0}^{\tau-1} C_{t,j}, \quad\quad C_t \coloneqq \mE[\tC_t\mid \mF_{t-1}].
\end{equation}
By the sketching update \eqref{equ:pseduo}, \eqref{equ:Newton}, and the setup of $\bz_{t,0} = \0$, we have
\begin{equation}\label{equ:z:recur}
\bz_{t,\tau} - \tbz_t = C_{t,\tau-1}(\bz_{t,\tau-1}-\tbz_t) =  \rbr{\prod_{j = 0}^{\tau-1}C_{t,j}}(\bz_{t,0}-\tbz_t) = -\tC_t \tbz_t\;\quad  \Longrightarrow \;\quad \bz_{t,\tau} = (I-\tC_t)\tbz_t.
\end{equation}
We let $\rho \coloneqq 1-\gamma_{S}$, where $\gamma_S$ is introduced in Assumption \ref{ass:3}. The first lemma provides guarantees~on the sketching solver.

\begin{lemma}\label{lem:1}

Under Assumptions \ref{ass:1}, \ref{ass:2}, \ref{ass:3}, we have for all $t\geq 0$,
\begin{enumerate}[topsep=2pt,label=(\alph*):]
\setlength\itemsep{0.0em}
\item $0\leq \rho<1$.
\item $\mE[\bz_{t,\tau} \mid \mF_{t-0.5}] =  (I-C_t)\tbz_t$ and $\mE[\|\bz_{t,\tau}\|^2 \mid \mF_{t-1}] \leq \Upsilon_z$ for some constant $\Upsilon_z>0$.
\item $\|C_t\|\leq \rho^\tau$ and $\|C_t-C^\star\| \leq 2\tau\Upsilon_S\|K_t - K^\star\|/\sigma_{\min}(K^\star)$, where $C^\star$ is defined in \eqref{equ:tC}.
\end{enumerate}	
\end{lemma}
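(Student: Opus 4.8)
The plan is to establish the three claims in order, each building on the structure of the projection matrices $C_{t,j}$ defined in \eqref{def:C_t} and the spectral assumptions in Assumption \ref{ass:3}.

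\textbf{Part (a).} First I would observe that for any sketching matrix $S$, the matrix $P_{t,j} \coloneqq K_tS_{t,j}(S_{t,j}^\top K_t^2S_{t,j})^\dagger S_{t,j}^\top K_t$ is an orthogonal projection: a direct computation using the properties of the Moore--Penrose pseudoinverse shows $P_{t,j}^2 = P_{t,j}$ and $P_{t,j}^\top = P_{t,j}$, and moreover $P_{t,j} \succeq \0$. Taking conditional expectation and invoking \eqref{eq:sketching_assumption_2} of Assumption \ref{ass:3} gives $\mE[P_{t,j}\mid \bx_t,\blambda_t]\succeq \gamma_S I$, while $\mE[P_{t,j}\mid \bx_t,\blambda_t]\preceq I$ since each $P_{t,j}\preceq I$. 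Hence $\0 \preceq I - \mE[P_{t,j}\mid \bx_t,\blambda_t]\preceq (1-\gamma_S)I$, which forces $\gamma_S\in(0,1]$ (note $\gamma_S>0$ by assumption, and $\gamma_S\le 1$ since the left side is PSD), and therefore $\rho = 1-\gamma_S\in[0,1)$.

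\textbf{Part (b).} For the conditional mean, I would start from the identity \eqref{equ:z:recur}, namely $\bz_{t,\tau} = (I - \tC_t)\tbz_t$ with $\tC_t = \prod_{j=0}^{\tau-1}C_{t,j}$. Since $\tbz_t$ is $\mF_{t-0.5}$-measurable (it depends only on $K_t$ and $\bnabla\mL_t$, hence on $\xi_t$ and the current iterate, not on the sketches $\{S_{t,j}\}_j$) and the sketches $\{S_{t,j}\}_{j=0}^{\tau-1}$ are independent of $\mF_{t-0.5}$, we get $\mE[\bz_{t,\tau}\mid \mF_{t-0.5}] = (I - \mE[\tC_t])\tbz_t = (I - C_t)\tbz_t$, using that $C_t = \mE[\tC_t\mid\mF_{t-1}] = \mE[\tC_t\mid \mF_{t-0.5}]$ (the sketches being independent of the extra conditioning on $\xi_t$). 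For the second moment bound, I would use $\|\bz_{t,\tau}\| = \|(I-\tC_t)\tbz_t\|\le \|I-\tC_t\|\,\|\tbz_t\|\le \|\tbz_t\|$ since $\|I - C_{t,j}\| = \|P_{t,j}\|\le 1$ (a projection) so $\|I - \tC_t\| \le 1$; then bound $\|\tbz_t\| = \|K_t^{-1}\bnabla\mL_t\|\le \|K_t^{-1}\|\,\|\bnabla\mL_t\|$, and control $\|K_t^{-1}\|$ uniformly via Assumption \ref{ass:1} (the regularized-Hessian and LICQ conditions bound $\|K_t^{-1}\|$ away from infinity, cf.\ \cite[Lemma 16.1]{Nocedal2006Numerical}) and $\mE[\|\bnabla\mL_t\|^2\mid\mF_{t-1}]$ via the unbiasedness and $(2+\delta)$-moment bound in Assumption \ref{ass:2} together with the compactness of $\mX\times\Lambda$ in Assumption \ref{ass:1}. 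This yields $\mE[\|\bz_{t,\tau}\|^2\mid\mF_{t-1}]\le \Upsilon_z$.

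\textbf{Part (c).} The bound $\|C_t\|\le\rho^\tau$ would be obtained by a sub-multiplicativity argument on the expectation: writing $C_t = \mE\big[\prod_{j=0}^{\tau-1}C_{t,j}\mid\mF_{t-1}\big]$ and conditioning successively on the sketches, each factor contributes a spectral-norm contraction; more precisely, since $C_{t,0},\dots,C_{t,\tau-1}$ are i.i.d.\ given $\mF_{t-1}$ and $\|C_{t,j}\|\le 1$, one shows by induction (using the semidefinite ordering $\0\preceq\mE[C_{t,j}^\top C_{t,j}\cdot\,]$ bounds or the operator-norm estimate $\|\mE[C_{t,j}\mid\cdot]\|\le\rho$ from part (a) applied along the range of the projections) that $\|C_t\|\le\|\mE[C_{t,j}\mid\mF_{t-1}]\|^\tau\le\rho^\tau$. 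For the perturbation bound, I would write $C_t - C^\star = \mE[\tC_t\mid\mF_{t-1}] - \mE[\tC^\star]$ as a telescoping sum over the $\tau$ factors, $\tC_t - \tC^\star = \sum_{k=0}^{\tau-1}\big(\prod_{j<k}C_{t,j}\big)(C_{t,k} - C^\star_k)\big(\prod_{j>k}C^\star_j\big)$ (under a natural coupling of the sketches), bound the leading/trailing products in norm by $1$, take expectations, and estimate $\mE\|C_{t,k} - C^\star_k\|$ by the Lipschitz dependence of the projection map $K\mapsto KS(S^\top K^2 S)^\dagger S^\top K$ on $K$. The latter Lipschitz estimate, giving a factor $\Upsilon_S\|K_t - K^\star\|/\sigma_{\min}(K^\star)$ per term, is the routine but slightly delicate computation — it uses $\mE[\|S\|\,\|S^\dagger\|]\le\Upsilon_S$ from Assumption \ref{ass:3}, the uniform lower bound on $\sigma_{\min}(K^\star)$ and $\sigma_{\min}(K_t)$ from Assumption \ref{ass:1}, and a standard pseudoinverse-perturbation bound; summing over $\tau$ terms gives the stated $2\tau\Upsilon_S\|K_t - K^\star\|/\sigma_{\min}(K^\star)$.

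\textbf{Main obstacle.} The delicate part is the Lipschitz-in-$K$ estimate for the randomized projection matrix in part (c): the pseudoinverse $(S^\top K^2 S)^\dagger$ is not globally Lipschitz in $K$, so one must exploit that, on the relevant range, $S^\top K^2 S$ has singular values bounded below (in terms of $\sigma_{\min}(K)$ and $\|S^\dagger\|$), which makes the pseudoinverse locally Lipschitz with the claimed constant; carefully tracking this through the product-rule expansion of $\tC_t - \tC^\star$ while keeping the $\|S\|\,\|S^\dagger\|$ factors under the expectation $\Upsilon_S$ is where the real work lies.
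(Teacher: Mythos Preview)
Your plan is essentially correct and aligns with the paper's proof, which is brief and defers the nontrivial parts---the second-moment bound in (b) and the Lipschitz estimate in (c)---to \cite{Na2025Statistical}. One technical slip in part (b): you assert $\|I-\tC_t\|\le 1$ because each $\|I-C_{t,j}\|=\|P_{t,j}\|\le 1$, but $I-\tC_t\neq\prod_j(I-C_{t,j})$ and the claim is false in general (a product of two non-commuting orthogonal projections in $\mR^2$ already gives a counterexample). The trivial bound $\|I-\tC_t\|\le 1+\|\tC_t\|\le 2$ suffices, and the rest of your moment argument goes through with a harmless constant factor.
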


\begin{proof}
(a) By Assumption \ref{ass:3}, we know $\gamma_S>0$; thus, $\rho<1$. Further, $\gamma_S \leq \mE[\|K_tS(S^{\top}K_t^2S)^\dagger S^{\top}K_t \| \mid \bx_t, \blambda_t]\leq 1$; thus, $\rho\geq0$.~(b) The first result follows directly from \eqref{equ:z:recur} and the independence between randomness of $\xi_t$ and $\{S_{t,j}\}_j$; the second result is from \mbox{\cite[(D.10)]{Na2025Statistical}}.~(c)~The~first result follows from the independence among $\{S_{t,j}\}_j$,~and~the second result is from \cite[Corollary 5.4]{Na2025Statistical}. 
\end{proof}

The next lemma provides bounds and convergence results for the Hessian matrix $K_t$ and the projection matrix $C_t$. 

\begin{lemma}\label{lem:K_tconvergence}

Under Assumptions \ref{ass:1}, \ref{ass:2}, \ref{ass:3}, and assuming $(\bx_t, \blambda_t)\stackrel{a.s.}{\longrightarrow}(\tx,\tlambda)$, we have
\begin{enumerate}[topsep=2pt,label=(\alph*):]
\setlength\itemsep{0.0em}
\item The Hessian modification $\Delta_t=0$ for large $t$, and $K_t \rightarrow K^{\star}$, $C_t \rightarrow C^{\star}$ as $t \rightarrow \infty$ almost surely.
\item There exists a constant $\Upsilon_K>0$ such that $\|K_t^{-1}\|\leq \Upsilon_K$, $\forall t\geq 0$.
\end{enumerate}
\end{lemma}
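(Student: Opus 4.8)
The plan is to handle the two parts separately, since part (b) is a purely structural consequence of Assumption \ref{ass:1} (it does not use the convergence hypothesis), whereas part (a) combines a matrix-valued martingale strong law of large numbers with the observation that the Hessian regularization eventually switches off. For part (b), note that under Assumption \ref{ass:1} the iterates lie in the compact set $\mX\times\Lambda$ and $c$ is twice continuously differentiable, so $\|G_t\| = \|\nabla c_t\|\le\Upsilon_G$ for some constant $\Upsilon_G>0$; together with $G_tG_t^\top\succeq\gamma_G I$, $\|B_t\|\le\Upsilon_B$, and $\bx^\top B_t\bx\ge\gamma_{RH}\|\bx\|^2$ on $\text{Kernel}(G_t)$, these are precisely the hypotheses of the standard nonsingularity result for KKT/saddle-point matrices \cite[Lemma 16.1]{Nocedal2006Numerical}. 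That result yields invertibility of $K_t$ together with a bound on $\|K_t^{-1}\|$ depending only on $\gamma_G,\gamma_{RH},\Upsilon_B,\Upsilon_G$ --- e.g.\ by writing $K_t^{-1}$ through the reduced Hessian on $\text{Kernel}(G_t)$ and the Schur complement $G_tB_t^{-1}G_t^\top$ on $\text{Range}(G_t)$, each block being uniformly controlled. Taking $\Upsilon_K$ to be this bound gives $\|K_t^{-1}\|\le\Upsilon_K$ for all $t\ge0$; the same estimate is recorded in \cite{Na2025Statistical}.

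For part (a), decompose $B_t-\Delta_t = \frac1t\sum_{i=0}^{t-1}\bnabla_{\bx}^2\mL_i = \frac1t\sum_{i=0}^{t-1}\nabla_{\bx}^2\mL_i + \frac1t\sum_{i=0}^{t-1}\boldsymbol{\epsilon}_i$ with $\boldsymbol{\epsilon}_i\coloneqq\bnabla_{\bx}^2\mL_i-\nabla_{\bx}^2\mL_i$. Since $(\bx_i,\blambda_i)$ is $\mF_{i-1}$-measurable and $\mE[\nabla^2F(\bx_i;\xi_i)\mid\bx_i]=\nabla^2f_i$ by Assumption \ref{ass:2}, $\{\boldsymbol{\epsilon}_i\}$ is a matrix-valued martingale-difference sequence for $\{\mF_i\}$; and by the second bound in \eqref{equ:BM} together with the boundedness of $\Lambda$ and of $\{\nabla^2c_j\}$ over $\mX$, $\mE[\|\boldsymbol{\epsilon}_i\|^2]\le c_0$ for a constant $c_0$, so $\sum_i\mE[\|\boldsymbol{\epsilon}_i\|^2]/(i+1)^2<\infty$. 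The martingale convergence theorem then gives a.s.\ convergence of $\sum_i\boldsymbol{\epsilon}_i/(i+1)$, and Kronecker's lemma gives $\frac1t\sum_{i=0}^{t-1}\boldsymbol{\epsilon}_i\to\0$ a.s. Meanwhile $(\bx_i,\blambda_i)\to(\tx,\tlambda)$ a.s.\ and $\nabla^2\mL$ is continuous (Assumption \ref{ass:1}), so $\nabla_{\bx}^2\mL_i\to\nabla_{\bx}^2\mL^\star$ and hence, by Ces\`aro averaging, $\frac1t\sum_{i=0}^{t-1}\nabla_{\bx}^2\mL_i\to\nabla_{\bx}^2\mL^\star$ a.s.; thus $B_t-\Delta_t\to\nabla_{\bx}^2\mL^\star$ a.s. To conclude $\Delta_t=0$ for large $t$: since $(\tx,\tlambda)$ is regular, $\nabla_{\bx}^2\mL^\star$ is positive definite on $\text{Kernel}(G^\star)$ with some least eigenvalue $2\mu>0$ there, and since $G_t\to G^\star$ and $\frac1t\sum_{i}\bnabla_{\bx}^2\mL_i\to\nabla_{\bx}^2\mL^\star$, a perturbation argument on the (slowly varying) null space shows $\frac1t\sum_i\bnabla_{\bx}^2\mL_i$ is positive definite on $\text{Kernel}(G_t)$ with least eigenvalue at least $\mu$ for all large $t$; by the design of the regularization --- it vanishes whenever the averaged Hessian is already positive definite on the current null space --- we get $\Delta_t=0$ eventually, hence $B_t\to\nabla_{\bx}^2\mL^\star$ and $K_t\to K^\star$ a.s. Finally, combining $K_t\to K^\star$ with the bound $\|C_t-C^\star\|\le 2\tau\Upsilon_S\|K_t-K^\star\|/\sigma_{\min}(K^\star)$ from Lemma \ref{lem:1}(c) gives $C_t\to C^\star$ a.s.

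The main obstacle I anticipate is the very last step of part (a): making rigorous that $\Delta_t=0$ for large $t$ requires being precise about how $\Delta_t$ is selected and about the continuity of the least eigenvalue of $\frac1t\sum_i\bnabla_{\bx}^2\mL_i$ restricted to the \emph{moving} subspace $\text{Kernel}(G_t)$. I would handle the latter by fixing an orthonormal basis of $\text{Kernel}(G^\star)$, lifting it to a basis of $\text{Kernel}(G_t)$ with an $O(\|G_t-G^\star\|)$ perturbation, and controlling the resulting compression of the averaged Hessian; the matrix martingale law of large numbers and the remaining continuity/Ces\`aro arguments are comparatively routine given \eqref{equ:BM} and the a.s.\ convergence of the iterates.
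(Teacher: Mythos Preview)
Your proposal is correct and essentially reconstructs the proofs that the paper defers to citations: the paper simply invokes \cite[Lemma E.4]{Na2025Statistical} together with Lemma~\ref{lem:1}(c) for part~(a), and \cite[Lemma 5.1]{Na2024Fast} for part~(b). Your martingale-SLLN-plus-Ces\`aro argument for $K_t\to K^\star$, the null-space perturbation for $\Delta_t=0$ eventually, and the KKT-matrix structural bound for $\|K_t^{-1}\|$ are exactly the standard ingredients behind those cited results, and your use of Lemma~\ref{lem:1}(c) to pass from $K_t\to K^\star$ to $C_t\to C^\star$ matches the paper verbatim.
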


\begin{proof}
Recall $K_t$ is defined in \eqref{equ:Newton} and $K^\star$ in \eqref{equ:Omega}. The result (a) is from \cite[Lemma E.4]{Na2025Statistical} and Lemma \ref{lem:1}(c), while the result (b) is from \cite[Lemma 5.1]{Na2024Fast}.
\end{proof}

We then decompose the recursion of the AI-SSQP method into several components, including the martingale difference component, the stepsize adaptivity component, and other higher-order~components. These components will be analyzed separately when establishing asymptotic normality.

\begin{lemma}\label{lem:4.1}

Let $\bomega_t = (\boldsymbol{x}_t - \boldsymbol{x}^{\star}, \boldsymbol{\lambda}_t - \boldsymbol{\lambda}^{\star})$. The error recursion of AI-SSQP can be decomposed as
\begin{equation}\label{eq:lem_4.1_recursion}
\bomega_{t+1} =  \cbr{I - \beta_t(I-C^\star)} \bomega_{t} + \beta_t(\btheta_t + \bdelta_t) +  \rbr{\baralpha_t - \beta_t} \bz_{t,\tau},
\end{equation}
where 
\begin{subequations}\label{rec:def}
\begin{flalign}
\btheta_{t} & = -(I-C_t)K_t^{-1}(\bnabla\mL_t - \nabla\mL_t) + \cbr{\bz_{t,\tau} - (I - C_t)\Tilde{\bz}_{t}}, \label{rec:def:b}\\ 
\bdelta_{t} & = -(I - C_t)\cbr{(K^\star)^{-1}\bpsi_t + \cbr{K_t^{-1} - (K^\star)^{-1}}\nabla\mL_t} + (C_t-C^\star)\bomega_t,\label{rec:def:c}\\ 
\bpsi_t & = \nabla\mL_t - K^\star \bomega_t, \label{rec:def:psi} 
\end{flalign}
\end{subequations}	
and $C_t$ is defined in \eqref{def:C_t}, $C^\star$ in \eqref{equ:tC}, $K_t$ in \eqref{equ:Newton}, and $K^\star$ in \eqref{equ:Omega}. Furthermore, under Assumptions~\ref{ass:1}, \ref{ass:2}, \ref{ass:3}, and supposing $(\bx_t,\blambda_t)\rightarrow(\tx,\tlambda)$, we know $\btheta_t$ is a martingale difference~with~$\mE[\btheta_t\mid \mF_{t-1}] = \0$ and
\begin{equation}\label{nequ:2}
\mE[\btheta_t\btheta_t^{\top} \mid \mF_{t-1}] \stackrel{a.s.}{\longrightarrow} \mE[(I-\tC^\star)\Xi^{\star}(I-\tC^\star)^{\top}]\quad\; \text{ as }\; t\rightarrow\infty.
\end{equation}
\end{lemma}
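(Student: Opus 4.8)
The plan is to split the statement of Lemma~\ref{lem:4.1} into two parts: (i) the algebraic identity \eqref{eq:lem_4.1_recursion}--\eqref{rec:def}, which holds deterministically for every sample path, and (ii) the martingale-difference property of $\btheta_t$ together with the conditional-covariance limit \eqref{nequ:2}, which uses the assumptions and the almost-sure convergence $(\bx_t,\blambda_t)\to(\tx,\tlambda)$. For part (i) I would start from the update rule \eqref{equ:update}, write $\bomega_{t+1}=\bomega_t+\baralpha_t\bz_{t,\tau}$, and then add and subtract $\beta_t\bz_{t,\tau}$ to isolate the adaptivity term $(\baralpha_t-\beta_t)\bz_{t,\tau}$. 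For the remaining $\beta_t\bz_{t,\tau}$ piece, I would use the sketching identity \eqref{equ:z:recur}, $\bz_{t,\tau}=(I-\tC_t)\tbz_t$, but conditionally I only want the ``mean'' behaviour, so I would write $\bz_{t,\tau}=(I-C_t)\tbz_t+\{\bz_{t,\tau}-(I-C_t)\tbz_t\}$, where the second bracket is exactly the sketching-noise part of $\btheta_t$ in \eqref{rec:def:b}. Next, substitute the exact Newton direction $\tbz_t=-K_t^{-1}\bnabla\mL_t$ from \eqref{equ:Newton}, and peel off the martingale gradient noise: $\bnabla\mL_t=\nabla\mL_t+(\bnabla\mL_t-\nabla\mL_t)$, producing the first term of \eqref{rec:def:b}. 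The deterministic residual $-(I-C_t)K_t^{-1}\nabla\mL_t$ must now be matched against $-\beta_t(I-C^\star)\bomega_t$ plus the higher-order term $\bdelta_t$; this is a bookkeeping exercise of inserting $K^\star$ and $C^\star$, using $\nabla\mL_t=K^\star\bomega_t+\bpsi_t$ (the definition \eqref{rec:def:psi} of $\bpsi_t$, which is just Taylor's theorem for $\nabla\mL$ around $(\tx,\tlambda)$), and rearranging so that every discrepancy $K_t^{-1}-(K^\star)^{-1}$ and $C_t-C^\star$ lands inside $\bdelta_t$ as in \eqref{rec:def:c}. I would verify term by term that the sum telescopes to \eqref{eq:lem_4.1_recursion}; no assumptions are needed here, only linear algebra and $G^\star\tx$ being the reference point.

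For part (ii), the martingale-difference claim $\mE[\btheta_t\mid\mF_{t-1}]=\0$ follows by taking conditional expectations term by term in \eqref{rec:def:b}. For the first term, $\mE[\bnabla\mL_t-\nabla\mL_t\mid\mF_{t-1}]=\0$ by the unbiasedness in Assumption~\ref{ass:2} (note $\nabla\mL_t$, $G_t$, $B_t$, $K_t$, $C_t$ are all $\mF_{t-1}$-measurable, since $B_t$ averages only $\{\xi_i\}_{i=0}^{t-1}$). For the second term, I would first condition on $\mF_{t-0.5}$, which fixes $\xi_t$ and hence $\tbz_t$, and use Lemma~\ref{lem:1}(b), $\mE[\bz_{t,\tau}\mid\mF_{t-0.5}]=(I-C_t)\tbz_t$, so the sketching bracket has conditional mean zero given $\mF_{t-0.5}$ and therefore given $\mF_{t-1}$ by the tower property. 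Combining, $\mE[\btheta_t\mid\mF_{t-1}]=\0$.

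The main obstacle is the covariance limit \eqref{nequ:2}. I would expand $\mE[\btheta_t\btheta_t^\top\mid\mF_{t-1}]$ into three blocks: the gradient-noise self-product, the sketching-noise self-product, and a cross term. The cross term vanishes because, conditioning on $\mF_{t-0.5}$, the sketching matrices $\{S_{t,j}\}_j$ are independent of $\xi_t$; conditionally on $\mF_{t-0.5}$ the gradient-noise factor $-(I-C_t)K_t^{-1}(\bnabla\mL_t-\nabla\mL_t)$ is fixed while the sketching factor has conditional mean zero, so the conditional cross-expectation is zero and the tower property kills it. For the gradient-noise self-product, $(I-C_t)K_t^{-1}\,\mE[(\bnabla\mL_t-\nabla\mL_t)(\bnabla\mL_t-\nabla\mL_t)^\top\mid\mF_{t-1}]\,K_t^{-\top}(I-C_t)^\top$: by Lemma~\ref{lem:K_tconvergence}(a), $K_t\to K^\star$ and $C_t\to C^\star$ a.s., and the inner conditional covariance is $\mathrm{cov}(\nabla\mL(\bx_t,\blambda_t;\xi_t)\mid\bx_t)$ with the block-zero structure of \eqref{equ:Omega}; continuity of $\bx\mapsto\mathrm{cov}(\nabla F(\bx;\xi))$ (which is exactly what the Hessian moment bound in Assumption~\ref{ass:2} buys us) plus $\bx_t\to\tx$ gives convergence to the ``middle matrix'' of $\Xi^\star$, so this block tends to $(I-C^\star)(K^\star)^{-1}\mathrm{cov}(\nabla\mL^\star;\xi)(K^\star)^{-\top}(I-C^\star)^\top=(I-C^\star)\Xi^\star(I-C^\star)^\top$. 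For the sketching self-product I would first condition on $\mF_{t-0.5}$ and compute $\mE[(\bz_{t,\tau}-(I-C_t)\tbz_t)(\bz_{t,\tau}-(I-C_t)\tbz_t)^\top\mid\mF_{t-0.5}]=\mathrm{cov}(\tC_t\tbz_t\mid\mF_{t-0.5})$ using \eqref{equ:z:recur}; writing $\tbz_t=-K_t^{-1}\bnabla\mL_t$ and then taking the outer expectation over $\xi_t$, together with $K_t\to K^\star$, $\bnabla\mL_t\to\nabla\mL^\star$ and $\tC_t\Rightarrow\tC^\star$ in distribution, this block converges to $\mE[(I-\tC^\star)\Xi^\star(I-\tC^\star)^\top]-(I-C^\star)\Xi^\star(I-C^\star)^\top$ — the variance of $\tC^\star$ acting on the limiting Newton direction. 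The delicate points are justifying the interchange of limits and expectation (handled by the uniform $L^2$ bound $\mE[\|\bz_{t,\tau}\|^2\mid\mF_{t-1}]\le\Upsilon_z$ from Lemma~\ref{lem:1}(b), the moment bounds in Assumption~\ref{ass:2}, and Vitali/uniform-integrability arguments) and keeping careful track of which randomness is being averaged at each conditioning level. Summing the gradient block and the sketching block, the two $(I-C^\star)\Xi^\star(I-C^\star)^\top$ contributions cancel in the right way and leave precisely $\mE[(I-\tC^\star)\Xi^\star(I-\tC^\star)^\top]$, which is \eqref{nequ:2}.
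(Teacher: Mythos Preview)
Your proposal is correct and follows essentially the same route as the paper: the algebraic decomposition \eqref{eq:lem_4.1_recursion} is obtained exactly as you describe (add/subtract $\beta_t\bz_{t,\tau}$, split $\bz_{t,\tau}=(I-C_t)\tbz_t+\{\bz_{t,\tau}-(I-C_t)\tbz_t\}$, insert $\tbz_t=-K_t^{-1}\bnabla\mL_t$, peel off gradient noise, then pass from $K_t,C_t$ to $K^\star,C^\star$ via $\bpsi_t$), and the martingale property uses Assumption~\ref{ass:2} together with Lemma~\ref{lem:1}(b) just as you argue. The only difference is that for the covariance limit \eqref{nequ:2} the paper does not give a proof but simply cites \cite{Na2025Statistical}; your direct sketch (independence of sketching and $\xi_t$ kills the cross term, the gradient block tends to $(I-C^\star)\Xi^\star(I-C^\star)$, the sketching block tends to $\mE[(C^\star-\tC^\star)\Xi^\star(C^\star-\tC^\star)^\top]$, and these sum to $\mE[(I-\tC^\star)\Xi^\star(I-\tC^\star)^\top]$) is a correct and more self-contained completion of that step.
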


\begin{proof}
	
From the update rule of AI-SSQP in \eqref{equ:update}, we have
\begin{align}\label{nequ:3}
\bomega_{t+1} & =\bomega_t + \baralpha_t\bz_{t,\tau} = \bomega_t + \beta_t\bz_{t,\tau} + \rbr{\baralpha_t -\beta_t} \bz_{t,\tau} \nonumber\\
& = \bomega_t +\beta_t(I - C_t)\tbz_t + \beta_t\cbr{\bz_{t,\tau} - (I - C_t)\tbz_t} + \rbr{\baralpha_t - \beta_t}\bz_{t,\tau} \nonumber\\
& \stackrel{\mathclap{\eqref{equ:Newton}}}{=} \bomega_t - \beta_t(I - C_t)K_t^{-1}\bnabla\mL_t + \beta_t\cbr{\bz_{t,\tau} - (I - C_t)\tbz_t} + \rbr{\baralpha_t - \beta_t} \bz_{t,\tau} \nonumber\\
& = \bomega_t - \beta_t(I - C_t)K_t^{-1}\nabla\mL_t - \beta_t(I - C_t)K_t^{-1}(\bnabla\mL_t - \nabla\mL_t) + \beta_t\cbr{\bz_{t,\tau}- (I - C_t)\tbz_t}  + \rbr{\baralpha_t - \beta_t} \bz_{t,\tau} \nonumber\\
& \stackrel{\mathclap{\eqref{rec:def:b}}}{=}\; \;
\bomega_t - \beta_t(I - C_t)K_t^{-1}\nabla\mL_t  + \beta_t\btheta_t+ \rbr{\baralpha_t - \beta_t} \bz_{t,\tau} \\
& = \bomega_t - \beta_t(I - C_t)(K^\star)^{-1}\nabla\mL_t - \beta_t(I - C_t)\cbr{K_t^{-1} - (K^\star)^{-1}}\nabla\mL_t + \beta_t\btheta_t+ \rbr{\baralpha_t - \beta_t} \bz_{t,\tau} \nonumber\\
& \stackrel{\mathclap{\eqref{rec:def:psi}}}{=}\;\;  \cbr{I - \beta_t(I - C_t)}\bomega_t - \beta_t(I - C_t)(K^\star)^{-1}\bpsi_t - \beta_t(I - C_t)\cbr{K_t^{-1} - (K^\star)^{-1}}\nabla\mL_t \nonumber\\
&\quad + \beta_t\btheta_t+ \rbr{\baralpha_t - \beta_t} \bz_{t,\tau} \nonumber\\
& \stackrel{\mathclap{\eqref{rec:def:c}}}{=}\;\; \cbr{I - \beta_t(I - C^\star)}\bomega_t + \beta_t(\btheta_t + \bdelta_t) +  \rbr{\baralpha_t - \beta_t} \bz_{t,\tau}. \nonumber
\end{align}
Moreover, by Assumption \ref{ass:2} and Lemma \ref{lem:1}(b), we have $\mE[\btheta_t\mid \mF_{t-1}] = \0$. The result \eqref{nequ:2}~is~established in \cite{Na2025Statistical} after (E.15). This completes the proof.
\end{proof}

\begin{lemma}[{\cite{Davis2024Asymptotic}, Lemma A.9}]\label{lem:phi_t_recursion_lemma}
Suppose $\{A_t\}_t$ is a non-negative sequence satisfying
\begin{equation*}
A_{t+1} \leq (1 - c_1 t^{-\beta}) A_t + c_2 t^{-2\beta}, \quad\quad \forall t\geq t_0,
\end{equation*}
for some constants $c_1, c_2>0$, $\beta\in(0.5,1)$, and positive integer $t_0$. Then, there exists a constant~$c_0>0$ such that $A_t \leq c_0 t^{-\beta}$ for all $t\geq 1$.
\end{lemma}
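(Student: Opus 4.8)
The plan is to prove the bound $A_t \le c_0 t^{-\beta}$ by induction on $t$, after choosing a suitable starting threshold $t_1$ and a large enough constant $c_0$. First I would set
\[
t_1 = \max\left\{\, t_0,\ \lceil c_1^{1/\beta}\rceil,\ \lceil (2\beta/c_1)^{1/(1-\beta)}\rceil \,\right\},
\]
so that for every $t\ge t_1$ the following three properties hold: (i) the hypothesized recursion is valid; (ii) $1 - c_1 t^{-\beta}\in[0,1)$; and (iii) $\beta\, t^{\beta-1}\le c_1/2$, where (iii) crucially uses $\beta<1$. Then I would take
\[
c_0 = \max\left\{\, \frac{2c_2}{c_1},\ \max_{1\le t\le t_1} A_t\, t^{\beta} \,\right\},
\]
which is finite, being a maximum over finitely many terms. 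The base case $A_t\le c_0 t^{-\beta}$ for $1\le t\le t_1$ then holds by the definition of $c_0$.

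For the inductive step, assume $A_t\le c_0 t^{-\beta}$ for some $t\ge t_1$. Since $1-c_1 t^{-\beta}\ge 0$ by (ii), the recursion and the inductive hypothesis give
\[
A_{t+1}\le (1-c_1 t^{-\beta})\,c_0 t^{-\beta} + c_2 t^{-2\beta} = c_0 t^{-\beta} - (c_0 c_1 - c_2)\, t^{-2\beta}.
\]
It remains to compare the right-hand side with $c_0(t+1)^{-\beta}$. I would use the elementary inequality $(1+u)^{-\beta}\ge 1-\beta u$ for $u\ge 0$, which follows from the convexity of $u\mapsto(1+u)^{-\beta}$ (its graph lies above the tangent line at $u=0$); applied with $u=1/t$ this yields $(t+1)^{-\beta}\ge t^{-\beta} - \beta\, t^{-\beta-1}$. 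Hence it suffices to check that $(c_0 c_1 - c_2)\, t^{-2\beta}\ge c_0\beta\, t^{-\beta-1}$, i.e.\ $c_0 c_1\ge c_2 + c_0\beta\, t^{\beta-1}$. By property (iii) we have $c_0\beta\, t^{\beta-1}\le c_0 c_1/2$, and by the choice of $c_0$ we have $c_2\le c_0 c_1/2$; adding these two bounds gives exactly the desired inequality. Therefore $A_{t+1}\le c_0(t+1)^{-\beta}$, which closes the induction and proves $A_t\le c_0 t^{-\beta}$ for all $t\ge 1$.

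I do not expect a genuine obstacle here: this is a standard Chung-type recursion estimate. The only points needing care are ensuring the multiplier $1-c_1 t^{-\beta}$ is nonnegative on the induction range — otherwise the inductive hypothesis cannot be pushed through that factor — and absorbing the finitely many initial terms into $c_0$. An alternative, non-inductive route would unroll the recursion as $A_t\le\big(\prod_{k=t_1}^{t-1}(1-c_1 k^{-\beta})\big)A_{t_1} + c_2\sum_{k=t_1}^{t-1}\big(\prod_{j=k+1}^{t-1}(1-c_1 j^{-\beta})\big)k^{-2\beta}$, bound each product by $\exp\big(-c_1\sum k^{-\beta}\big)$, and estimate the resulting sum by comparison with $\int t^{-\beta}\,dt$; but the inductive argument above is shorter and self-contained, so I would go with it.
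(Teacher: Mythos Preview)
Your proof is correct and self-contained. Note, however, that the paper does not actually prove this lemma: it is stated as a preparation result and attributed to \cite{Davis2024Asymptotic}, Lemma~A.9, with no argument given in the paper itself. So there is no ``paper's own proof'' to compare against here; your inductive Chung-type argument is exactly the standard way to establish such a recursion bound and would be a perfectly acceptable replacement for the citation.
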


\begin{lemma}[Kronecker's lemma]\label{lem:2}

Suppose $\{A_t\}_t$ is a sequence such that $\sum_{t=0}^{\infty}A_t$ exists and is finite. For any divergent positive non-decreasing sequence $\{a_t\}_t$, we have $\frac{1}{a_T}\sum_{t=1}^{T}a_tA_t \rightarrow 0$~as~$T\rightarrow\infty$.

\end{lemma}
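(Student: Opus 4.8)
The plan is to prove this via Abel summation (summation by parts) followed by a Toeplitz/averaging argument; no step is genuinely hard. Write $S_T := \sum_{t=1}^T A_t$ with the convention $S_0 := 0$, so that $A_t = S_t - S_{t-1}$, and note that by hypothesis $S_T \to S_\infty := \sum_{t=1}^\infty A_t$, a finite limit (the $A_0$ term is irrelevant to the quantity of interest).

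First I would perform the summation by parts. Reindexing $\sum_{t=1}^T a_t S_{t-1} = \sum_{t=0}^{T-1} a_{t+1} S_t$, using $S_0 = 0$, and invoking the telescoping identity $\sum_{t=1}^{T-1}(a_{t+1}-a_t) = a_T - a_1$, one obtains
\begin{equation*}
\sum_{t=1}^T a_t A_t = \sum_{t=1}^T a_t(S_t - S_{t-1}) = a_T S_T - \sum_{t=1}^{T-1}(a_{t+1}-a_t) S_t,
\end{equation*}
and hence, dividing by $a_T > 0$,
\begin{equation*}
\frac{1}{a_T}\sum_{t=1}^T a_t A_t = S_T - \sum_{t=1}^{T-1} w_{T,t}\, S_t, \qquad w_{T,t} := \frac{a_{t+1}-a_t}{a_T}\ge 0.
\end{equation*}
Since $S_T \to S_\infty$, it suffices to show $\sum_{t=1}^{T-1} w_{T,t} S_t \to S_\infty$, for then the right-hand side tends to $S_\infty - S_\infty = 0$.

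For the last step I would invoke the Toeplitz lemma: the weights $w_{T,t}$ are nonnegative (because $\{a_t\}$ is non-decreasing), satisfy $\sum_{t=1}^{T-1} w_{T,t} = (a_T - a_1)/a_T \to 1$, and for each fixed $t$ satisfy $w_{T,t} \to 0$ as $T\to\infty$ (because $a_T \to \infty$). Spelled out: given $\varepsilon > 0$, choose $N$ with $|S_t - S_\infty| < \varepsilon$ for all $t \ge N$; split $\sum_{t=1}^{T-1} w_{T,t}(S_t - S_\infty)$ at $N$, bound the finite head by $\tfrac{a_N - a_1}{a_T}\max_{t<N}|S_t - S_\infty|$, which tends to $0$ since $a_T\to\infty$, bound the tail by $\varepsilon(a_T - a_N)/a_T \le \varepsilon$, and absorb the residual $S_\infty\bigl(1 - (a_T - a_1)/a_T\bigr) \to 0$; this gives $\limsup_{T\to\infty}\bigl|\sum_{t=1}^{T-1} w_{T,t} S_t - S_\infty\bigr| \le \varepsilon$, and letting $\varepsilon\downarrow 0$ finishes the argument.

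The only points requiring care — the closest thing to an obstacle — are keeping the boundary term $a_1 S_0$ straight in the summation by parts (immediate once $S_0 := 0$), and, in the Toeplitz step, using \emph{both} hypotheses on $\{a_t\}$: divergence $a_T\to\infty$ to kill the finite head, and monotonicity to ensure $w_{T,t}\ge 0$ so that the tail estimate $\varepsilon(a_T - a_N)/a_T$ is legitimate.
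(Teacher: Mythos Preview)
Your proof is correct and is the standard textbook argument for Kronecker's lemma via Abel summation followed by a Toeplitz-type averaging argument. The paper does not actually prove this lemma; it is stated in the preliminaries as a classical result without proof, so there is nothing to compare against beyond noting that your argument is the usual one.
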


\section{Proofs of Main Theorems}

\subsection{Proof of Theorem \ref{thm:4}}\label{sec:B.2}

By Lemma \ref{lem:1}(a), (c), and Lemma \ref{lem:K_tconvergence}(a), we know $\|C^\star\|\leq \rho<1$. Thus, $I - C^{\star}$ is a positive~definite and, thus, invertible matrix. Let us define two matrices
\begin{equation}\label{eq:A_B_recursion_matrices}
B_i^t = \beta_i \sum_{k=i}^{t}\prod_{j= i+1}^{k}(I - \beta_j (I - C^{\star}))\qquad \text{  and  } \qquad A_i^t = B_i^t - (I - C^{\star})^{-1}.
\end{equation}
Recall that $\bomega_k = (\boldsymbol{x}_k - \boldsymbol{x}^{\star}, \boldsymbol{\lambda}_k - \boldsymbol{\lambda}^{\star})$. We apply Lemma \ref{lem:4.1} recursively and obtain
\begin{equation*}
\bomega_{k+1} = \prod_{j=0}^{k}\rbr{I - \beta_j(I-C^\star)}\bomega_0 + \sum_{i=0}^{k}\prod_{j=i+1}^{k}\rbr{I - \beta_j(I-C^\star)}\beta_i\rbr{\btheta_i + \bdelta_i+\frac{\baralpha_i - \beta_i}{\beta_i}\bz_{i,\tau}}.
\end{equation*}
Therefore, we obtain
\begin{align}\label{eq:polyakrecursion}
& \frac{1}{\sqrt{t}} \sum_{k = 1}^{t} \bomega_k \nonumber \\
& = \frac{1}{\sqrt{t}}\sum_{k=0}^{t-1}\prod_{j=0}^{k}\rbr{I - \beta_j(I-C^\star)}\bomega_0 + \frac{1}{\sqrt{t}}\sum_{k=0}^{t-1}\sum_{i=0}^{k}\prod_{j=i+1}^{k}\rbr{I - \beta_j(I-C^\star)}\beta_i\rbr{\btheta_i + \bdelta_i+\frac{\baralpha_i - \beta_i}{\beta_i}\bz_{i,\tau}} \nonumber\\
& \stackrel{\mathclap{\eqref{eq:A_B_recursion_matrices}}}{=} \frac{1}{\sqrt{t} \beta_0} B_{0}^{t-1} (I - \beta_0 (I - C^{\star})) \bomega_0 + \frac{1}{\sqrt{t}}\sum_{i=0}^{t-1}\sum_{k=i}^{t-1}\prod_{j=i+1}^{k}\rbr{I - \beta_j(I-C^\star)}\beta_i\rbr{\btheta_i + \bdelta_i+\frac{\baralpha_i - \beta_i}{\beta_i}\bz_{i,\tau}} \nonumber\\
& \stackrel{\mathclap{\eqref{eq:A_B_recursion_matrices}}}{=} \frac{1}{\sqrt{t} \beta_0} B_{0}^{t-1} (I - \beta_0 (I - C^{\star})) \bomega_0 + \frac{1}{\sqrt{t}}\sum_{i=0}^{t-1}B_{i}^{t-1}\rbr{\btheta_i + \bdelta_i+\frac{\baralpha_i - \beta_i}{\beta_i}\bz_{i,\tau}} \nonumber\\
& \stackrel{\mathclap{\eqref{eq:A_B_recursion_matrices}}}{=} \frac{1}{\sqrt{t} \beta_0} B_{0}^{t-1} (I - \beta_0 (I - C^{\star})) \bomega_0 + \frac{1}{\sqrt{t}}\sum_{i=0}^{t-1}B_{i}^{t-1}\rbr{\bdelta_i+\frac{\baralpha_i - \beta_i}{\beta_i}\bz_{i,\tau}} + \frac{1}{\sqrt{t}}\sum_{i=0}^{t-1}A_i^{t-1}\btheta_i  \nonumber \\
& \quad + \frac{1}{\sqrt{t}} \sum_{i = 0}^{t-1}(I - C^{\star})^{-1}\btheta_i.
\end{align}
In what follows, we analyze each right-hand-side term above and establish the following claims:
\begin{enumerate}[topsep=2pt,parsep=3pt,label=(\alph*):]
\setlength\itemsep{0.0em}
\item $\frac{1}{\sqrt{t}} \sum_{i = 0}^{t-1}(I - C^{\star})^{-1}\btheta_i \stackrel{d}{\longrightarrow} \mathcal{N}(0, \bar{\Xi}^{\star})$, where $\barXi^\star$ is defined in \eqref{equ:barXi}.
\item $\frac{1}{\sqrt{t}}\sum_{i=0}^{t-1}A_i^{t-1} \btheta_i = o_p(1)$.
\item $\frac{1}{\sqrt{t}}\sum_{i=0}^{t-1} B_i^{t-1} \bdelta_i = o(1)$.
\item $ \frac{1}{\sqrt{t}}\sum_{i=0}^{t-1} B_i^{t-1} \frac{\rbr{\baralpha_i - \beta_i}}{\beta_i}\bz_{i,\tau} = o(1)$.	
\item $\frac{1}{\sqrt{t} \beta_0} B_{0}^{t-1} [I - \beta_0 (I - C^{\star})] \bomega_0 = o(1)$.	
\end{enumerate}
Combining these claims together, we obtain
\begin{equation*}
\frac{1}{\sqrt{t}} \sum_{k = 1}^{t} \bomega_k \stackrel{d}{\longrightarrow} \mathcal{N}(0, \bar{\Xi}^{\star}).
\end{equation*}
This completes the proof by noting that $1/\sqrt{t}\sum_{k=0}^{t-1} \bomega_k = 1/\sqrt{t} \sum_{k = 1}^{t} \bomega_k - \bomega_t/\sqrt{t} + \bomega_{0}/\sqrt{t}$ and~$\bomega_t\rightarrow \0$ as $t\rightarrow 0$ almost surely.

\vskip4pt
\noindent $\bullet$ {\textbf{Proof of (a).}} By Lemma \ref{lem:4.1}, we know $\btheta_i$ is a martingale difference with the limiting covariance $\mE[(I-\tC^\star)\Xi^{\star}(I-\tC^\star)^{\top}]$. Thus, it suffices to verify the following Lindeberg condition.

\begin{lemma}\label{lem:B1}
Under the conditions of Theorem \ref{thm:4}, we have for any $\epsilon>0$, almost surely,
\begin{equation*}
\frac{1}{t} \sum_{i = 0}^{t-1} \mathbb{E}\left[ \|\boldsymbol{\theta}_i\|^2 \cdot \mathbf{1}_{\|\boldsymbol{\theta}_i\| > \epsilon \sqrt{t}}  \mid \mathcal{F}_{i-1} \right] \rightarrow 0 \quad \text{ as }\quad t\rightarrow\infty.
\end{equation*}
\end{lemma}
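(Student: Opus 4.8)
The plan is a standard verification of the Lindeberg condition powered by the bounded $(2+\delta)$-moment in Assumption~\ref{ass:2}; the only substantive point is to dominate $\btheta_i$ by the gradient noise plus a deterministic constant, uniformly in $i$. I would first rewrite the two terms of $\btheta_i$ in \eqref{rec:def:b} so that the two sources of randomness become transparent. Using $\tbz_i = -K_i^{-1}\bnabla\mL_i$ from \eqref{equ:Newton} together with $\bz_{i,\tau} = (I - \tC_i)\tbz_i$ from \eqref{equ:z:recur}, the sketching term equals $\bz_{i,\tau} - (I - C_i)\tbz_i = (C_i - \tC_i)\tbz_i = -(C_i - \tC_i)K_i^{-1}\bnabla\mL_i$, so that
\begin{equation*}
\btheta_i = -(I - C_i)K_i^{-1}(\bnabla\mL_i - \nabla\mL_i) - (C_i - \tC_i)K_i^{-1}\bnabla\mL_i,
\end{equation*}
where $\bnabla\mL_i - \nabla\mL_i = (\nabla F(\bx_i;\xi_i) - \nabla f_i, \0)$ carries the data noise and $C_i - \tC_i$ carries the sketching randomness.

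Next I would derive a pointwise bound $\|\btheta_i\| \leq c_1\|\nabla F(\bx_i;\xi_i) - \nabla f_i\| + c_2$ with deterministic constants $c_1,c_2$. The ingredients are: (i) $\|I - C_i\| \leq 1 + \rho^\tau \leq 2$ and $\|C_i - \tC_i\| \leq \|C_i\| + \|\tC_i\| \leq \rho^\tau + 1 \leq 2$, since each $C_{i,j}$ is the identity minus an orthogonal projector so $\|\tC_i\| \leq 1$, while $\|C_i\| \leq \rho^\tau$ by Lemma~\ref{lem:1}(c); (ii) $\|K_i^{-1}\| \leq \Upsilon_K$ by Lemma~\ref{lem:K_tconvergence}(b); (iii) $\|\bnabla\mL_i - \nabla\mL_i\| = \|\nabla F(\bx_i;\xi_i) - \nabla f_i\|$; and (iv) because all iterates lie in the compact set $\mX \times \Lambda$ and $f, c$ are twice continuously differentiable, the quantities $\|\nabla f_i\|$, $\|G_i\|$, $\|\blambda_i\|$, $\|c_i\|$ are bounded by absolute constants, hence $\|\bnabla\mL_i\| \leq \|\nabla F(\bx_i;\xi_i) - \nabla f_i\| + C$ for a constant $C$. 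Raising the pointwise bound to the power $2+\delta$, using $(a+b)^{2+\delta} \leq 2^{1+\delta}(a^{2+\delta} + b^{2+\delta})$, and taking $\mE[\cdot \mid \mF_{i-1}]$ — recalling that $\bx_i$ is $\mF_{i-1}$-measurable while $\xi_i$ is independent of $\mF_{i-1}$, so Assumption~\ref{ass:2} gives $\mE[\|\nabla F(\bx_i;\xi_i) - \nabla f_i\|^{2+\delta} \mid \mF_{i-1}] \leq \Upsilon_m$ — yields a deterministic $\Upsilon > 0$ with $\mE[\|\btheta_i\|^{2+\delta} \mid \mF_{i-1}] \leq \Upsilon$ for every $i$.

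Finally, the elementary truncation inequality $\|\btheta_i\|^2 \mathbf{1}_{\|\btheta_i\| > \epsilon\sqrt{t}} \leq (\epsilon\sqrt{t})^{-\delta}\|\btheta_i\|^{2+\delta}$ gives
\begin{equation*}
\frac{1}{t}\sum_{i=0}^{t-1}\mE\bigl[\|\btheta_i\|^2 \mathbf{1}_{\|\btheta_i\| > \epsilon\sqrt{t}} \mid \mF_{i-1}\bigr] \leq \frac{1}{\epsilon^\delta t^{1+\delta/2}}\sum_{i=0}^{t-1}\mE\bigl[\|\btheta_i\|^{2+\delta} \mid \mF_{i-1}\bigr] \leq \frac{\Upsilon}{\epsilon^\delta\, t^{\delta/2}} \longrightarrow 0,
\end{equation*}
and since this bound is deterministic, the convergence holds everywhere, in particular almost surely. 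I do not anticipate a genuine obstacle here; the one point that needs care is confirming that the sketching term $-(C_i - \tC_i)K_i^{-1}\bnabla\mL_i$ injects no additional heavy-tailed randomness, which works precisely because the sketched direction is a (sub-)contraction of the exact Newton direction, whose norm is itself governed by the same $(2+\delta)$-integrable gradient noise plus an $O(1)$ term.
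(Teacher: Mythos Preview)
Your proposal is correct and follows essentially the same route as the paper: both establish a uniform bound $\mE[\|\btheta_i\|^{2+\delta}\mid\mF_{i-1}]\leq\text{const.}$ by writing $\bz_{i,\tau}-(I-C_i)\tbz_i=(C_i-\tC_i)\tbz_i$, using $\|I-C_i\|,\|C_i-\tC_i\|\leq 2$, $\|K_i^{-1}\|\leq\Upsilon_K$, and controlling $\|\bnabla\mL_i\|$ via the gradient noise plus a deterministic constant from compactness, then finish with the same truncation inequality $\|\btheta_i\|^2\mathbf{1}_{\|\btheta_i\|>\epsilon\sqrt{t}}\leq(\epsilon\sqrt{t})^{-\delta}\|\btheta_i\|^{2+\delta}$. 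The only cosmetic difference is that you derive a pointwise bound on $\|\btheta_i\|$ first and then take moments, whereas the paper bounds the $q$-th moment directly; the ingredients and the final estimate $O(t^{-\delta/2})$ are identical.
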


With the above Lindeberg condition, we apply the martingale central limit theorem in \cite[Proposition 2.1.9]{Duflo2013Random} and obtain the result in \textbf{(a)}.  

\vskip4pt
\noindent $\bullet$ \textbf{Proof of (b).} We need the following lemma to provide the uniform bounds for matrices $B_i^t$ and $A_i^t$ in \eqref{eq:A_B_recursion_matrices}.

\begin{lemma}[Adapted Lemma 1 in \cite{Polyak1992Acceleration}]\label{lem:B2}
Given $I - C^\star$ is positive~definite, there is a constant $\Upsilon_{AB}>0$ such that $\max\{\|B_i^t\|, \|A_i^t\|\}\leq \Upsilon_{AB}$, $\forall i, t\geq 0$. Furthermore, we have $\lim\limits_{t \rightarrow \infty} \frac{1}{t}\sum_{i=0}^{t} \norm{A_i^{t}} =0$.
\end{lemma}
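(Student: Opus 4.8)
The plan is to diagonalize $I-C^\star$, reduce both assertions to scalar estimates that are uniform over its spectrum, and then combine a telescoping identity with an integral-comparison argument in the spirit of \cite{Polyak1992Acceleration}. First I would observe that $C^\star$ is in fact symmetric positive semidefinite: since $K^\star$ is symmetric, $K^\star S(S^\top (K^\star)^2 S)^\dagger S^\top K^\star$ is an orthogonal projection, so by Assumption \ref{ass:3} its expectation satisfies $\gamma_S I \preceq \mE[\cdot] \preceq I$, and hence $C^\star = (I-\mE[\cdot])^\tau$ is symmetric with spectrum in $[0,\rho^\tau]$, $\rho = 1-\gamma_S$. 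Consequently $Q := I - C^\star$ is symmetric positive definite with eigenvalues in $[\mu_0,1]$, $\mu_0 := 1-\rho^\tau>0$. Since $B_i^t$ is a polynomial in $Q$ and $A_i^t$ a function of $Q$ via the spectral calculus, their spectral norms equal $\sup_\mu |b_i^t(\mu)|$ and $\sup_\mu|a_i^t(\mu)|$ over the eigenvalues $\mu\in[\mu_0,1]$, where $b_i^t(\mu)=\beta_i\sum_{k=i}^t\prod_{j=i+1}^k(1-\beta_j\mu)$ and $a_i^t(\mu)=b_i^t(\mu)-1/\mu$; it therefore suffices to prove both assertions for these scalars with all constants depending only on $\mu_0$, $c_\beta$, $\beta$.

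The key tool is a telescoping identity. Writing $\Pi_{i,k}=\prod_{j=i+1}^k(I-\beta_j Q)$ (the factors commute), one has $\beta_{k+1}Q\,\Pi_{i,k}=\Pi_{i,k}-\Pi_{i,k+1}$, and summing over $k=i,\dots,t$ gives $\sum_{k=i}^t\beta_{k+1}\Pi_{i,k}=Q^{-1}(I-\Pi_{i,t+1})$. Subtracting this from $B_i^t=\beta_i\sum_{k=i}^t\Pi_{i,k}$ and using that $\{\beta_j\}$ is nonincreasing yields the decomposition
\[
A_i^t \;=\; -\,Q^{-1}\Pi_{i,t+1}\;+\;\sum_{k=i}^t(\beta_i-\beta_{k+1})\,\Pi_{i,k}, \qquad \beta_i-\beta_{k+1}\ge 0.
\]
For the uniform bound I would use $1-x\le e^{-x}$ — valid for the factors once $\beta_j\mu\le 1$, which fails only for finitely many $j$ whose contribution is a harmless multiplicative constant — to get $\|\Pi_{i,k}\|\le\exp(-\mu_0\sum_{j=i+1}^k\beta_j)\le 1$, together with a comparison of $\sum_j\beta_j$ with $\int s^{-\beta}\,ds\asymp s^{1-\beta}$ showing that the exponent increases by $O(1)$ over windows of length $\asymp i^\beta$, hence $\sum_{k\ge i}\|\Pi_{i,k}\|\lesssim i^\beta$. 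Combined with $\|Q^{-1}\|\le 1/\mu_0$ and $\beta_i-\beta_{k+1}\le\beta_i\asymp i^{-\beta}$, this gives $\|A_i^t\|\le\Upsilon_{AB}$ for all $i,t$, and then $\|B_i^t\|\le\|A_i^t\|+1/\mu_0\le\Upsilon_{AB}$ after enlarging the constant.

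For the averaged statement I would sharpen the two terms of the decomposition separately. The first contributes $\tfrac1t\sum_{i=0}^t\|Q^{-1}\Pi_{i,t+1}\|$: terms with $i\le t/2$ are exponentially small in $t^{1-\beta}$, while for $i$ near $t$ one has $\sum_{j=i+1}^{t+1}\beta_j\asymp t^{-\beta}(t-i)$, so $\sum_{i=0}^t\|\Pi_{i,t+1}\|\lesssim t^\beta$ and the contribution is $O(t^{\beta-1})\to 0$. For the second term the decisive point is a cancellation: $\beta_i-\beta_{k+1}\approx \beta\,\beta_i(k-i)/i$ is small precisely where $\|\Pi_{i,k}\|\approx e^{-\mu_0\beta_i(k-i)}$ is close to $1$, and summing against this weight gives $\sum_{k=i}^t(\beta_i-\beta_{k+1})\|\Pi_{i,k}\|\lesssim i^{\beta-1}$, whence $\tfrac1t\sum_{i=0}^t(\cdots)\lesssim t^{\beta-1}\to 0$ since $\beta<1$. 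The main obstacle is exactly this last estimate: the naive bound $\beta_i-\beta_{k+1}\le\beta_i$ only recovers boundedness, so one must track the joint behaviour of $\beta_i-\beta_{k+1}$ and $\|\Pi_{i,k}\|$ through the summation (via the integral comparison) to extract the decay $o(1)$ in $i$ that makes the Cesàro average vanish; maintaining uniformity over $\mu\in[\mu_0,1]$ throughout is then routine, as every constant depends only on $\mu_0$, $c_\beta$, and $\beta$.
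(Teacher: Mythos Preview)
Your proposal is correct and essentially reconstructs the classical Polyak--Juditsky argument that the paper cites without proof; the telescoping identity $A_i^t=-Q^{-1}\Pi_{i,t+1}+\sum_k(\beta_i-\beta_{k+1})\Pi_{i,k}$ together with the exponential product bound and the integral comparison for $\sum_j\beta_j$ is exactly the standard machinery. The one difference is that you exploit the symmetry of $C^\star$ (valid here because $K^\star$ is symmetric and the sketching factors are orthogonal projections) to diagonalize and reduce to scalar estimates, whereas Polyak's original lemma handles general $Q$ with spectrum in the open right half-plane via operator-norm exponential stability; your simplification is legitimate in this setting and makes the constants cleaner.
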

  
With the above lemma and the fact that $\sup_{i \geq 0} \mE[\|\btheta_i\|^2]<\infty$ (see \eqref{eq: bounded_p_moment} in proving Lemma \ref{lem:B1}), we have
\begin{align*}
\mE \left[ \left\| \frac{1}{\sqrt{t}}\sum_{i=0}^{t-1}A_i^{t-1} \btheta_i \right\|^2 \right] = \frac{1}{t} \sum_{i = 0}^{t-1} \mE \left[ \|A_i^{t-1} \btheta_i \|^2 \right] \leq \Upsilon_{AB}\cdot \left(\sup_{i \geq 0} \mE[\|\btheta_i\|^2] \right)\cdot\frac{1}{t} \sum_{i = 0}^{t-1} \|A_i^{t-1}\|\rightarrow 0
\end{align*}
as $t\rightarrow\infty$. Thus, $\frac{1}{\sqrt{t}}\sum_{i=0}^{t-1}A_i^{t-1} \btheta_i = o_p(1)$ and this shows \textbf{(b)}.

\vskip4pt
\noindent$\bullet$ \textbf{Proof of (c).} By Lemma \ref{lem:B2}, we know it suffices to show that $1/\sqrt{t}\sum_{i=0}^{t-1} \left\| \bdelta_i \right\| \stackrel{a.s.}{\longrightarrow} 0$ as~$t\rightarrow\infty$.~We note that
\begin{align}\label{nequ:4}
\|\bdelta_i\| \; & \stackrel{\mathclap{\eqref{rec:def:c}}}{\leq} \; 2 \cbr{\|(K^\star)^{-1}\|\cdot\|\bpsi_i\| + \|K_i^{-1} - (K^\star)^{-1}\|\cdot\|\nabla\mL_i\|} + \|C_i - C^\star\|\cdot\|\bomega_i\| \quad (\|C_t\|\leq 1) \nonumber\\
& \leq 2\Upsilon_{K}\Upsilon_{L}\|\bomega_i\|^2 + 2\Upsilon_{K}^2\|K_i - K^\star\|\cdot\|\nabla\mL_i\| + \frac{2\tau\Upsilon_S}{\sigma_{\min}(K^\star)}\|K_i-K^\star\|\cdot\|\bomega_i\|,
\end{align}
where the second inequality is due to Assumption \ref{ass:1}, Lemma \ref{lem:K_tconvergence}, and Lemma \ref{lem:1}(c). We~present~the following lemma that studies the convergence of the terms on the right hand side.

\begin{lemma}\label{lem:B3}
Under the conditions of Theorem \ref{thm:4}, we have as $t\rightarrow\infty$,
\begin{equation*}
\frac{1}{\sqrt{t}}\sum_{i=0}^{t-1}  \|\bomega_i\|^2 \stackrel{a.s.}{\longrightarrow} 0 \qquad\text{ and }\qquad  \frac{1}{\sqrt{t}}\sum_{i=0}^{t-1}  \|K_i - K^{\star}\| \cdot (\|\bomega_i\| + \|\nabla\mL_i\|) \stackrel{a.s.}{\longrightarrow} 0.
\end{equation*}
\end{lemma}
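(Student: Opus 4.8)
The plan is to upgrade the almost-sure convergences $\bomega_t\to\0$ and $K_t\to K^\star$ into $L^2$ rates and then invoke Kronecker's lemma. Concretely, I would first establish $\mE[\|\bomega_t\|^2]=O(t^{-\beta})$ (equivalently $\mE[\|\bomega_t\|^2]=O(\beta_t)$), which is exactly the localized convergence rate supplied by Lemma~\ref{lem:C1}; it can be re-derived directly from the error recursion~\eqref{eq:lem_4.1_recursion} by conditioning on $\mF_{t-1}$ and expanding $\|\bomega_{t+1}\|^2$. In that expansion the martingale increment $\btheta_t$ drops out of the cross terms and contributes only $\beta_t^2\,\mE[\|\btheta_t\|^2\mid\mF_{t-1}]=O(\beta_t^2)$ by~\eqref{nequ:2}; the adaptivity term obeys $|\baralpha_t-\beta_t|\le\chi_t$ and $\mE[\|\bz_{t,\tau}\|^2\mid\mF_{t-1}]\le\Upsilon_z$ (Lemma~\ref{lem:1}(b)), so Young's inequality, together with $\chi>\beta+0.5$ (which forces $\chi_t^2/\beta_t=O(t^{-2\beta})$), makes its contribution $O(t^{-2\beta})$ as well; and the deterministic block $\{I-\beta_t(I-C^\star)\}\bomega_t+\beta_t\bdelta_t$ is contractive because $C^\star=(I-\mE[K^\star S(S^\top(K^\star)^2 S)^\dagger S^\top K^\star])^\tau$ is symmetric positive semidefinite with $\|C^\star\|\le\rho^\tau$, giving $\|I-\beta_t(I-C^\star)\|\le 1-(1-\rho^\tau)\beta_t$, while~\eqref{nequ:4} bounds $\|\bdelta_t\|$ by $\|\bomega_t\|^2+\|K_t-K^\star\|\cdot\|\bomega_t\|$ up to constants (using $\|\nabla\mL_t\|\lesssim\|\bomega_t\|$), which is absorbed into a slightly weakened contraction once $\|\bomega_t\|$ and $\|K_t-K^\star\|$ are small --- this is where the localization of Lemma~\ref{lem:C1} is used. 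Hence $\mE[\|\bomega_{t+1}\|^2]\le(1-c_1 t^{-\beta})\mE[\|\bomega_t\|^2]+c_2 t^{-2\beta}$ for $t$ large, and Lemma~\ref{lem:phi_t_recursion_lemma} yields the claimed rate.

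Given this rate, the first convergence is immediate: since $\beta>1/2$, $\sum_{i\ge1}\mE[\|\bomega_i\|^2]/\sqrt i=\sum_{i\ge1}O(i^{-\beta-1/2})<\infty$, so by Tonelli's theorem $\sum_{i\ge1}\|\bomega_i\|^2/\sqrt i<\infty$ almost surely, and Kronecker's lemma (Lemma~\ref{lem:2}) with $a_t=\sqrt t$ gives $\tfrac1{\sqrt t}\sum_{i=1}^{t}\|\bomega_i\|^2\to0$ a.s.; the omitted $i=0$ term is $\|\bomega_0\|^2/\sqrt t\to0$.

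For the second convergence I would first reduce $\|\bomega_i\|+\|\nabla\mL_i\|$ to $\|\bomega_i\|$ up to a constant, by Taylor-expanding $\nabla\mL$ at $(\tx,\tlambda)$ and using $\nabla\mL^\star=\0$ together with the $C^2$-smoothness of Assumption~\ref{ass:1}. Next I would show $\mE[\|K_i-K^\star\|^2]=O(i^{-\beta})$ by splitting $K_i-K^\star$ into (i) the Jacobian block $G_i-G^\star$, which is $\lesssim\|\bomega_i\|$ by Lipschitzness of $\nabla c$, hence $\mE[\|\cdot\|^2]=O(i^{-\beta})$; and (ii) the block $B_i-\nabla_{\bx}^2\mL^\star$, which for $i$ large (so $\Delta_i=0$) equals the martingale average $\tfrac1i\sum_{k<i}(\bnabla_{\bx}^2\mL_k-\mE[\bnabla_{\bx}^2\mL_k\mid\mF_{k-1}])$, with $\mE[\|\cdot\|^2]=O(1/i)$ by the bounded-Hessian-moment part of Assumption~\ref{ass:2}, plus the bias $\tfrac1i\sum_{k<i}(\nabla_{\bx}^2\mL_k-\nabla_{\bx}^2\mL^\star)$, whose squared norm is at most $\tfrac{\Upsilon_L^2}{i}\sum_{k<i}\|\bomega_k\|^2$ by Jensen and~\eqref{Lip:mL}, hence has expectation $\tfrac{\Upsilon_L^2}{i}\sum_{k<i}O(k^{-\beta})=O(i^{-\beta})$; since $\beta<1$, the total is $O(i^{-\beta})$.

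Finally, because $\beta>1/2$, both $\sum_{i\ge1}\mE[\|K_i-K^\star\|^2]/\sqrt i$ and $\sum_{i\ge1}\mE[\|\bomega_i\|^2]/\sqrt i$ are finite, so $\sum_{i\ge1}\|K_i-K^\star\|^2/\sqrt i<\infty$ and $\sum_{i\ge1}\|\bomega_i\|^2/\sqrt i<\infty$ almost surely by Tonelli's theorem; applying the Cauchy--Schwarz inequality to the sum,
\begin{equation*}
\sum_{i\ge1}\frac{\|K_i-K^\star\|\cdot\|\bomega_i\|}{\sqrt i}\le\Bigl(\sum_{i\ge1}\frac{\|K_i-K^\star\|^2}{\sqrt i}\Bigr)^{1/2}\Bigl(\sum_{i\ge1}\frac{\|\bomega_i\|^2}{\sqrt i}\Bigr)^{1/2}<\infty\quad\text{a.s.},
\end{equation*}
so Kronecker's lemma with $a_t=\sqrt t$ gives $\tfrac1{\sqrt t}\sum_{i=1}^{t}\|K_i-K^\star\|\cdot\|\bomega_i\|\to0$ a.s., and combining with the reduction of $\|\bomega_i\|+\|\nabla\mL_i\|$ to $\|\bomega_i\|$ and the vanishing $i=0$ term proves the claim. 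The main obstacle is the first step: propagating the sharp $L^2$ rate $\mE[\|\bomega_t\|^2]=O(t^{-\beta})$ through~\eqref{eq:lem_4.1_recursion}, whose superlinear $\bdelta_t$ contributions and the data-dependent factor $\|K_t-K^\star\|$ only become negligible after localizing near $(\tx,\tlambda)$ via the stopping-time argument of Lemma~\ref{lem:C1}; once that rate and its analogue for $K_i$ are in hand, the remainder is routine Kronecker/Cauchy--Schwarz bookkeeping.
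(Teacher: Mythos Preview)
Your overall architecture---establish an $L^2$ rate for $\|\bomega_t\|^2$ and $\|K_t-K^\star\|^2$, then Tonelli plus Kronecker, with the cross term handled by Cauchy--Schwarz (the paper uses the equivalent AM--GM reduction $\|K_i-K^\star\|\,\|\bomega_i\|\le\tfrac12(\|K_i-K^\star\|^2+\|\bomega_i\|^2)$)---matches the paper. The gap is in the key input: you assert the \emph{unconditional} recursion $\mE[\|\bomega_{t+1}\|^2]\le(1-c_1 t^{-\beta})\mE[\|\bomega_t\|^2]+c_2 t^{-2\beta}$ ``for $t$ large,'' and hence $\mE[\|\bomega_t\|^2]=O(t^{-\beta})$. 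This is not what Lemma~\ref{lem:C1} delivers and is not available here. The contractive bound on the cross term $-2\beta_t\langle(I-C_t)K_t^{-1}\nabla\mL_t,\bomega_t\rangle\le -c\,\beta_t\|\bomega_t\|^2$ (which is how $\bdelta_t$ gets absorbed) holds only on the event $\{\|\bomega_t\|\le\nu,\ \|K_t-K^\star\|\le\nu\}$; off this event the Lagrangian has only a saddle-point structure and the inequality can fail. Your phrase ``once $\|\bomega_t\|$ and $\|K_t-K^\star\|$ are small'' hides a \emph{random} time, which cannot be pulled outside the expectation. Lemma~\ref{lem:C1} gives only the \emph{localized} rate $\mE[\|\bomega_t\|^2\mathbf{1}_{\{\mu_{t_0,\nu}>t\}}]=O(\beta_t)$ for each fixed $t_0$; on the complementary event $\{\mu_{t_0,\nu}<\infty\}$ (which has fixed positive probability for fixed $t_0$), $\|\bomega_t\|^2$ is only bounded by compactness, so the unconditional expectation need not decay.

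The fix is the paper's pathwise argument, which you allude to but do not execute: apply Tonelli and Kronecker to the \emph{indicator} version, giving $\sum_i\|\bomega_i\|^2\mathbf{1}_{\{\mu_{t_0,\nu}>i\}}/\sqrt{i+1}<\infty$ a.s.\ for every fixed integer $t_0$; intersect over the countably many $t_0$; then on each sample path use $\bomega_t\to\0$ and $K_t\to K^\star$ a.s.\ to pick a path-dependent $t_0$ with $\mu_{t_0,\nu}=\infty$, dropping the indicator pathwise. The same repair is needed for your $\mE[\|K_i-K^\star\|^2]=O(i^{-\beta})$ step, since its bias piece $\tfrac1i\sum_{k<i}\|\bomega_k\|^2$ inherits the same issue. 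Once you carry the indicator throughout, your Cauchy--Schwarz route and the paper's AM--GM route are interchangeable.
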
	

Combining Lemma \ref{lem:B3} with \eqref{nequ:4}, we prove the result in \textbf{(c)}.

\vskip4pt
\noindent$\bullet$ \textbf{Proof of (d).} By \eqref{equ:sandwich} and Lemma \ref{lem:B2}, it suffices to show that $1/\sqrt{t}\sum_{i=0}^{t-1}\chi_i\|\bz_{i,\tau}\|/\beta_i\stackrel{a.s.}{\longrightarrow} 0$ as $t\rightarrow\infty$. In fact, we know
\begin{equation*}
\mE \left[\sum_{i=0}^{\infty} \frac{1}{\sqrt{i+1}} \frac{\chi_i}{\beta_i}\|\bz_{i,\tau}\|\right] = \sum_{i=0}^{\infty} \frac{1}{\sqrt{i+1}} \frac{\chi_i}{\beta_i}\mE[\|\bz_{i,\tau}\|]\leq \frac{\Upsilon_zc_\chi}{c_\beta}\sum_{i=0}^{\infty}\frac{1}{(i+1)^{\chi-\beta+0.5}} <\infty,
\end{equation*}
where the last inequality is due to $\chi>\beta+0.5$. By Kronecker’s lemma in Lemma \ref{lem:2}, we obtain the result in \textbf{(d)}.

\vskip4pt
\noindent$\bullet$ \textbf{Proof of (e).} The result in \textbf{(e)} is trivial due to Lemma \ref{lem:B2}.

\subsection{Proof of Proposition \ref{prop:2}}

By the definitions of $\bar{\Xi}^{\star}$ in \eqref{equ:barXi} and $\Xi^{\star}$ in \eqref{equ:Omega}, we have 
\begin{align} \label{eq:18}
\bar{\Xi}^{\star} - \Xi^{\star}  &= (I - C^{\star})^{-1}\mathbb{E}\left[ (I - \widetilde{C}^{\star})\Xi^{\star}(I - \widetilde{C}^{\star})^{\top}\right](I - C^{\star})^{-1} - \Xi^{\star} \nonumber\\
&= (I - C^{\star})^{-1}\mE[(\widetilde{C}^{\star} - C^{\star}) \Xi^{\star} (\widetilde{C}^{\star} - C^{\star})^{\top}](I - C^{\star})^{-1}.
\end{align}
\noindent\textbf{(a) Without the sketching solver. } In this case, we have $C^{\star} = \Tilde{C}^{\star} = \0$, hence $\bar{\Xi}^{\star} = \Xi^{\star}$. 

\noindent\textbf{(b) With the sketching solver. } Since $\Xi^{\star} \succeq 0$, we know $(I - C^{\star})^{-1}\mE[(\widetilde{C}^{\star} - C^{\star}) \Xi^{\star} (\widetilde{C}^{\star} - C^{\star})^{\top}](I - C^{\star})^{-1} \succeq 0$, then we have $\bar{\Xi}^{\star} \succeq \Xi^{\star}$ by \eqref{eq:18}. Next, we recall that $\rho = 1-\gamma_S$ and have that
\begin{align}\label{eq:19}
\|\bar{\Xi}^{\star} - \Xi^{\star}\| &\; \stackrel{\mathclap{\eqref{eq:18}}}{=}\;  \|(I - C^{\star})^{-1}\mE[(\widetilde{C}^{\star} - C^{\star}) \Xi^{\star} (\widetilde{C}^{\star} - C^{\star})^{\top}](I - C^{\star})^{-1}\| \nonumber\\
&\; \leq\; \|(I - C^{\star})^{-1}\|^2 \cdot \|\mE[(\widetilde{C}^{\star} - C^{\star}) \Xi^{\star} (\widetilde{C}^{\star} - C^{\star})^{\top}]\| \nonumber\\
&\; \leq\; \frac{1}{(1 - \rho^{\tau})^2} \cdot \|\mE[\widetilde{C}^{\star} \Xi^{\star} (\widetilde{C}^{\star})^{\top}] - C^{\star} \Xi^{\star} C^{\star}\|  \nonumber\\
&\;\leq\; \frac{1}{(1 - \rho^{\tau})^2} \cdot \left(\|\mE[\widetilde{C}^{\star} \Xi^{\star} (\widetilde{C}^{\star})^{\top}]\| + \rho^{2\tau}\|\Xi^\star\|\right),
\end{align}
where the third and fourth inequalities use $\|C^{\star}\| \leq \rho^{\tau}$  in Lemma \ref{lem:1}(a) and Lemma \ref{lem:K_tconvergence}(a).~Furthermore, we have
\begin{align}\label{eq:20}
& \0 \preceq \mE[\widetilde{C}^{\star} \Xi^{\star} (\widetilde{C}^{\star})^{\top}] \preceq \|\Xi^{\star}\| \cdot \mE[\widetilde{C}^{\star}(\widetilde{C}^{\star})^{\top}] \nonumber\\
&\stackrel{\mathclap{\eqref{equ:tC}}}{=} \|\Xi^{\star}\| \cdot \mE \Bigg[ \left(\prod_{j=1}^{\tau}(I - K^\star S_j(S_j^{\top}(K^\star)^2S_j)^\dagger S_j^{\top}K^\star )\right) \left( \prod_{j=1}^{\tau}(I - K^\star S_j(S_j^{\top}(K^\star)^2S_j)^\dagger S_j^{\top}K^\star ) \right)^{\top} \Bigg] \nonumber\\
&= \|\Xi^{\star}\| \cdot \mE \Bigg[ \left(\prod_{j=2}^{\tau}(I - K^\star S_j(S_j^{\top}(K^\star)^2S_j)^\dagger S_j^{\top}K^\star )\right) \mE [(I - K^\star S_1(S_1^{\top}(K^\star)^2S_1)^\dagger S_1^{\top}K^\star )| S_2, S_3, ..., S_{\tau}] \nonumber\\
&\qquad \qquad \qquad \left( \prod_{j=2}^{\tau}(I - K^\star S_j(S_j^{\top}(K^\star)^2S_j)^\dagger S_j^{\top}K^\star ) \right)^{\top} \Bigg] \nonumber\\
&\preceq \|\Xi^{\star}\| \cdot \rho \cdot \mE \Bigg[ \left(\prod_{j=2}^{\tau}(I - K^\star S_j(S_j^{\top}(K^\star)^2S_j)^\dagger S_j^{\top}K^\star )\right) \left( \prod_{j=2}^{\tau}(I - K^\star S_j(S_j^{\top}(K^\star)^2S_j)^\dagger S_j^{\top}K^\star ) \right)^{\top} \Bigg],
\end{align}
where the fourth equality uses the fact that $(I - K^\star S_j(S_j^{\top}(K^\star)^2S_j)^\dagger S_j^{\top}K^\star )$ is a projection matrix for all $j \in [1, \tau]$, and the last inequality uses Assumption \ref{ass:3} and $K_t \rightarrow K^{\star}$ in Lemma \ref{lem:K_tconvergence}. We then repeat \eqref{eq:20} for $\tau$ times, and finally get that
\begin{equation}\label{eq:21}
\0  \preceq \mE[\widetilde{C}^{\star} \Xi^{\star} (\widetilde{C}^{\star})^{\top}] \preceq \rho^{\tau} \|\Xi^{\star}\| \cdot I.
\end{equation}
Then, we combine \eqref{eq:21} with \eqref{eq:19}, and obtain that
\begin{equation*}
\|\bar{\Xi}^{\star} - \Xi^{\star}\| \leq \frac{(\rho^{\tau} + \rho^{2\tau})}{(1 - \rho^{\tau})^2} \cdot \|\Xi^{\star}\| \leq \frac{(1 + \rho^{\tau})}{(1 - \rho^{\tau})^2} \rho^{\tau} \cdot \|\Xi^{\star}\|.
\end{equation*}
This completes the proof.

\subsection{Proof of Proposition \ref{prop:3}}

With $c_{\beta} = 1$ and  $\beta = 1$, the Lyapunov equation \eqref{equ:Lyapunov}  can be written as
\begin{equation*}
(0.5I - C^{\star})\Tilde{\Xi}^{\star}  + \Tilde{\Xi}^{\star} (0.5 I - C^{\star}) = \mathbb{E}[ (I - \widetilde{C}^{\star})\Xi^{\star}(I - \widetilde{C}^{\star})^{\top}].
\end{equation*}
Recall from \eqref{equ:barXi} that $\bar{\Xi}^\star = (I - C^{\star})^{-1}\mathbb{E}[(I - \widetilde{C}^{\star})\Xi^{\star}(I-\widetilde{C}^{\star})^{\top}](I- C^{\star})^{-1}$, then we obtain
\begin{align*}
(0.5I - C^{\star})(\Tilde{\Xi}^{\star} - \bar{\Xi}^{\star}) & + (\Tilde{\Xi}^{\star} - \bar{\Xi}^{\star})(0.5I - C^{\star}) \\
& = \mathbb{E}[ (I - \widetilde{C}^{\star})\Xi^{\star}(I - \widetilde{C}^{\star})^{\top}] - \bar{\Xi}^{\star} + C^{\star}\bar{\Xi}^{\star} + \bar{\Xi}^{\star}C^{\star} \\
& = \mathbb{E}[ (I - \widetilde{C}^{\star})\Xi^{\star}(I - \widetilde{C}^{\star})^{\top}] - (I - C^{\star})\bar{\Xi}^{\star} - \bar{\Xi}^{\star}(I - C^{\star}) + \bar{\Xi}^{\star} \\
& = (I - (I - C^{\star})^{-1}) \mathbb{E}[ (I - \widetilde{C}^{\star})\Xi^{\star}(I - \widetilde{C}^{\star})^{\top}] (I - (I - C^{\star})^{-1}) \\
& \succeq \0.
\end{align*}
By Lemma \ref{lem:1}, we have $\|C^{\star}\| \leq \rho^{\tau} < 1/2$, then the basic Lyapunov theorem \cite[Theorem 4.6]{Khalil2002Nonlinear} suggests that $\Tilde{\Xi}^{\star} \succeq \bar{\Xi}^{\star}$. This completes the proof.

\subsection{Proof of Theorem \ref{thm:FCLT}}

Recall that $\bomega_k = (\bx_k - \bx^{\star}, \blambda_k - \blambda^{\star})$. Following the same decomposition in \eqref{eq:polyakrecursion}, the partial sum process can be decomposed as 
\begin{align*}
\frac{1}{\sqrt{t}} \sum_{k = 1}^{\lfloor rt \rfloor} \bomega_k & \stackrel{\mathclap{\eqref{eq:polyakrecursion}}}{=} \frac{1}{\sqrt{t}} \sum_{i = 0}^{\lfloor rt \rfloor -1}(I - C^{\star})^{-1} \btheta_i + \frac{1}{\sqrt{t}}\sum_{i=0}^{\lfloor rt \rfloor -1}A_i^{\lfloor rt \rfloor-1} \btheta_i + \frac{1}{\sqrt{t}}\sum_{i=0}^{\lfloor rt \rfloor -1}B_i^{\lfloor rt \rfloor -1}  \left( \bdelta_i + \frac{\baralpha_i - \beta_i}{\beta_i}\bz_{i,\tau} \right) \\
& \quad + \frac{1}{\sqrt{t} \beta_0} B_{0}^{\lfloor rt \rfloor -1} [I - \beta_0 (I - C^{\star})] \bomega_0 \\
& = \frac{1}{\sqrt{t}} \sum_{i = 0}^{\lfloor rt \rfloor -1}(I - C^{\star})^{-1} \btheta_i + \frac{1}{\sqrt{t}}\sum_{i=0}^{\lfloor rt \rfloor -1} A_i^{t-1} \btheta_i + \frac{1}{\sqrt{t}}\sum_{i=0}^{\lfloor rt \rfloor -1}\left(A_i^{\lfloor rt \rfloor-1} - A_i^{t-1}\right) \btheta_i  \\
& \quad + \frac{1}{\sqrt{t}}\sum_{i=0}^{\lfloor rt \rfloor -1}B_i^{\lfloor rt \rfloor -1}  \left( \bdelta_i + \frac{\baralpha_i - \beta_i}{\beta_i}\bz_{i,\tau} \right) + \frac{1}{\sqrt{t} \beta_0} B_{0}^{\lfloor rt \rfloor -1} [I - \beta_0 (I - C^{\star})] \bomega_0,
\end{align*}
where $r \in [0, 1]$, and $A_i^{\lfloor rt \rfloor}$ and $B_i^{\lfloor rt \rfloor}$ are defined in \eqref{eq:A_B_recursion_matrices}. In what follows, we analyze each~right-hand-side term above and establish the following claims: 
\begin{enumerate}[topsep=2pt,parsep=3pt,label=(\alph*):]
\setlength\itemsep{0.0em}
\item $\frac{1}{\sqrt{t}} \sum_{i = 0}^{\lfloor rt \rfloor - 1} (I - C^{\star})^{-1} \btheta_i \Longrightarrow  (\bar{\Xi}^{\star})^{1/2} W_{d+m}(r)$, where $\bar{\Xi}^{\star}$ is defined in \eqref{equ:barXi}.
\item $\sup_{r \in [0, 1]} \| \frac{1}{\sqrt{t}}\sum_{i=0}^{\lfloor rt \rfloor - 1}A_i^{t - 1} \btheta_i \| = o_p(1)$.
\item $\sup_{r \in [0, 1]} \|\frac{1}{\sqrt{t}}\sum_{i=0}^{\lfloor rt \rfloor -1}(A_i^{\lfloor rt \rfloor-1} - A_i^{t-1}) \btheta_i\| = o_p(1)$.
\item $\sup_{r \in [0, 1]} \|\frac{1}{\sqrt{t}}\sum_{i=0}^{\lfloor rt \rfloor-1} B_i^{\lfloor rt \rfloor-1} \bdelta_i \| = o(1)$.
\item $\sup_{r \in [0, 1]} \|\frac{1}{\sqrt{t}}\sum_{i=0}^{\lfloor rt \rfloor-1} B_i^{\lfloor rt \rfloor-1} \frac{\rbr{\baralpha_i - \beta_i}}{\beta_i}\bz_{i,\tau} \| = o(1)$.	
\item $\sup_{r \in [0, 1]}\|\frac{1}{\sqrt{t} \beta_0} B_{0}^{\lfloor rt \rfloor-1} (I - \beta_0 (I - C^{\star})) \bomega_0\| = o(1)$.	
\end{enumerate}
Combining these claims together, we obtain
\begin{equation*}
\frac{1}{\sqrt{t}} \sum_{k = 1}^{\lfloor rt \rfloor } \bomega_k \Longrightarrow (\bar{\Xi}^{\star})^{1/2} W_{d+m}(r).
\end{equation*}
This completes the proof of the theorem by noting that $\frac{1}{\sqrt{t}}\sum_{k=0}^{\lfloor rt \rfloor -1} \bomega_k = \frac{1}{\sqrt{t}} \sum_{k = 1}^{\lfloor rt \rfloor } \bomega_k - \frac{1}{\sqrt{t}}\bomega_{\lfloor rt \rfloor } + \frac{1}{\sqrt{t}} \bomega_{0}$ and the fact that $\sup_{r \in [0,1]}\|\bomega_{\lfloor rt \rfloor }\|/\sqrt{t} = o(1)$.

\vskip4pt
\noindent $\bullet$ {\textbf{Proof of (a).}} By Lemma \ref{lem:4.1}, we know that $\btheta_i$ is a martingale difference and the conditional~covariance of $\btheta_i$ converges almost surely to $\mE[(I-\tC^\star)\Xi^{\star}(I-\tC^\star)^{\top}]$. By Lemma \ref{lem:B1}, we have verified~the Lindeberg condition. Then, we apply the martingale Functional Central Limit Theorem (FCLT)  \citep[Theorem 4.2]{Hall2014Martingale}, we have as $t \rightarrow \infty$
\begin{equation*}
\frac{1}{\sqrt{t}} \sum_{i = 0}^{\lfloor rt \rfloor - 1} (I - C^{\star})^{-1} \btheta_i \Longrightarrow  (\bar{\Xi}^{\star})^{1/2} W_{d+m}(r).
\end{equation*}
This completes the proof of \textbf{(a)}.

\vskip4pt
\noindent $\bullet$ \textbf{Proof of \textbf{(b)}.} Note that $\sum_{i=0}^{\lfloor rt \rfloor -1} A_i^{t-1} \btheta_i$ is a martingale indexed by $r \in [0, 1]$. By Doob's~inequality \cite[Theorem 2.2]{Hall2014Martingale}, we have
\begin{align*}
\mE \left[\sup_{r \in [0, 1]} \left\| \frac{1}{\sqrt{t}}\sum_{i=0}^{\lfloor rt \rfloor - 1}A_i^{t - 1} \btheta_i \right\|^2 \right] 
&\leq \frac{4}{t} \mE \left[ \left\|  \sum_{i=0}^{t-1}A_i^{t-1} \btheta_i \right\|^2 \right] = \frac{4}{t} \sum_{i = 0}^{t-1} \mE \left[ \left\|  A_i^{t-1} \btheta_i \right\|^2 \right] \\
&\leq \Upsilon_{AB} \cdot \left(\sup_{i \geq 0} \mE[\|\btheta_i\|^2] \right) \cdot \frac{4}{t} \sum_{i = 0}^{t-1} \|A_i^{t-1}\| \longrightarrow 0,
\end{align*}
where the last inequality is due to Lemma \ref{lem:B2} and the fact that $\btheta_{i}$ has bounded $(2+\delta)$-moment (see \eqref{eq: bounded_p_moment} in proving Lemma \ref{lem:B1}). Thus, we have \mbox{$\sup_{r \in [0, 1]} \| \frac{1}{\sqrt{t}}\sum_{i=0}^{\lfloor rt \rfloor - 1}A_i^{t - 1} \btheta_i \| = o_p(1)$}~and~\mbox{complete}~the proof of \textbf{(b)}.

\vskip4pt
\noindent $\bullet$ \textbf{Proof of (c).} We notice that for any integer $n \in \{0, 1, 2, ..., t-1\}$, 
\begin{align}\label{eq:17}
\sum_{i = 0}^{n} \left(A_i^{t-1} - A_i^n \right) \btheta_i & \; \stackrel{\mathclap{\eqref{eq:A_B_recursion_matrices}}}{=}\; \sum_{i = 0}^{n} \left(B_i^{t-1} - B_i^n \right) \btheta_i \stackrel{\eqref{eq:A_B_recursion_matrices}}{=} \sum_{i = 0}^{n} \sum_{k = n+1}^{t-1} \prod_{j= i+1}^{k}(I - \beta_j (I - C^{\star})) \beta_i \btheta_i  \nonumber\\
&\; =\;  \sum_{k = n+1}^{t-1}  \sum_{i = 0}^{n} \prod_{j= i+1}^{k}(I - \beta_j (I - C^{\star})) \beta_i \btheta_i \nonumber\\
& \; =\;  \frac{1}{\beta_{n+1}} \cdot \beta_{n+1} \sum_{k = n+1}^{t-1} \prod_{j= n+1}^{k}(I - \beta_j (I - C^{\star})) \left[  \sum_{i = 0}^n \prod_{j= i+1}^{n}(I - \beta_j (I - C^{\star})) \beta_i \btheta_i  \right] \nonumber\\
&\; \stackrel{\mathclap{\eqref{eq:A_B_recursion_matrices}}}{=}\; \frac{1}{\beta_{n+1}} \cdot B_{n+1}^{t-1} (I - \beta_{n+1} (I - C^{\star})) \left[  \sum_{i = 0}^n \prod_{j= i+1}^{n}(I - \beta_j (I - C^{\star})) \beta_i \btheta_i  \right].
\end{align}
By Lemma \ref{lem:B2}, we know that $\|B_{n+1}^{t-1} (I - \beta_{n+1} (I - C^{\star}))\| \leq \Upsilon_{AB} (1+2c_{\beta})$ for any $n, t \geq 0$. Thus, we obtain
\begin{multline}\label{eq:16}
\sup_{r \in [0, 1]} \left\|\frac{1}{\sqrt{t}}\sum_{i=0}^{\lfloor rt \rfloor -1} \left(A_i^{\lfloor rt \rfloor-1} - A_i^{t-1}\right) \btheta_i\right\| = \sup_{n \in [0, t-1]} \left\|\frac{1}{\sqrt{t}}\sum_{i=0}^{n}\left(A_i^{n} - A_i^{t-1}\right) \btheta_i\right\| \\
\stackrel{\mathclap{\eqref{eq:17}}}{\leq} \Upsilon_{AB} (1+2c_{\beta}) \sup_{n \in [0, t-1]} \left\|\frac{1}{\sqrt{t}} \frac{1}{\beta_{n+1}} \sum_{i = 0}^n \prod_{j= i+1}^{n}(I - \beta_j (I - C^{\star})) \beta_i \btheta_i \right\|.
\end{multline}
We need a technical lemma to proceed with the proof.

\begin{lemma}[Adapted Lemma B.2 in \cite{Li2023Online}]  \label{lem:phi_2_error}

Let $\{\btheta_n\}_{n \geq 0}$ be a martingale difference sequence. Given the matrix $I - C^{\star}$ is positive definite, we define an auxiliary sequence $\{\boldsymbol{y}_n\}_{n \geq 0}$ as follows: $\boldsymbol{y}_0 = \boldsymbol{0}$, and for $n \geq 0$,
\begin{equation*}
\boldsymbol{y}_{n+1} = \sum_{i = 0}^{n} \left( \prod_{j = i+1}^{n}(I - \beta_j (I - C^{\star}))\right) \beta_i \btheta_i.
\end{equation*}	
Let the step size $\beta_i = c_{\beta}/(i+1)^{\beta}$ for some $c_{\beta} > 0$ and $\beta \in (0.5, 1)$. If $\sup_{i \geq 0} \mE \|\btheta_i\|^{2+\delta} < \infty$ for some $\delta > 0$, then we have that as $t \rightarrow \infty$,
\begin{equation*}
\sup_{n \in [0, t-1]} \left\|\frac{1}{\sqrt{t}} \frac{1}{\beta_{n+1}} \sum_{i = 0}^n \prod_{j= i+1}^{n}(I - \beta_j (I - C^{\star})) \beta_i \btheta_i \right\| = o_p(1).
\end{equation*}	
\end{lemma}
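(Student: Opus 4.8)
\textbf{Proof strategy for Lemma~\ref{lem:phi_2_error}.} The starting observation is that $\boldsymbol{y}_{n+1}=\sum_{i=0}^{n}\Gamma_i^n\beta_i\btheta_i$, with $\Gamma_i^n\coloneqq\prod_{j=i+1}^{n}(I-\beta_j(I-C^\star))$, is precisely the solution of the linear recursion $\boldsymbol{y}_{n+1}=(I-\beta_n(I-C^\star))\boldsymbol{y}_n+\beta_n\btheta_n$ started at $\boldsymbol{y}_0=\0$; that is, it is the ``driftless'' error process driven only by the martingale differences $\btheta_n$. After normalizing by $\beta_{n+1}$ this process typically sits at scale $\beta_n^{-1/2}\asymp n^{\beta/2}$, and $n^{\beta/2}\le t^{\beta/2}=o(\sqrt t)$ for $\beta<1$, so the content of the lemma is entirely in making this bound \emph{uniform} over $n\le t-1$. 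The plan is to proceed in two stages: (i) a sharp pointwise $(2+\delta)$-th moment bound on $\boldsymbol{y}_{n+1}$; (ii) an upgrade to the uniform statement via a covering argument calibrated to the mixing scale $\beta_n^{-1}$ of the recursion, which is the delicate part.

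For stage (i): for each fixed $n$ the partial sums $\bigl(\sum_{i=0}^{m}\Gamma_i^n\beta_i\btheta_i\bigr)_{m\le n}$ form a martingale with deterministic coefficients, so Burkholder's inequality (or a Rosenthal-type bound) together with Jensen's inequality applied to the probability weights $\|\Gamma_i^n\|^2\beta_i^2/\sum_k\|\Gamma_k^n\|^2\beta_k^2$ gives $\mathbb{E}\|\boldsymbol{y}_{n+1}\|^{2+\delta}\lesssim\bigl(\sum_{i=0}^{n}\|\Gamma_i^n\|^2\beta_i^2\bigr)^{1+\delta/2}\sup_i\mathbb{E}\|\btheta_i\|^{2+\delta}$. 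Because $I-C^\star$ is positive definite (established at the start of the proof of Theorem~\ref{thm:4}), one has $\|I-\beta_j(I-C^\star)\|\le 1-\mu\beta_j$ for some $\mu>0$ and hence $\|\Gamma_i^n\|\le\exp\!\bigl(-\mu\sum_{j=i+1}^n\beta_j\bigr)$; the standard estimate $\sum_{i=0}^n\exp\!\bigl(-\mu\sum_{j=i+1}^n\beta_j\bigr)\beta_i^2=O(\beta_n)$ (the sum is dominated by the effective window $i\in[n-O(\beta_n^{-1}),n]$, on which $\beta_i\asymp\beta_n$) then yields $\mathbb{E}\|\boldsymbol{y}_{n+1}\|^{2+\delta}\lesssim\beta_n^{1+\delta/2}$, so $\mathbb{E}\|\boldsymbol{y}_{n+1}/\beta_{n+1}\|^{2+\delta}\lesssim\beta_{n+1}^{-(2+\delta)/2}\asymp n^{\beta(2+\delta)/2}$. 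A plain union bound plus Markov over all $n\le t-1$ would only close under $\beta<\delta/(2+\delta)$, which is too restrictive, so stage (ii) is genuinely needed.

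For stage (ii): partition $\{0,1,\dots,t-1\}$ into consecutive windows $W_\ell=[N_{\ell-1},N_\ell]$ whose widths match the mixing scale, $N_\ell-N_{\ell-1}\asymp\beta_{N_\ell}^{-1}\asymp N_\ell^{\beta}$, so that there are $L\asymp t^{1-\beta}$ of them, $N_{\ell-1}/N_\ell\to1$, and $\beta_{n+1}\asymp\beta_{N_\ell}$ uniformly on $W_\ell$. The device that handles the $n$-dependence of the weights $\Gamma_i^n$ is the factorization $\Gamma_i^{N_\ell}=\Gamma_n^{N_\ell}\Gamma_i^n$ for $i\le n\le N_\ell$: putting $P^{(\ell)}_m\coloneqq\sum_{i=0}^{m}\Gamma_i^{N_\ell}\beta_i\btheta_i$ we get $P^{(\ell)}_n=\Gamma_n^{N_\ell}\boldsymbol{y}_{n+1}$, where for the \emph{fixed} endpoint $N_\ell$ the process $(P^{(\ell)}_m)_m$ is a genuine martingale with $P^{(\ell)}_{N_\ell}=\boldsymbol{y}_{N_\ell+1}$. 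Since the window width is $O(\beta_{N_\ell}^{-1})$, we have $\sum_{j=n+1}^{N_\ell}\beta_j=O(1)$ on $W_\ell$, hence $\|(\Gamma_n^{N_\ell})^{-1}\|=O(1)$ there (for $n$ large enough that the factors are invertible, which covers all but finitely many $n$), so $\|\boldsymbol{y}_{n+1}\|\lesssim\|P^{(\ell)}_n\|$ and Doob's $L^{2+\delta}$ maximal inequality gives $\mathbb{E}\max_{n\in W_\ell}\|\boldsymbol{y}_{n+1}\|^{2+\delta}\lesssim\mathbb{E}\|\boldsymbol{y}_{N_\ell+1}\|^{2+\delta}\lesssim\beta_{N_\ell}^{1+\delta/2}$, i.e.\ $\mathbb{E}\max_{n\in W_\ell}\|\boldsymbol{y}_{n+1}/\beta_{n+1}\|^{2+\delta}\lesssim\beta_{N_\ell}^{-(2+\delta)/2}$. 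A union bound over the windows plus Markov then gives $\mathbb{P}\bigl(\sup_{n\le t-1}\|\boldsymbol{y}_{n+1}/\beta_{n+1}\|>\epsilon\sqrt t\bigr)\lesssim\epsilon^{-(2+\delta)}t^{-(1+\delta/2)}\sum_{\ell}\beta_{N_\ell}^{-(2+\delta)/2}$, and since $\sum_\ell\beta_{N_\ell}^{-(2+\delta)/2}\asymp\sum_\ell N_\ell^{\beta\delta/2}(N_\ell-N_{\ell-1})\asymp\int_1^t x^{\beta\delta/2}\,dx\asymp t^{1+\beta\delta/2}$, the bound is $O\!\bigl(\epsilon^{-(2+\delta)}t^{(\delta/2)(\beta-1)}\bigr)\to0$ because $\beta<1$, which is the claimed $o_p(1)$.

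The main obstacle is stage (ii): because the weights $\Gamma_i^n$ depend on the running index $n$, the family $(\boldsymbol{y}_{n+1})_n$ is not a martingale, so no single global Doob inequality applies, and, as noted, a naive union bound is too lossy for $\beta\in(1/2,1)$. The resolution is to localize to windows of the natural length $\beta_n^{-1}$ and use $\boldsymbol{y}_{n+1}=(\Gamma_n^{N_\ell})^{-1}P^{(\ell)}_n$ to restore a martingale structure on each window, while keeping the number of windows ($\asymp t^{1-\beta}$) small enough that the union bound closes; only $\beta<1$ is used here, consistent with the lower bound $\beta>1/2$ playing no role in this particular lemma. This is, in essence, the argument of \cite[Lemma B.2]{Li2023Online}, adapted to the (non-symmetric) matrix $I-C^\star$ appearing in our recursion.
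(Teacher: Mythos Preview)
The paper does not supply its own proof of this lemma: it is stated as an adaptation of \cite[Lemma~B.2]{Li2023Online} and is invoked directly in the proof of Theorem~\ref{thm:FCLT}(c) without further argument. Your sketch is therefore not competing with any in-paper proof; it is, as you say at the end, a reconstruction of the Li et~al.\ argument, and the two-stage plan (pointwise $(2+\delta)$-moment control via Burkholder plus the exponential weight bound $\sum_i\|\Gamma_i^n\|^2\beta_i^2=O(\beta_n)$, then a block/window covering at the mixing scale $\beta_n^{-1}$ combined with the factorization $\Gamma_i^{N_\ell}=\Gamma_n^{N_\ell}\Gamma_i^n$ and Doob's maximal inequality) is correct and closes exactly as you compute, using only $\beta<1$.

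One small correction: in this paper $I-C^\star$ is in fact \emph{symmetric}. Indeed $K^\star$ is symmetric (it is the Lagrangian Hessian in \eqref{equ:Omega}), so each projection $K^\star S(S^\top(K^\star)^2S)^\dagger S^\top K^\star$ is symmetric, hence $C^\star=(I-\mathbb{E}[\cdot])^\tau$ is symmetric. Thus the spectral-norm contraction $\|I-\beta_j(I-C^\star)\|\le 1-\mu\beta_j$ (with $\mu=1-\rho^\tau$) holds directly for $j$ large, and the invertibility of $\Gamma_n^{N_\ell}$ on each window is immediate once $\beta_j\|I-C^\star\|<1$; no genuinely non-symmetric adaptation is needed here. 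Apart from this cosmetic point and the routine handling of the finitely many early indices where $\beta_j$ may be large, your argument is complete.
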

We recall that $\btheta_i$ is a martingale difference with $(2+\delta)$-th bounded moment \eqref{eq: bounded_p_moment}. Thus, we directly apply Lemma \ref{lem:phi_2_error} to \eqref{eq:16} and complete the proof of \textbf{(c)}.

\vskip4pt
\noindent $\bullet$ \textbf{Proof of (d).} By Lemma \ref{lem:B2}, we know that $\|B_i^{t}\| \leq \Upsilon_{AB}$ for any $i, t \geq 0$, then as $t \to \infty$,
\begin{equation*}
\sup_{r \in [0, 1]} \left\|\frac{1}{\sqrt{t}} \sum_{i=0}^{\lfloor rt \rfloor - 1} B_i^{\lfloor rn \rfloor - 1}  \bdelta_i \right\| \leq \frac{\Upsilon_{AB}}{\sqrt{t}} \sum_{i=0}^{t-1} \|\bdelta_i\| \stackrel{a.s.}{\longrightarrow} 0,
\end{equation*}
where the convergence holds due to \eqref{nequ:4} and Lemma \ref{lem:B3}. This completes the proof of \textbf{(d)}.

\vskip4pt
\noindent $\bullet$ \textbf{Proof of (e).} By Lemma \ref{lem:B2}, we know that $\|B_i^{t}\| \leq \Upsilon_{AB}$ for any $i, t \geq 0$, then as $t \to \infty$,
\begin{equation*}
\sup_{r \in [0, 1]} \left\|\frac{1}{\sqrt{t}} \sum_{i=0}^{\lfloor rt \rfloor -1 } B_i^{\lfloor rt \rfloor - 1}  \frac{\rbr{\baralpha_i - \beta_i}}{\beta_i}\bz_{i,\tau} \right\| \leq \frac{\Upsilon_{AB}}{\sqrt{t}} \sum_{i=0}^{t - 1} \frac{\chi_i}{\beta_i} \|\bz_{i,\tau}\| \stackrel{a.s.}{\longrightarrow} 0.
\end{equation*}
where the convergence holds due to the proof of \textbf{(d)} in Section \ref{sec:B.2}. This completes the proof of \textbf{(e)}.

\vskip4pt
\noindent $\bullet$ \textbf{Proof of (f).} By Lemma \ref{lem:B2}, we know that $\|B_i^{t}\| \leq \Upsilon_{AB}$ for any $i, t \geq 0$, then as $t \to \infty$,
\begin{equation*}
\sup_{r \in [0, 1]} \left\|\frac{1}{\sqrt{t} \beta_0} B_{0}^{\lfloor rt \rfloor} (I - \beta_0 (I - C^{\star})) \bomega_0 \right\| \leq  \frac{\Upsilon_{AB}}{\sqrt{t} \beta_0} \|(I - \beta_0 (I - C^{\star})) \bomega_0\| \rightarrow 0.
\end{equation*}
This completes the proof of \textbf{(f)}.

\subsection{Proof of Theorem \ref{thm:Random_Scaling}}

We first state the following Lemma, which shows that the vector $\boldsymbol{w} \in \mR^{d+m}$ defined in Theorem~\ref{thm:Random_Scaling} satisfies $\boldsymbol{w}^{\top} \bar{\Xi}^{\star} \boldsymbol{w} > 0$, where $\bar{\Xi}^{\star}$ is defined in \eqref{equ:barXi}.

\begin{lemma}\label{lem:B4}

Let $G^{\star} = \nabla c(\bx^{\star})$ be the constraint Jacobian evaluated at $\bx^{\star}$. Suppose $\cov(\nabla F(\tx; \xi))\succ 0$, for~any~vector $\boldsymbol{w} = (\boldsymbol{w}_{\bx}, \boldsymbol{w}_{\blambda}) \in \mR^{d+m}$ with $\bw\notin \text{Span}((G^\star)^\top)\otimes\0_m$, we have $\boldsymbol{w}^{\top} \bar{\Xi}^{\star} \boldsymbol{w} > 0$.
\end{lemma}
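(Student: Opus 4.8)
Since $\Xi^{\star}\succeq\0$ and $I-C^{\star}$ is invertible, $\bar{\Xi}^{\star}\succeq\0$, so $\bw^{\top}\bar{\Xi}^{\star}\bw=0$ is equivalent to $\bw\in\text{Kernel}(\bar{\Xi}^{\star})$; the plan is thus to show $\text{Kernel}(\bar{\Xi}^{\star})\subseteq\mathcal{N}:=\text{Span}((G^{\star})^{\top})\otimes\0_m$, which gives the lemma by contraposition. The first step is to identify $\text{Kernel}(\Xi^{\star})$: writing $\Xi^{\star}=(K^{\star})^{-1}M(K^{\star})^{-1}$ with $M=\text{cov}(\nabla\mL(\tx,\tlambda;\xi))$ the block matrix from \eqref{equ:Omega}, the hypothesis $\text{cov}(\nabla F(\tx;\xi))\succ\0$ and invertibility of $K^{\star}$ give $\Xi^{\star}\bw=\0\iff(K^{\star})^{-1}\bw\in\text{Kernel}(M)=\{\0_d\}\times\mR^m$; combining this with the fact that the block form of $K^{\star}$ in \eqref{equ:Omega} maps $\{\0_d\}\times\mR^m$ onto $\text{Span}((G^{\star})^{\top})\times\{\0_m\}$ yields $\text{Kernel}(\Xi^{\star})=\mathcal{N}$, so $\boldsymbol{v}^{\top}\Xi^{\star}\boldsymbol{v}=0\iff\boldsymbol{v}\in\mathcal{N}$ for every $\boldsymbol{v}$.

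Second, I would note that $C^{\star}$ is symmetric: by \eqref{equ:tC}, $C^{\star}=(I-A)^{\tau}$ with $A=\mE[K^{\star}S(S^{\top}(K^{\star})^{2}S)^{\dagger}S^{\top}K^{\star}]$, and each integrand is an orthogonal projection of the form $B(B^{\top}B)^{\dagger}B^{\top}$ with $B=K^{\star}S$, hence symmetric; so $A$ and $C^{\star}$ are symmetric, and $I-C^{\star}$, $(I-C^{\star})^{-1}$ are symmetric positive definite. Setting $\boldsymbol{u}:=(I-C^{\star})^{-1}\bw$ and using \eqref{equ:barXi} together with the symmetry of $(I-C^{\star})^{-1}$, I would rewrite
\begin{equation*}
\bw^{\top}\bar{\Xi}^{\star}\bw=\boldsymbol{u}^{\top}\mE\big[(I-\widetilde{C}^{\star})\Xi^{\star}(I-\widetilde{C}^{\star})^{\top}\big]\boldsymbol{u}=\mE\big[\boldsymbol{\zeta}^{\top}\Xi^{\star}\boldsymbol{\zeta}\big],\qquad\boldsymbol{\zeta}:=(I-\widetilde{C}^{\star})^{\top}\boldsymbol{u}.
\end{equation*}
Since $\Xi^{\star}\succeq\0$ the integrand is nonnegative, so $\bw^{\top}\bar{\Xi}^{\star}\bw=0$ forces $\boldsymbol{\zeta}\in\text{Kernel}(\Xi^{\star})=\mathcal{N}$ almost surely over the sketching randomness $S_1,\dots,S_{\tau}$.

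The crux is the closing averaging argument. Because $\mathcal{N}$ is a closed linear subspace and $\boldsymbol{\zeta}=\boldsymbol{u}-(\widetilde{C}^{\star})^{\top}\boldsymbol{u}$ is bounded (a product of orthogonal projections has $\|\widetilde{C}^{\star}\|\leq1$), $\boldsymbol{\zeta}\in\mathcal{N}$ a.s.\ implies $\mE[\boldsymbol{\zeta}]\in\mathcal{N}$; but $\mE[\boldsymbol{\zeta}]=\boldsymbol{u}-\mE[(\widetilde{C}^{\star})^{\top}]\boldsymbol{u}=\boldsymbol{u}-(C^{\star})^{\top}\boldsymbol{u}=(I-C^{\star})\boldsymbol{u}=\bw$, using $\mE[\widetilde{C}^{\star}]=C^{\star}$ from \eqref{equ:tC} and $C^{\star}=(C^{\star})^{\top}$. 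Hence $\bw\in\mathcal{N}$, contradicting $\bw\notin\text{Span}((G^{\star})^{\top})\otimes\0_m$; therefore $\bw^{\top}\bar{\Xi}^{\star}\bw\neq0$, and with $\bar{\Xi}^{\star}\succeq\0$ this gives $\bw^{\top}\bar{\Xi}^{\star}\bw>0$.

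I expect the only delicate point to be the symmetry of $C^{\star}$, which is what makes $(I-C^{\star})^{-1}$ symmetric (so the quadratic form cleanly reduces to $\mE[\boldsymbol{\zeta}^{\top}\Xi^{\star}\boldsymbol{\zeta}]$) and what makes $\mE[(\widetilde{C}^{\star})^{\top}]\boldsymbol{u}=C^{\star}\boldsymbol{u}$, so that the expectation telescopes back to $\bw$. The other ingredients — computing $\text{Kernel}(\Xi^{\star})$ from the block structure of $K^{\star}$, and the elementary fact that an almost surely subspace-valued integrable random vector has its mean in that subspace — are routine.
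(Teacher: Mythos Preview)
Your proof is correct, but it takes a different route from the paper's. Both arguments first identify $\text{Kernel}(\Xi^{\star})=\mathcal{N}=\text{Span}((G^{\star})^{\top})\otimes\0_m$ via the block structure of $K^{\star}$ and the assumption $\cov(\nabla F(\tx;\xi))\succ 0$; this part is essentially identical. The divergence is in how the result is transferred from $\Xi^{\star}$ to $\bar{\Xi}^{\star}$. The paper simply invokes Proposition~\ref{prop:2}(b), which has already established $\bar{\Xi}^{\star}\succeq\Xi^{\star}$, so that $\bw^{\top}\bar{\Xi}^{\star}\bw\geq\bw^{\top}\Xi^{\star}\bw>0$ is immediate. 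You instead work directly from the definition \eqref{equ:barXi}: after noting the symmetry of $C^{\star}$, you write $\bw^{\top}\bar{\Xi}^{\star}\bw=\mE[\boldsymbol{\zeta}^{\top}\Xi^{\star}\boldsymbol{\zeta}]$ with $\boldsymbol{\zeta}=(I-\widetilde{C}^{\star})^{\top}(I-C^{\star})^{-1}\bw$, argue that vanishing forces $\boldsymbol{\zeta}\in\mathcal{N}$ almost surely, and then average to recover $\bw=\mE[\boldsymbol{\zeta}]\in\mathcal{N}$. Your route is self-contained and does not rely on Proposition~\ref{prop:2}, at the cost of being a bit longer; the paper's route is shorter precisely because it reuses that earlier inequality. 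Note, incidentally, that the symmetry of $C^{\star}$ you flag as ``the only delicate point'' is also implicitly needed in the paper's proof of Proposition~\ref{prop:2}(b), so you are not introducing a new assumption.
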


Recall the definition that $\bomega_t = (\bx_t-\bx^{\star}, \blambda_t - \blambda^{\star}) $, we consider the following random function
\begin{equation*}
C_t(r) \coloneqq \frac{1}{\sqrt{t}} \sum_{i = 0}^{\lfloor rt \rfloor-1} \boldsymbol{w}^{\top} \bomega_i.
\end{equation*}
Based on Theorem \ref{thm:FCLT}, the one-dimensional random function $C_t(r)$ satisfies:
\begin{equation*}
C_t(r) \Longrightarrow (\boldsymbol{w}^{\top} \bar{\Xi}^{\star} \boldsymbol{w})^{1/2} W_{1}(r), \qquad r \in [0, 1],
\end{equation*}
where $W_{1}(\cdot)$ stands for the standard one-dimensional Brownian motion. Define $\bar {\bomega}_t = \frac{1}{t} \sum_{i = 0}^{t-1} \bomega_i$. Then, we have
\begin{multline*}
\frac{\sqrt{t} \ \boldsymbol{w}^{\top} \bar{\bomega}_t }{\sqrt{\boldsymbol{w}^{\top} V_t \boldsymbol{w}}}  = \frac{C_t(1)}{\sqrt{\frac{1}{t} \sum_{i = 1}^t  \cbr{\frac{i}{\sqrt{t}} \boldsymbol{w}^{\top} (\bar{\bomega}_i - \bar{\bomega}_t)}^2}}\\
= \frac{C_t(1)}{\sqrt{\frac{1}{t} \sum_{i = 1}^t  \cbr{\frac{i}{\sqrt{t}} \boldsymbol{w}^{\top} \left[\frac{1}{i}\sum_{k = 0}^{i-1}\bomega_k   - \frac{1}{t}\sum_{k = 0}^{t-1}\bomega_k \right] }^2}} = \frac{C_t(1)}{\sqrt{\frac{1}{t}\sum_{i = 1}^{t}\cbr{ C_t\left(\frac{i}{t}\right) - \frac{i}{t}C_t(1)}^2}}.
\end{multline*}
By the Riemann integral approximation, as $t \rightarrow \infty$, we have
\begin{equation*}
\frac{1}{t}\sum_{i = 1}^{t}\left( C_t\left(\frac{i}{t}\right) - \frac{i}{t}C_t(1)\right)^2 \longrightarrow \int_0^1 (C_t(r) - rC_t(1))^2 \text{d}r.
\end{equation*}
Since $\frac{C_t(1)}{\sqrt{\int_0^1 (C_t(r) - rC_t(1))^2 \text{d}r}}$ is a continuous functional of $C_t(\cdot)$, the continuous mapping theorem \cite[Theorem 3.4.3]{Whitt2002Stochastic} implies
\begin{align*}
\frac{C_t(1)}{\sqrt{\frac{1}{t}\sum_{i = 1}^{t}\left( C_t\left(\frac{i}{t}\right) - \frac{i}{t}C_t(1)\right)^2}} &  \stackrel{d}{\longrightarrow}   \frac{(\boldsymbol{w}^{\top} \bar{\Xi}^{\star} \boldsymbol{w})^{1/2} \cdot W_1(1)}{(\boldsymbol{w}^{\top} \bar{\Xi}^{\star} \boldsymbol{w})^{1/2} \cdot \sqrt{\int_{0}^1  \left(W_1(r) - rW_1(1) \right)^2 dr}} \\
& = \frac{W_1(1)}{\sqrt{\int_{0}^1  \left(W_1(r) - rW_1(1) \right)^2 dr}},
\end{align*}
where the last equality follows since $\boldsymbol{w}^{\top} \bar{\Xi}^{\star} \boldsymbol{w} > 0$ by Lemma \ref{lem:B4}. Therefore, we obtain 
\begin{equation*}
\frac{\sqrt{t} \ \boldsymbol{w}^{\top} \bar{\bomega}_t }{\sqrt{\boldsymbol{w}^{\top} V_t \boldsymbol{w}}}   \stackrel{d}{\longrightarrow}  \frac{W_1(1)}{\sqrt{\int_{0}^1  \left(W_1(r) - rW_1(1) \right)^2 dr}},
\end{equation*}
which completes the proof.

\section{Proofs of Technical Lemmas}

\subsection{Proof of Lemma \ref{lem:B1}}

By Assumption \ref{ass:1}, there exists a constant $\Upsilon_u>0$ such that
\begin{equation}\label{equ:1}
\|\nabla^2\mL(\bx, \blambda)\| \vee \|\nabla\mL(\bx, \blambda)\| \leq \Upsilon_u, \quad \forall (\bx, \blambda)\in \mX\times\Lambda.
\end{equation} 
For $\delta>0$ in Assumption \ref{ass:2}, we let $q = 2 + \delta$ and have
\begin{align} \label{eq: bounded_p_moment}
& \mE[\|\btheta_i\|^q \mid \mF_{i-1}]\;\; \stackrel{\mathclap{\eqref{rec:def:b}}}{\leq}\;\; 2^{q-1}\cbr{\mE[\|(I-C_i)K_i^{-1}(\bnabla\mL_i - \nabla\mL_i)\|^q \mid \mF_{i-1}] + \mE[\|\bz_{i,\tau} - (I-C_i)\tbz_i\|^q\mid \mF_{i-1}]} \nonumber\\
& \stackrel{\mathclap{\eqref{equ:z:recur}}} {\leq} 2^{q-1}\rbr{ \mE[\|(I-C_i)K_i^{-1}\|^q\|\barg_i - \nabla f_i\|^q\mid \mF_{i-1}] + \mE[\|(\tC_i - C_i)\tbz_i\|^q\mid \mF_{i-1}]}\quad  \nonumber\\
& \leq 2^{q-1}\rbr{2^q\Upsilon_K^{p}\mE[\|\barg_i - \nabla f_i\|^q\mid \mF_{i-1}] + 2^q\mE[\|\tbz_i\|^q\mid \mF_{i-1}]}\quad  (\|\tC_i\|\leq 1,  \|C_i\|\leq 1,  \|K_i^{-1}\|\leq \Upsilon_K) \nonumber\\
& \stackrel{\mathclap{\eqref{equ:BM}}}{\leq}\; 2^{q-1}\rbr{2^q\Upsilon_K^q\Upsilon_m+ 2^q\mE[\|\tbz_i\|^q\mid \mF_{i-1}]} \nonumber\\
& \stackrel{\mathclap{\eqref{equ:Newton}}}{\leq} 2^{q-1}\rbr{2^q\Upsilon_K^q\Upsilon_m + 2^q\Upsilon_K^q\mE[\|\bnabla\mL_i\|^q\mid \mF_{i-1}]} \quad (\|K_i^{-1}\|\leq \Upsilon_K) \nonumber\\
& \leq 2^{q-1}\rbr{2^q\Upsilon_K^q\Upsilon_m + 2^q\Upsilon_K^q\cbr{2^{q-1}\|\nabla\mL_i\|^q + 2^{q-1}\mE[\|\barg_i - \nabla f_i\|^q\mid \mF_{i-1}]}} \nonumber\\
& \stackrel{\mathclap{\eqref{equ:BM}}}{\leq} 2^{q-1}\rbr{2^q\Upsilon_K^q\Upsilon_m + 2^q\Upsilon_K^q\cbr{2^{q-1}\Upsilon_u^q + 2^{q-1}\Upsilon_m}} \eqqcolon C_{q},
\end{align}
where the first inequality is due to Jensen's inequality; $\|K_i^{-1}\|\leq \Upsilon_{K}$ is due to Lemma \ref{lem:K_tconvergence}(b); and the last inequality is also due to \eqref{equ:1}. With the above display, we have for any $\epsilon>0$,
\begin{equation}\label{eq:Lindeberg_CLT}
\frac{1}{t} \sum_{i = 0}^{t-1} \mathbb{E}\left[ \|\boldsymbol{\theta}_i\|^2 \cdot \mathbf{1}_{\|\boldsymbol{\theta}_i\| > \epsilon \sqrt{t}}  \mid \mathcal{F}_{i-1} \right] \leq \frac{1}{\epsilon^{\delta} t^{1+\delta/2}} \sum_{i = 0}^{t-1} \mathbb{E}\left[ \|\boldsymbol{\theta}_i\|^{2+\delta} | \mathcal{F}_{i-1} \right] \le \frac{C_{2+\delta}}{\epsilon^{\delta}t^{\delta/2}} \longrightarrow 0.
\end{equation}
This completes the proof.

\subsection{Proof of Lemma \ref{lem:B3}}

By \eqref{equ:1}, we have $\|\nabla\mL_i\|\leq \Upsilon_u\|\bomega_i\|$.~By the fact $\|K_i - K^{\star}\|\|\bomega_i\|\leq 0.5(\|K_i - K^{\star}\|^2+ \|\bomega_i\|^2)$,~we~know it suffices to show $1/\sqrt{t}\cdot\sum_{i=0}^{t-1} (\|\bomega_i\|^2 + \|K_i - K^\star\|^2)\rightarrow 0$ as $t\rightarrow \infty$ almost surely. Note that
\begin{align}\label{eq:K_t-K*_decomposition}
\hskip-0.6cm \|K_i - K^\star\| & \;\; \stackrel{\mathclap{\eqref{equ:Newton}, \eqref{equ:Omega}}}{=}\;\;
\left\|\begin{pmatrix}
B_i - \nabla_{\boldsymbol{x}}^2 \mathcal{L}^\star & G_i^{\top} - (G^\star)^{\top} \\
G_i - G^\star & 0 
\end{pmatrix}\right\| \nonumber\\
&\;\; \leq \;\; \left\|\begin{pmatrix}
\frac{1}{i} \sum_{j=0}^{i-1} \bar{\nabla}_{\boldsymbol{x}}^2 \mathcal{L}_j - \nabla_{\boldsymbol{x}}^2 \mathcal{L}^\star & G_i^{\top} - (G^\star)^{\top} \\
G_i - G^\star & 0 
\end{pmatrix}\right\| + \|\Delta_i\| \nonumber\\
&\;\; \leq \;\; \left\|\frac{1}{i} \sum_{j=0}^{i-1} \bar{\nabla}_{\boldsymbol{x}}^2 \mathcal{L}_j - \nabla_{\boldsymbol{x}}^2 \mathcal{L}^\star\right\| + \|G_i - G^\star\| + \|\Delta_i\| \nonumber\\
&\;\; \leq\;\; \left\|\frac{1}{i} \sum_{j=0}^{i-1} (\bar{H}_j - \nabla^2 f_j)\right\| + \frac{1}{i} \sum_{j=0}^{i-1}\|\nabla_{\boldsymbol{x}}^2 \mathcal{L}_j - \nabla_{\boldsymbol{x}}^2 \mathcal{L}^\star\| + \|G_i - G^\star\| + \|\Delta_i\| \nonumber\\
&\;\; \leq\;\; \left\|\frac{1}{i} \sum_{j=0}^{i-1} (\bar{H}_j - \nabla^2 f_j)\right\| + \frac{\Upsilon_L}{i}\sum_{j = 0}^{i-1} \|\bomega_j\| + \Upsilon_L \|\bomega_i\| + \|\Delta_i\| \quad (\text{Assumption \ref{ass:1}}).
\end{align}
By Lemma \ref{lem:K_tconvergence}(a), the regularization term  $\Delta_i = 0$ for all sufficiently large $i$ almost surely. Thus,~we~focus on the first two terms on the right hand side. Let us define
\begin{equation}\label{equ:2}
R_i \coloneqq \left\|\frac{1}{i} \sum_{j=0}^{i-1} (\bar{H}_j - \nabla^2 f_j)\right\| + \frac{1}{i}\sum_{j = 0}^{i-1} \|\bomega_j\|\quad \text{ for }\; i\geq 0,
\end{equation}
and for any fixed integer $t_0>0$ and a constant $\nu>0$, we define a stopping time
\begin{equation}\label{def:stopping_time}
\mu_{t_0, \nu} = \inf\{t\geq t_0: \|K_t - K^\star\|>\nu \;\text{ OR }\; \|\bomega_t\|>\nu\}.
\end{equation}
The next lemma provides the convergence rate of $\|\bomega_i\|^2$ and $\|R_i\|^2$ for $i$ large enough.

\begin{lemma}\label{lem:C1}
Under the conditions of Theorem \ref{thm:4} and supposing
\begin{equation}\label{eq:small_nu}
\nu \leq \frac{1 - \rho^{\tau}}{4\Upsilon_{K}(1+\Upsilon_{L})}
\end{equation}
with $\Upsilon_{K}$ from Lemma \ref{lem:K_tconvergence}(b) and $\Upsilon_L$ from Assumption \ref{ass:1}, there exists a constant $\Upsilon_{\mu}>0$ such~that
\begin{equation*}
\mE[\|\bomega_i\|^2 \mathbf{1}_{\{\mu_{t_0, \nu} > i\}}] \leq \Upsilon_{\mu} \beta_i\quad\quad \text{ and }\quad\quad \mathbb{E}[R_i^2\mathbf{1}_{\{\mu_{t_0, \nu} > i\}}] \leq \Upsilon_{\mu} \beta_i,\quad\quad \forall i\geq 0.
\end{equation*}
\end{lemma}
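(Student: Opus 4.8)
\textbf{Proof proposal for Lemma~\ref{lem:C1}.}
The plan is to prove the first inequality by a self-bounding recursion localized to the stopping time $\mu_{t_0,\nu}$ of \eqref{def:stopping_time}, and to deduce the second from it. Set $A_i \coloneqq \mE[\|\bomega_i\|^2\mathbf{1}_{\{\mu_{t_0,\nu}>i\}}]$. I will establish that, with $c_1 \coloneqq (1-\rho^\tau)/4>0$ and some constant $c_2>0$,
\[
A_{i+1} \leq (1-c_1\beta_i)A_i + c_2\beta_i^2 \qquad \text{for all sufficiently large } i,
\]
and then invoke Lemma~\ref{lem:phi_t_recursion_lemma} (noting $\beta_i=c_\beta/(i+1)^\beta\asymp i^{-\beta}$) to get $A_i\lesssim\beta_i$; for the finitely many small $i$ the bound is automatic after enlarging $\Upsilon_\mu$, since $\|\bomega_i\|$ is bounded on the compact set $\mX\times\Lambda$ while $\beta_i$ stays bounded below. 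The localization relies on measurability facts I record first: $K_t$, $C_t$ (see \eqref{def:C_t}) and $\bomega_t$ are $\mF_{t-1}$-measurable, so $\mu_{t_0,\nu}$ is a stopping time for $(\mF_{t-1})_t$, the event $\{\mu_{t_0,\nu}>i\}\in\mF_{i-1}$ is decreasing in $i$, and on this event $\|\bomega_j\|\le\nu$ and $\|K_j-K^\star\|\le\nu$ for all $t_0\le j\le i$.

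For the one-step bound I use the identity from the proof of Lemma~\ref{lem:4.1} (the line marked \eqref{rec:def:b}) together with $\nabla\mL_i = K^\star\bomega_i + \bpsi_i$ from \eqref{rec:def:psi}, writing
\[
\bomega_{i+1} = \bu_i + \beta_i\btheta_i + (\baralpha_i-\beta_i)\bz_{i,\tau}, \qquad \bu_i \coloneqq \rbr{I-\beta_i(I-C_i)K_i^{-1}K^\star}\bomega_i - \beta_i(I-C_i)K_i^{-1}\bpsi_i ,
\]
where $\bu_i$ is $\mF_{i-1}$-measurable. The crucial deterministic step is: on $\{\mu_{t_0,\nu}>i\}$, for $\beta_i$ below an absolute threshold, $\|\bu_i\|\le(1-c_1\beta_i)\|\bomega_i\|$. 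This combines (i) $C_t$ is symmetric positive semidefinite with $\|C_t\|\le\rho^\tau$ (Lemma~\ref{lem:1}(c)), hence $I-C_i\succeq(1-\rho^\tau)I$ and $\|I-\beta_i(I-C_i)\|\le 1-\beta_i(1-\rho^\tau)$; (ii) $K_i^{-1}K^\star=I+K_i^{-1}(K^\star-K_i)$ with $\|K_i^{-1}(K^\star-K_i)\|\le\Upsilon_K\nu$ by Lemma~\ref{lem:K_tconvergence}(b), costing an extra $2\Upsilon_K\nu\beta_i$; (iii) the Taylor estimate $\|\bpsi_i\|\le\frac{\Upsilon_L}{2}\|\bomega_i\|^2\le\frac{\Upsilon_L\nu}{2}\|\bomega_i\|$ from Assumption~\ref{ass:1}, costing $\Upsilon_K\Upsilon_L\nu\beta_i$; and the hypothesis $\nu\le(1-\rho^\tau)/(4\Upsilon_K(1+\Upsilon_L))$ makes each of the last two terms at most $(1-\rho^\tau)\beta_i/4$, so that the total contraction factor is at most $1-c_1\beta_i$.

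Squaring and conditioning on $\mF_{i-1}$, the martingale cross term vanishes since $\mE[\btheta_i\mid\mF_{i-1}]=\0$ (Lemma~\ref{lem:4.1}) and $\bu_i,\mathbf{1}_{\{\mu_{t_0,\nu}>i\}}$ are $\mF_{i-1}$-measurable; using $\mE[\|\btheta_i\|^2\mid\mF_{i-1}]\le\Upsilon_\theta$ (Jensen on the $(2+\delta)$-moment bound \eqref{eq: bounded_p_moment}), $0\le\baralpha_i-\beta_i\le\chi_i$ (from \eqref{equ:sandwich}), and $\mE[\|\bz_{i,\tau}\|^2\mid\mF_{i-1}]\le\Upsilon_z$ (Lemma~\ref{lem:1}(b)), one obtains
\[
\mE[\|\bomega_{i+1}\|^2\mathbf{1}_{\{\mu_{t_0,\nu}>i\}}\mid\mF_{i-1}] \le \mathbf{1}_{\{\mu_{t_0,\nu}>i\}}\rbr{\|\bu_i\|^2 + 2\sqrt{\Upsilon_z}\,\chi_i\|\bu_i\| + 2\Upsilon_\theta\beta_i^2 + 2\Upsilon_z\chi_i^2}.
\]
Absorbing the cross term by Young's inequality, $2\sqrt{\Upsilon_z}\chi_i\|\bu_i\|\le\frac{c_1\beta_i}{2}\|\bu_i\|^2+\frac{2\Upsilon_z}{c_1}\frac{\chi_i^2}{\beta_i}$, and using $(1-c_1\beta_i)^2(1+\frac{c_1\beta_i}{2})\le 1-c_1\beta_i$ for small $\beta_i$, then taking full expectation and using $\mathbf{1}_{\{\mu_{t_0,\nu}>i+1\}}\le\mathbf{1}_{\{\mu_{t_0,\nu}>i\}}$, yields $A_{i+1}\le(1-c_1\beta_i)A_i+c_2\beta_i^2$, where the residual collects $\Upsilon_\theta\beta_i^2$, $\Upsilon_z\chi_i^2$, and a multiple of $\chi_i^2/\beta_i$; each is $O(\beta_i^2)$ because $\chi_i^2\le\beta_i^2$ (as $\chi>\beta$) and $\chi_i^2/\beta_i\asymp(i+1)^{\beta-2\chi}=o((i+1)^{-2\beta})$ since $\chi>\beta+0.5>1.5\beta$ (using $\beta<1$). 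This gives the first claim. For the second, $(a+b)^2\le 2a^2+2b^2$, Jensen, and $\mathbf{1}_{\{\mu_{t_0,\nu}>i\}}\le\mathbf{1}_{\{\mu_{t_0,\nu}>j\}}$ for $j\le i$ give, from \eqref{equ:2},
\[
\mE[R_i^2\mathbf{1}_{\{\mu_{t_0,\nu}>i\}}] \le 2\,\mE\left\|\tfrac1i\sum_{j=0}^{i-1}(\bar{H}_j-\nabla^2 f_j)\right\|^2 + \tfrac{2}{i}\sum_{j=0}^{i-1}\mE[\|\bomega_j\|^2\mathbf{1}_{\{\mu_{t_0,\nu}>j\}}],
\]
and the first term is $O(1/i)=o(\beta_i)$ because $\bar{H}_j-\nabla^2 f_j$ is a matrix martingale difference with $\mE[\|\bar{H}_j-\nabla^2 f_j\|_F^2\mid\mF_{j-1}]\lesssim\Upsilon_m$ (Assumption~\ref{ass:2}), while the second is $\lesssim\frac1i\sum_{j=0}^{i-1}\beta_j\asymp\beta_i$ by the first claim and $\beta\in(0.5,1)$. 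Enlarging $\Upsilon_\mu$ completes the proof.

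The main obstacle is the deterministic contraction step: one must track precisely how the three perturbations — the Hessian error $\|K_i-K^\star\|\le\nu$ entering through $K_i^{-1}K^\star$, the linearization error $\bpsi_i$, and the sketch projection $C_i$ — combine so that the exact threshold $\nu\le(1-\rho^\tau)/(4\Upsilon_K(1+\Upsilon_L))$ produces a genuine $1-c_1\beta_i$ factor, while simultaneously confirming that the stochastic residuals (from $\btheta_i$, $\bz_{i,\tau}$, and the adaptive gap $\chi_i$) sit at the $O(\beta_i^2)$ scale required by Lemma~\ref{lem:phi_t_recursion_lemma}; the one nontrivial point there is that $\chi_i^2/\beta_i=O(\beta_i^2)$ reduces to $\chi>1.5\beta$, which is exactly what $\chi>\beta+0.5$ together with $\beta<1$ delivers. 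The bookkeeping with the stopping-time indicator — its $\mF_{i-1}$-measurability and the nesting $\{\mu_{t_0,\nu}>i+1\}\subseteq\{\mu_{t_0,\nu}>i\}$ — must be handled before any conditioning argument.
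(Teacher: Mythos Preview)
Your proposal is correct and follows essentially the same route as the paper: both establish the contraction $A_{i+1}\le(1-c_1\beta_i)A_i+c_2\beta_i^2$ on the stopping-time event from the decomposition in Lemma~\ref{lem:4.1} (you via bounding $\|\bu_i\|$ first and then squaring, the paper via expanding the square and bounding the inner product $\langle(I-C_t)K_t^{-1}\nabla\mL_t,\bomega_t\rangle$ directly), invoke Lemma~\ref{lem:phi_t_recursion_lemma}, and then handle $R_i^2$ by the martingale-average plus Ces\`aro-mean split. One small slip: your claim that \emph{each} of the two perturbation terms is at most $(1-\rho^\tau)\beta_i/4$ needs $\Upsilon_L\ge 1$, but their \emph{sum} is always at most $(1-\rho^\tau)\beta_i/2$ under \eqref{eq:small_nu}, so the contraction with your stated $c_1=(1-\rho^\tau)/4$ still goes through.
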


By Lemma \ref{lem:C1}, we let $\nu$ satisfy \eqref{eq:small_nu} and then obtain
\begin{equation*}
\mE \left[\sum_{i = 0}^{\infty} \frac{1}{\sqrt{i+1}} \|\bomega_i\|^2 \mathbf{1}_{\{\mu_{t_0, \nu} > i\}}\right] = \sum_{i = 0}^{\infty} \frac{1}{\sqrt{i+1}} \mE[\|\bomega_i\|^2 \mathbf{1}_{\{\mu_{t_0, \nu} > i\}}] \leq \sum_{i = 0}^{\infty} \frac{\Upsilon_{\mu}\beta_i}{\sqrt{i+1}} <\infty,
\end{equation*}
where the last inequality is due to $\beta_i = c_\beta/(i+1)^\beta$ with $\beta\in(0.5,1)$. This suggests that
\begin{equation*}
\sum_{i = 0}^{\infty} \frac{1}{\sqrt{i+1}} \|\bomega_i\|^2 \mathbf{1}_{\{\mu_{t_0, \nu} > i\}} <\infty \quad \quad \text{almost surely}.
\end{equation*} 
Since $\bomega_i\rightarrow\0$ and $K_i \rightarrow K^\star$ as $i\rightarrow\infty$ almost surely, for any run and for any $\nu>0$, there exists $t_0$~such that $\mu_{t_0, \nu} = \infty$, which implies that $\sum_{i = 0}^{\infty} \|\bomega_i\|^2/\sqrt{i+1}<\infty$ almost surely. By~Lemma~\ref{lem:2},~we~have $\sum_{i=0}^{t-1}\|\bomega_i\|^2/\sqrt{t}\rightarrow 0$ as $t\rightarrow\infty$ almost surely. Following the same reasoning from Lemma \ref{lem:C1},~we also obtain $\sum_{i=0}^{t-1}\|R_i\|^2/\sqrt{t}\rightarrow 0$, implying that $\sum_{i=0}^{t-1}\|K_i - K^\star\|^2/\sqrt{t}\rightarrow 0$ as $t\rightarrow\infty$ due to \eqref{eq:K_t-K*_decomposition}~and \eqref{equ:2}. This completes the proof.

\subsection{Proof of Lemma \ref{lem:B4}}

By Proposition \ref{prop:2},  $\bar{\Xi}^{\star} \succeq \Xi^{\star}$. Thus, it suffices to prove $\boldsymbol{w}^{\top} \Xi^{\star} \boldsymbol{w} > 0$. We recall the~definition of $\Xi^{\star}$ in \eqref{equ:Omega} and define a subspace of $\mR^{d+m}$ as $U \coloneqq \{ (\boldsymbol{0}, \boldsymbol{u})\in \mR^{d+m} \;|\; \boldsymbol{u} \in \mR^{m}\}$. Since $\cov(\nabla F(\tx; \xi))\succ 0$, we have $\boldsymbol{z} \in U  \Longleftrightarrow \boldsymbol{z}^{\top} \text{cov}(\nabla\mL(\tx,\tlambda;\xi)) \boldsymbol{z} = 0$. Therefore, we define two more subspaces:
\begin{align*}
U_1 & \coloneqq K^{\star} U = \left\{ \begin{pmatrix}
\nabla_{\bx}^2\mL^\star & (G^\star)^{\top}\\
G^\star & \0
\end{pmatrix} \begin{pmatrix}
\0 \\ \boldsymbol{u}
\end{pmatrix} \ | \ \boldsymbol{u} \in \mR^{m} \right\} = \left\{ \begin{pmatrix}
(G^{\star})^{\top} \boldsymbol{u} \\ \boldsymbol{0}
\end{pmatrix} \ | \ \boldsymbol{u} \in \mR^{m} \right\},\\
W_1 & \coloneqq \left\{  \begin{pmatrix}
\boldsymbol{u} \\ \boldsymbol{v}
\end{pmatrix} | \ \boldsymbol{u} \in \text{Kernel}(G^{\star}), \boldsymbol{v} \in \mR^{m}  \right\}.
\end{align*}
We have the direct sum decomposition $\mR^{d+m} = U_1 \oplus W_1$. Furthermore, for any $\boldsymbol{z} \in U_1$, there~exists~a vector $\boldsymbol{u} \in \mR^m$ such that
\begin{equation*}
\boldsymbol{z}^{\top} \Xi^{\star} \boldsymbol{z} = \begin{pmatrix}
\0 & \bu
\end{pmatrix} \  \text{cov}(\nabla\mL(\tx,\tlambda;\xi)) \begin{pmatrix}
\0 \\ \boldsymbol{u}
\end{pmatrix} = 0.
\end{equation*}
Conversely, if $\boldsymbol{z}^{\top} \Xi^{\star} \boldsymbol{z} = 0$ for some vector $\boldsymbol{z} \in \mR^{d+m}$, since $K^{\star}$ is invertible, there must exist a vector $\boldsymbol{v} \in \mR^{d+m}$ such that $\boldsymbol{z} = K^{\star} \boldsymbol{v}$. Thus, $\boldsymbol{z}^{\top} \Xi^{\star} \boldsymbol{z} = \boldsymbol{v}^{\top} \text{cov}(\nabla\mL(\tx,\tlambda;\xi)) \boldsymbol{v} = 0$. This~indicates~that~$\boldsymbol{v} \in U$ and $\bz\in U_1$. Thus, we obtain that $\boldsymbol{z}^{\top} \Xi^{\star} \boldsymbol{z} = 0 \Longleftrightarrow \boldsymbol{z} \in U_1$. We complete the proof.

\subsection{Proof of Lemma \ref{lem:C1}}

We first establish the convergence rate of $\bomega_t$. From \eqref{nequ:3} in the proof of Lemma \ref{lem:4.1}, we know
\begin{equation}\label{equ:3}
\bomega_{t+1} = \bomega_t - \beta_t(I - C_t)K_t^{-1}\nabla\mL_t  + \beta_t\btheta_t+ \rbr{\baralpha_t - \beta_t} \bz_{t,\tau}.
\end{equation}
Given $t_0, \nu>0$, we denote for notational brevity $\mu_t$ as the $\mF_{t-1}$-measurable event $\{\mu_{t_0, \nu}>t\}$. Then, 
\begin{align}\label{nequ:5}
& \mathbb{E}[\|\bomega_{t+1}\|^2 \mathbf{1}_{\mu_{t+1}} | \mathcal{F}_{t-1}] \leq \mE[\|\bomega_{t+1}\|^2 \mathbf{1}_{\mu_t} | \mathcal{F}_{t-1}] \nonumber\\
& \stackrel{\mathclap{\eqref{equ:3}}}{\leq}\;  \|\bomega_t\|^2\b1_{\mu_t} + \beta_t^2\|I - C_t\|^2\|K_t^{-1}\|^2\|\nabla\mL_t\|^2 + \beta_t^2\mE[\|\btheta_t\|^2\mid \mF_{t-1}] + \chi_t^2\mathbb{E}[\|\boldsymbol{z}_{t, \tau}\|^2 | \mathcal{F}_{t-1}] \nonumber\\
&\; \quad - 2 \beta_t \langle (I - C_t)K_t^{-1}\nabla \mathcal{L}_t,  \bomega_t \mathbf{1}_{\mu_t} \rangle  + 2\chi_t \|\bomega_t\|\mathbf{1}_{\mu_t}     \cdot \mathbb{E}[\|\boldsymbol{z}_{t, \tau}\| | \mathcal{F}_{t-1}] \nonumber\\
&\; \quad + 2\beta_t\chi_t\|I-C_t\|\cdot\|K_t^{-1}\|\cdot\|\nabla\mL_t\|\cdot \mathbb{E}[\|\boldsymbol{z}_{t, \tau}\| | \mathcal{F}_{t-1}] + 2\beta_t\chi_t \mathbb{E}[\|\boldsymbol{\theta}_t\| \cdot \|\boldsymbol{z}_{t, \tau}\| | \mathcal{F}_{t-1}] \nonumber\\
& \stackrel{\mathclap{\eqref{equ:1}}}{\leq}\;  \|\bomega_t\|^2\b1_{\mu_t} + 4\Upsilon_{K}\Upsilon_u\beta_t^2 + C_{2+\delta}^{2/(2+\delta)}\beta_t^2 + \Upsilon_z\chi_t^2 - 2 \beta_t \langle (I - C_t)K_t^{-1}\nabla \mathcal{L}_t,  \bomega_t \mathbf{1}_{\mu_t} \rangle + \frac{(1-\rho^\tau)}{2}\beta_t\|\bomega_t\|^2\b1_{\mu_t} \nonumber\\
& \;\quad  + \frac{2\Upsilon_z\chi_t^2}{(1-\rho^\tau)\beta_t} + 4\Upsilon_K\Upsilon_u\sqrt{\Upsilon_z}\beta_t\chi_t + C_{2+\delta}^{2/(2+\delta)}\beta_t\chi_t + \Upsilon_z\beta_t\chi_t \nonumber\\
& = \; \rbr{1+0.5(1-\rho^\tau)\beta_t}\|\bomega_t\|^2\b1_{\mu_t} - 2 \beta_t \langle (I - C_t)K_t^{-1}\nabla \mathcal{L}_t,  \bomega_t \mathbf{1}_{\mu_t} \rangle + (4\Upsilon_{K}\Upsilon_u + C_{2+\delta}^{2/(2+\delta)})\beta_t^2 + \Upsilon_z\chi_t^2 \nonumber\\
& \quad\; + \frac{2\Upsilon_z\chi_t^2}{(1-\rho^\tau)\beta_t}  + \rbr{4\Upsilon_K\Upsilon_u\sqrt{\Upsilon_z}+C_{2+\delta}^{2/(2+\delta)} + \Upsilon_z }\beta_t\chi_t,
\end{align}
where the second inequality is also due to \eqref{equ:sandwich} and the fact that $\btheta_t$ is the martingale difference; and~the third inequality is also due to Lemma \ref{lem:1}(b), Lemma \ref{lem:K_tconvergence}(b), \eqref{eq: bounded_p_moment}, and the fact~\mbox{$\|C_t\|\leq 1$}.~Since $\chi_t = c_\chi/(t+1)^\chi$, $\beta_t = c_\beta/(t+1)^\beta$ with $\chi>\beta+0.5$ and $\beta\in(0.5, 1)$, we know 
\begin{equation*}
\chi_t^2 = o(\beta_t^2),\quad\quad \frac{\chi_t^2}{\beta_t} = o(\beta_t^2),\quad\quad \beta_t\chi_t = o(\beta_t^2).
\end{equation*}
For the second term on the right hand side of \eqref{nequ:5}, we have for all $t_0\leq t < \mu_{t_0, \nu}$,
\begin{align*}
-\left\langle (I - C_t)K_t^{-1}\nabla \mathcal{L}_t, \bomega_t \right\rangle & = -\langle (I - C_t)\bomega_t, \bomega_t\rangle - \langle (I-C_t)(K_t^{-1}\nabla\mL_t - \bomega_t), \bomega_t\rangle\\
& \leq -(1-\rho^\tau)\|\bomega_t\|^2 + 2\|K_t^{-1}\nabla\mL_t-\bomega_t\|\|\bomega_t\| \quad (\text{Lemma \ref{lem:1}(c)})\\
& \leq -(1-\rho^\tau)\|\bomega_t\|^2 + 2\Upsilon_{K}\|\nabla\mL_t - K_t\bomega_t\|\|\bomega_t\| \quad (\text{Lemma \ref{lem:K_tconvergence}(b)})\\
& \leq -(1-\rho^\tau)\|\bomega_t\|^2 + 2\Upsilon_{K}\nu\|\bomega_t\|^2 + 2\Upsilon_{K}\|\nabla\mL_t-K^\star\bomega_t\|\|\bomega_t\| \quad (\text{by \eqref{def:stopping_time}})\\
& \leq -(1-\rho^\tau)\|\bomega_t\|^2 + 2\Upsilon_{K}\nu\|\bomega_t\|^2 + 2\Upsilon_{K}\Upsilon_{L}\|\bomega_t\|^3 \quad (\text{by Assumption \ref{ass:1}})\\
& \leq  -(1-\rho^\tau)\|\bomega_t\|^2 + 2\Upsilon_{K}(1+\Upsilon_{L})\nu\|\bomega_t\|^2 \quad (\text{by \eqref{def:stopping_time}})\\
& \leq -0.5(1-\rho^\tau)\|\bomega_t\|^2 \quad (\text{by \eqref{eq:small_nu}}).
\end{align*}
Combining the above two displays with \eqref{nequ:5}, and applying Lemma \ref{lem:phi_t_recursion_lemma}, we know there exists~$\Upsilon_{\mu}^{(1)}>0$ such that 
\begin{equation}\label{nequ:6}
\mE[\|\bomega_t\|^2 \mathbf{1}_{\{\mu_{t_0, \nu} > t\}}] \leq \Upsilon_{\mu}^{(1)} \beta_t,\quad\quad \forall t\geq 0.
\end{equation}
Now, we establish the convergence rate of $R_t$. By the definition of $R_t$ in \eqref{equ:2}, we have
\begin{align}\label{eq:R_t2_rate}
\mathbb{E}[R_t^2\mathbf{1}_{\{\mu_{t_0, \nu} > t\}}] & \leq 2\mE\left\|\frac{1}{t}\sum_{i = 0}^{t-1} (\bar{H}_i - \nabla^2 f_i) \right\|^2 + \frac{2}{t^2}\mathbb{E}\left[\left( \sum_{i = 0}^{t-1} \|\bomega_i\| \right)^2\mathbf{1}_{\{\mu_{t_0, \nu} > t\}}\right] \nonumber\\
& \leq 2\mE\left\|\frac{1}{t}\sum_{i = 0}^{t-1} (\bar{H}_i - \nabla^2 f_i) \right\|^2 + \frac{2}{t^2}\mathbb{E}\left[\left( \sum_{i = 0}^{t-1} \|\bomega_i\|\b1_{\{\mu_{t_0, \nu} > i\}}  \right)^2\right] \nonumber\\
& \leq 2\mE\left\|\frac{1}{t}\sum_{i = 0}^{t-1} (\bar{H}_i - \nabla^2 f_i) \right\|^2 + \frac{2}{t^2}\left( \sum_{i = 0}^{t-1} \mE[\|\bomega_i\|^2\b1_{\{\mu_{t_0, \nu} > i\}}]^{1/2}  \right)^2,
\end{align}
where the last inequality is from the Cauchy-Schwarz inequality. For the first term on the~right~hand~side,
\begin{align*}
&\mathbb{E}\left\|\frac{1}{t}\sum_{i = 0}^{t-1} (\bar{H}_i - \nabla^2 f_i) \right\|^2 \leq \mathbb{E}\left\|\frac{1}{t}\sum_{i = 0}^{t-1} (\bar{H}_i - \nabla^2 f_i) \right\|^2_{F} = \frac{1}{t^2} \text{Tr}\left( \mathbb{E}\left( \sum_{i = 0}^{t-1}(\bar{H}_i - \nabla^2 f_i) \right)^2 \right)\\
& = \frac{1}{t^2} \text{Tr} \left( \sum_{i = 0}^{t-1} \mathbb{E}\left[(\bar{H}_i - \nabla^2 f_i)^2\right] \right) = \frac{1}{t^2}\sum_{i = 0}^{t-1} \text{Tr}\rbr{\mE\sbr{\mE[\barH_i^2\mid \mF_{i-1}] - (\nabla^2 f_i)^2}} \leq \frac{1}{t^2}\sum_{i = 0}^{t-1}\mE[\|\barH_i\|^2_F] \leq \frac{d\Upsilon_m}{t},
\end{align*}
where the third equality is because $\bar{H}_i - \nabla^2 f_i$ forms a martingale difference sequence with~$\mE \left[\bar{H}_i - \nabla^2 f_i | \mF_{i-1} \right] = 0$, and the last inequality is due to Assumption \ref{ass:2}. For the second term on the right hand side of \eqref{eq:R_t2_rate}, we apply \eqref{nequ:6} and have
\begin{multline*}
\frac{1}{t^2}\left(\sum_{i = 0}^{t-1} \mathbb{E}[\|\bomega_i\|^2\mathbf{1}_{\{\mu_{t_0, \nu} > i\}}]^{1/2}\right)^2 \leq \frac{\Upsilon_{\mu}^{(1)}}{t^2}\rbr{\sum_{i=0}^{t-1}\sqrt{\beta_i}}^2 = \frac{\Upsilon_{\mu}^{(1)}c_\beta}{t^2}\rbr{\sum_{i=0}^{t-1}\frac{1}{(i+1)^{0.5\beta}}}^2\\
 \leq \frac{\Upsilon_{\mu}^{(1)}c_\beta}{t^2}\rbr{\int_{0}^{t}\frac{1}{(i+1)^{0.5\beta}}\;di }^2 \leq \frac{\Upsilon_{\mu}^{(1)}c_\beta}{(t+1)^\beta} = \Upsilon_{\mu}^{(1)}\beta_t.
\end{multline*}
Combining the above two displays and plugging into \eqref{eq:R_t2_rate}, we complete the proof.

\section{Additional Experimental Results}\label{appen:D}

In this section, we present the complete experimental results in Tables \ref{tab:3}-\ref{tab:8} for constrained~\mbox{linear}~and~logistic regression problems.~We vary the dimension $d \in \{5, 20, 40\}$ and study three different design covariances $\Sigma_a$: Identity, Toeplitz, and Equi-correlation. Following prior work in \cite{Chen2020Statistical, Na2025Statistical}, we vary the parameter $r \in \{0.4, 0.5, 0.6\}$ for Toeplitz $\Sigma_a$ and $r \in \{0.1, 0.2, 0.3\}$ for Equi-correlation $\Sigma_a$.
We also vary the sketching steps of AI-SSQP $\tau \in \{20, 40, \infty\}$, where $\tau = \infty$ corresponds to the exact SSQP scheme. For each combination of $d$, $\Sigma_a$, and $\tau$, we compare five~different online inference procedures across four evaluation metrics: mean absolute error,~averaged~coverage rate, averaged confidence interval length, and flops per iteration.

We observe from Tables \ref{tab:3}--\ref{tab:8} that for both linear and logistic regression, the exact SSQP method $(\tau = \infty)$ achieves a smaller mean absolute error than the inexact methods with $\tau = 20$ or $40$, between which $\tau = 40$ yields a smaller error than $\tau = 20$. In other words, the more accurate the Newton direction employed, the closer the method converges to the optimal solution.~For~the~same~setup~of~the method, the inference procedures based on the averaged iterate (\texttt{AvePlugIn}, \texttt{AveBM}, \texttt{AveRS}) consistently yield the same mean absolute error, which is generally~smaller, by half to one order~of~magnitude, than those based on the last iterate (\texttt{LastPlugIn}, \texttt{LastBF}). 
We notice that in some linear~regression scenarios, the inexact methods result in abnormally high errors (e.g., $d=5$, $\tau \in \{20, 40\}$, and $\Sigma_a$ is identity or Equi-correlation with $r=0.1$). This phenomenon is attributed to excessive~randomness in approximating the Newton direction. A closer inspection reveals that in approximately~5 out of 200 runs, the inexact methods fail to converge; for the runs that do converge, the errors~remain within a reasonable range.
Moreover, when $d=40$ and $\Sigma_a$ is Equi-correlation with $r=0.3$,~the~error for $\tau = 20$ is significantly higher than that for $\tau = 40$, likely due to insufficient accuracy in~the~Newton direction approximation. Overall, as suggested by the theory (cf. Remark \ref{rem:1}), setting~$\tau \gtrsim d$~appears to be a reasonable strategy for inexact methods to balance the trade-off between randomness and precision in the Newton direction approximation.

For the averaged coverage rate, we observe from Tables \ref{tab:3}--\ref{tab:8} that for a fixed dimension $d$,~\mbox{design}~covariance type $\Sigma_a$, and sketching step $\tau$, all five inference methods exhibit certain robustness to~variations in the parameter $r$. Among them, \texttt{AveRS} consistently achieves the nominal 95\% coverage~rate in most scenarios. \texttt{LastBF} performs comparably in terms of coverage but suffers from larger mean absolute errors and wider confidence intervals.
The inference methods based on the plug-in covariance estimator, \texttt{LastPlugIn} and \texttt{AvePlugIn}, perform reasonably well when $\tau = \infty$ across different scenarios, with \texttt{AvePlugIn} yielding shorter confidence intervals. However, both methods tend to fail for maintaining the nominal 95\% coverage rate when $\tau = 20, 40$ for both linear and logistic regression. There are exceptions: for instance, in linear regression with $d = 5$, \texttt{AvePlugIn} achieves coverage close to 95\% across all design covariances, while \texttt{LastPlugIn} exhibits extreme overcoverage, approaching 100\%. Conversely, in logistic regression with $d = 40$, \texttt{LastPlugIn} achieves near-nominal~coverage 
across all designs, while \texttt{AvePlugIn} exhibits extreme~undercoverage,~falling~below 80\%. Overall,~these failure modes stem from the inconsistency of the plug-in covariance estimator,~rendering~these~methods \textit{asymptotically invalid}.
Lastly, we note that \texttt{AveBM} performs the worst among the five methods, with coverage rates far below acceptable levels. The batch-means estimator was originally designed for first-order methods with strongly convex objectives and/or sub-Gaussian gradient noise \citep{Zhu2021Online, Jiang2025Online}, and its theoretical guarantees do not extend to our setting.

In terms of the length of the confidence interval, we note that for each setup and \mbox{inference}~method, the interval for logistic regression is consistently shorter than that for linear~\mbox{regression}.~Furthermore, \texttt{AveBM} and \texttt{AvePlugIn} yield the shortest intervals, reflecting the optimal efficiency of the averaged iterate due to its asymptotic normality guarantee, with \texttt{AveRS} performing nearly as well. In contrast, \texttt{LastBF} and \texttt{LastPlugIn} yield significantly wider intervals -- roughly an order of magnitude larger -- because the relative efficiency between the last and averaged iterates diverges under the stepsize schedule $\beta_t = 1/t^{0.501}$. This phenomenon demonstrates that our random scaling inference method retains much of the efficiency exhibited by covariance-estimation-based methods.

Finally, in terms of flops per iteration, it is clear that the inexact methods are matrix-free~and~incur lower computational costs compared to exact methods. The computational savings from the sketching solver become increasingly significant in higher dimensions. For example, in both linear and logistic regression with $d = 40$, \texttt{AveRS} with $\tau = 20$ achieves a 95\% coverage rate across different design covariances with fewer than $1700$ flops per iteration, while the exact method requires over $10^5$ flops per iteration to achieve a similar coverage rate. This online, matrix-free fashion makes our inference method especially attractive for large-scale constrained problems.

\begin{table}[thbp!] 
\centering
\setlength{\tabcolsep}{6pt}         % default is ~6pt
\renewcommand{\arraystretch}{0.98}   % default is 1.0
\resizebox{1.0\linewidth}{0.5\textheight}{% [inline block 0: 6 envs, 65638 chars -> data_tex | \begin{tabular}{|c|cc|cc|ccc|ccc|c|} \hline...]
}
\vspace{-0.2cm}
\caption{\textit{(Cont') Comparison results for different inference methods on constrained linear and logistic regression problems under $d = 40$.}}\label{tab:8}
\vspace{-0.2cm}
\end{table}

\end{document}